\DeclareMathOperator*{\argmin}{arg\,min}
\newtheorem{lemma}{Lemma}[section]
\newtheorem{theorem}{Theorem}[section]
\newtheorem{proposition}{Proposition}[section]
\newtheorem{definition}{Definition}[section]
\newtheorem{remark}{Remark}[section]
\title{Approximation theory for 1-Lipschitz ResNets}
\author{
Davide Murari \\ 
Department of Applied Mathematics and Theoretical Physics \\ 
University of Cambridge \\
\texttt{dm2011@cam.ac.uk} \\
\And
Takashi Furuya \\
Faculty of Life and Medical Sciences, Department of Biomedical Engineering \\
Doshisha University \\
RIKEN AIP \\
\texttt{takashi.furuya0101@gmail.com} \\
\And 
Carola-Bibiane Sch\"onlieb \\
Department of Applied Mathematics and Theoretical Physics \\ 
University of Cambridge \\
\texttt{cbs31@cam.ac.uk}
}
\begin{document}

\maketitle

\begin{abstract}
$1$-Lipschitz neural networks are fundamental for generative modelling, inverse problems, and robust classifiers. In this paper, we focus on $1$-Lipschitz residual networks (ResNets) based on explicit Euler steps of negative gradient flows and study their approximation capabilities. Leveraging the Restricted Stone–Weierstrass Theorem, we first show that these $1$-Lipschitz ResNets are dense in the set of scalar $1$-Lipschitz functions on any compact domain when width and depth are allowed to grow. We also show that these networks can exactly represent scalar piecewise affine $1$-Lipschitz functions. We then prove a stronger statement: by inserting norm-constrained linear maps between the residual blocks, the same density holds when the hidden width is fixed. Because every layer obeys simple norm constraints, the resulting models can be trained with off-the-shelf optimisers. This paper provides the first universal approximation guarantees for $1$-Lipschitz ResNets, laying a rigorous foundation for their practical use. 
\end{abstract}

\section{Introduction}\label{sec:Introduction}

The flexibility of neural network parameterisations allows them to approximate any regular enough target function arbitrarily well \cite{yarotsky2018optimal,yarotsky2017error,pinkus1999approximation,leshno1993multilayer,cybenko1989approximation,hornik1989multilayer}. Despite this desirable aspect, there are several reasons why one would not want a completely unconstrained parameterisation. For instance, unconstrained networks tend to be overly sensitive to input adversarial perturbations because their local Lipschitz constants can be large, making them unreliable classifiers \cite{szegedy2013intriguing}. There are also situations where one is purely interested in modelling specific sets of functions, for example, to turn a constrained optimisation problem into an unconstrained one \cite[e.g]{leake2020deep,richter2022neural}. A prominent example is the critic in Wasserstein GANs \cite{arjovsky2017wasserstein}, which must be 1-Lipschitz to yield a valid estimate of the 1-Wasserstein distance via the Kantorovich-Rubinstein duality \cite{villani2008optimal}. In this paper, we focus on scalar-valued neural networks that are constrained to be $1$-Lipschitz in the Euclidean $\ell^2$ norm. These networks have found extensive applications in inverse problems \cite{sherry2024designing,ryu2019plug,hertrich2021convolutional}, generative modelling \cite{gulrajani2017improved,miyato2018spectral,lunz2018adversarial}, and as a means to improve network resilience to adversarial attacks \cite{tsuzuku2018lipschitz,sherry2024designing,trockman2021orthogonalizing,prach20241,meunier2022dynamical}.

The approximation properties of constrained networks are poorly understood, and different enforcement strategies can yield markedly different expressiveness. The commonly adopted strategies to constrain a network's Lipschitz constant tend to reduce its expressiveness, leading to noticeable performance drops. This paper studies the approximation properties of Lipschitz-constrained residual networks. We propose new constrained networks that are theoretically able to approximate any scalar $1$-Lipschitz function arbitrarily well. Although our analysis is theoretical, the architectures we consider are readily implementable and can be trained like existing Lipschitz-constrained layers.

We next review related work (Section \ref{sec:Related-Works}), summarise our contributions (Section \ref{sec:Contributions}), and present an outline of the paper (Section \ref{sec:outline}).

\subsection{Related work}\label{sec:Related-Works}

\paragraph{$1$-Lipschitz neural networks.}We can identify three principal approaches to enforce or promote Lipschitz constraints: weight normalisation strategies, changes to the network layers, and regularisation strategies. Spectral normalisation or orthogonal weight matrices provide the most well-established constraining procedure for feed-forward neural networks \cite{miyato2018spectral,trockman2021orthogonalizing}. When working with Residual Neural Networks (ResNets), more changes are necessary due to the skip connection. Results in dynamical systems, numerical analysis, and convex analysis lead to $1$-Lipschitz ResNets based on negative gradient flows \cite{sherry2024designing,meunier2022dynamical}. These constrained ResNets are the backbone of most of the results in this paper, and we will describe them later. They have also proven to be more efficient than other constraining strategies in terms of performance and computational resource consumption \cite{prach20241}. Because of the empirical decrease in expressiveness with constrained networks, an alternative to hard constraining the Lipschitz constant is regularising the optimisation problem by penalising too large Lipschitz constants \cite{lunz2018adversarial,gulrajani2017improved,wang2020orthogonal,yoshida2017spectral}. Lipschitz constraints are also considered in more modern architectures, such as Transformers \cite{kim2021lipschitz,dasoulas2021lipschitz}. 
\paragraph{Approximation theory for Lipschitz-constrained networks.}Constraining the Lipschitz constant of a network naturally leads to restricted expressiveness, in the sense that a $1$-Lipschitz neural network can not approximate $f(x)=x^2$ arbitrarily well, for example. However, a more relevant question is if all the $1$-Lipschitz functions can be accurately approximated by a given family of $1$-Lipschitz networks. In \cite{anil2019sorting}, the authors present the Restricted Stone-Weierstrass Theorem and prove that feed-forward networks based on the $\mathrm{GroupSort}$ activation function and norm-constrained weights are dense in the set of scalar $1$-Lipschitz functions. This theorem is fundamental for our derivations as well, and we state it in Section \ref{sec:preliminaries}. In \cite{neumayer2023approximation}, the authors study feed-forward networks with splines as activation functions and show that they have the same expressiveness as the networks studied in \cite{anil2019sorting}. To the best of our knowledge, no such results are available for $1$-Lipschitz ResNets. Closely related to ResNets, we mention \cite{de2025approximation} where the authors show that Neural ODEs with a constraint on the Lipschitz constant of the flow map are still universal approximators of continuous functions if the linear lifting and projection layers are left unconstrained.

For further motivation of why we study $1$-Lipschitz ResNets and why the $1$-Lipschitz constraint is relevant, see Appendix~\ref{app:motivation}.

\subsection{Main contributions}\label{sec:Contributions}
This paper studies the approximation capabilities of $1$-Lipschitz Residual Neural Networks (ResNets). We focus on $\mathrm{ReLU}(x)=\max\{x,0\}$ as activation function. 

Relying on the Restricted Stone-Weierstrass Theorem, in Theorem~\ref{thm:gradFreeInner} we show that a class of ResNets with residual layers based on explicit Euler steps of negative gradient flows is dense in the set of scalar $1$-Lipschitz functions. To achieve this approximation result, we allow for networks of arbitrary width and depth. We also provide an alternative proof by showing, in Theorem~\ref{thm:pwl1Lipschitz}, that this set of networks contains all the $1$-Lipschitz piecewise affine functions. This alternative derivation provides further insights into the considered networks. It also informs the other main result of the paper, where the network width is fixed, allowing us to connect these two main theorems. This density result is extended to vector-valued $1$-Lipschitz functions as well, see Lemma \ref{lemma:extensionG}. 

By interleaving the residual layers with suitably constrained linear maps, we show, in Theorem~\ref{thm:gradFixedInner}, that the width can be fixed while preserving the density in the set of scalar $1$-Lipschitz functions. This second main result allows us to propose a new, practically implementable ResNet-like architecture that is flexible enough to approximate to arbitrary accuracy any scalar $1$-Lipschitz function.

We remark that one of the main novelties of our analysis lies in proving the universality of $1$-Lipschitz ResNets without relying on the Restricted Stone-Weierstrass Theorem, and in providing two alternative viewpoints to our analysis, thereby gaining a deeper understanding of Lipschitz-continuous ResNets. Not directly relying on the Restricted Stone-Weierstrass Theorem forces us to understand that the set of considered networks contains a very well-studied set of functions: 1-Lipschitz piecewise affine functions. This derivation has significant consequences, since it allows us to transfer the approximation properties of this function class to our neural networks. In other words, part of our theoretical derivations is constructive and is hence informative on what can be represented through the set of networks we consider. On the other hand, the Restrictive-Stone Weierstrass Theorem provides a very general technique for analysing parametric sets of functions and allows for obtaining universality in a non-constructive manner.

\subsection{Outline of the paper}\label{sec:outline}
The paper is structured as follows. Section \ref{sec:preliminaries} introduces the primary building block behind our network architectures and some needed notation. In Section \ref{sec:Proposed-Class} we prove that a class of $1$-Lipschitz ResNets with an arbitrary number of layers and hidden neurons, is dense in the set of $1$-Lipschitz scalar functions. Section \ref{sec:second} provides the second main result, where we show that the same density result can be obtained by fixing the number of hidden neurons and slightly modifying the architecture. Section \ref{sec:Discussion} discusses the implications of these results and outlines potential extensions.

\section{Preliminaries}\label{sec:preliminaries}
In this section, we present the primary building block behind our proposed architectures. We do so after having introduced some necessary notation and definitions.
\subsection{Notation}\label{sec:notation}
We focus on approximating functions in the space 
\[
\mathcal{C}_1(\mathcal{X},\mathbb{R}^c) = \Bigl\{g:\mathcal{X}\to\mathbb{R}^c\,\Bigm|\,\|g(y)-g(x)\|_2\leq \|y-x\|_2\,\,\forall x,y\in\mathcal{X}\Bigr\},
\]
where $\mathcal{X} \subseteq \mathbb{R}^{d}$, $d$ is the input dimension, and $\|x\|_2^2=x^\top x$ is the Euclidean $\ell^2$-norm. Most of the paper focuses on the case $c=1$, in which case we write $\mathcal{C}_1(\mathcal{X},\mathbb{R})$. We will denote the network width with $h$, i.e., the number of hidden neurons, and the network depth with $L$, which is the number of layers. We interchangeably refer to a linear map and its matrix representation with the same notation, e.g., $Q\in\mathbb{R}^{h\times d}$ or $Q:\mathbb{R}^d\to\mathbb{R}^h$. Given a matrix $A\in\mathbb{R}^{r\times s}$, the notation $\|A\|_2$ stands for its spectral norm, i.e., $\|A\|_2=\sqrt{\lambda_{\max}(A^\top A)}$. We also work with the vector $\ell^1$ norm, which for a vector $x\in\mathbb{R}^d$ is defined as $\|x\|_1 = \sum_{i=1}^d|x_i|$. We write $I_d\in\mathbb{R}^{d\times d}$, $1_d\in\mathbb{R}^d$, $0_{d,h}\in\mathbb{R}^{d\times h}$, and $0_d\in\mathbb{R}^d$ to denote the $d\times d$ identity matrix, a vector of ones, a matrix of zeros, and a vector of zeros, respectively. To refer to the Lipschitz constant of a function $f:\mathbb{R}^d\to\mathbb{R}^c$, we use the notation $\mathrm{Lip}(f)$, i.e., $\|f(y)-f(x)\|_2\leq \mathrm{Lip}(f)\|y-x\|_2$ for any $x,y\in\mathbb{R}^d$.
\subsection{Universal approximation property}
This paper focuses on universal approximation results for $\mathcal{C}_1(\mathcal{X},\mathbb{R})$, with $\mathcal{X}\subset\mathbb{R}^d$ compact.
\begin{definition}
Let $\mathcal{X}\subset\mathbb{R}^d$ be a compact set, and consider the set of functions $\mathcal{A}\subset\mathcal{C}_1(\mathcal{X},\mathbb{R})$. We say that $\mathcal{A}$ satisfies the universal approximation property for $\mathcal{C}_1(\mathcal{X},\mathbb{R})$ if, for any $\varepsilon>0$ and any $f\in\mathcal{C}_1(\mathcal{X},\mathbb{R})$ there is a $g\in\mathcal{A}$ such that
\[
\max_{x\in\mathcal{X}}\left |f(x)-g(x)\right | < \varepsilon.
\]
\end{definition}
We now report the statement of the Restricted Stone-Weierstrass Theorem, i.e. \cite[Lemma 1]{anil2019sorting}, where we adapt the notation and focus on the $\ell^2$-metric, which is the one adopted in our paper.
\begin{definition}[Lattice]
Let $\mathcal{X}\subseteq\mathbb{R}^d$ and consider a set $\mathcal{A}$ of functions from $\mathcal{X}$ to $\mathbb{R}$. $\mathcal{A}$ is a lattice if for any pair of functions $f,g\in\mathcal{A}$, the functions $h,k:\mathcal{X}\to\mathbb{R}$ defined as $h(x)=\max\{f(x),g(x)\}$ and $k(x)=\min\{f(x),g(x)\}$ belong to $\mathcal{A}$ as well.
\end{definition}
\begin{definition}[Subset separating points]
Let $\mathcal{X}\subseteq\mathbb{R}^d$ be a set with at least two points and consider a set $\mathcal{A}\subset\mathcal{C}_1(\mathcal{X},\mathbb{R})$. $\mathcal{A}$ separates the points of $\mathcal{X}$ if for any pair of distinct elements $x,y\in\mathcal{X}$ and real numbers $a,b\in\mathbb{R}$ with $|a-b|\leq \|y-x\|_2$, there is an $f\in\mathcal{A}$ such that $f(x)=a$ and $f(y)=b$.
\end{definition}
\begin{theorem}[Restricted Stone-Weierstrass]\label{thm:stoneW}
Let $\mathcal{X}\subset\mathbb{R}^d$ be compact and have at least two points. Let $\mathcal{A}\subset\mathcal{C}_1(\mathcal{X},\mathbb{R})$ be a lattice separating the points of $\mathcal{X}$. Then $\mathcal{A}$ satisfies the universal approximation property for $\mathcal{C}_1(\mathcal{X},\mathbb{R})$.
\end{theorem}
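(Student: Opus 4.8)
The plan is to adapt the classical lattice form of the Stone–Weierstrass theorem, the only genuinely new ingredient being that the \emph{restricted} separation hypothesis is exactly strong enough once we exploit that the target $f$ is itself $1$-Lipschitz. Fix $f\in\mathcal{C}_1(\mathcal{X},\mathbb{R})$ and $\varepsilon>0$; I will construct $g\in\mathcal{A}$ with $\max_{x\in\mathcal{X}}|f(x)-g(x)|<\varepsilon$ via a two-stage ``min-then-max'' argument. The lattice property guarantees that every intermediate function stays in $\mathcal{A}$, while compactness of $\mathcal{X}$ reduces the infinite families produced along the way to finite ones.

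First, for each pair of distinct points $x,y\in\mathcal{X}$ I would apply the separation hypothesis with the prescribed values $a=f(x)$ and $b=f(y)$. This is admissible precisely because $f$ is $1$-Lipschitz, so $|f(x)-f(y)|\leq\|y-x\|_2$, which is exactly the condition required in the definition of a point-separating subset. This produces $g_{x,y}\in\mathcal{A}$ with $g_{x,y}(x)=f(x)$ and $g_{x,y}(y)=f(y)$. This is the conceptual core of the proof: the constraint on which value pairs may be prescribed is harmless because the function we are chasing obeys the very same constraint.

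Next, fix $x$ and note that each $g_{x,y}$ is continuous (being $1$-Lipschitz). Define the open set $U_y=\{z\in\mathcal{X}:g_{x,y}(z)<f(z)+\varepsilon\}$, which contains $y$ since $g_{x,y}(y)=f(y)$, and which contains $x$ for every $y$ since $g_{x,y}(x)=f(x)<f(x)+\varepsilon$; hence $\{U_y\}_{y\neq x}$ is an open cover of $\mathcal{X}$. Extracting a finite subcover $U_{y_1},\dots,U_{y_n}$ and setting $g_x=\min\{g_{x,y_1},\dots,g_{x,y_n}\}\in\mathcal{A}$, I obtain $g_x<f+\varepsilon$ everywhere on $\mathcal{X}$, while $g_x(x)=f(x)$ because each $g_{x,y_i}(x)=f(x)$. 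Letting $x$ now vary, continuity of $g_x$ yields an open $V_x\ni x$ on which $g_x>f-\varepsilon$; a finite subcover $V_{x_1},\dots,V_{x_m}$ and $g=\max\{g_{x_1},\dots,g_{x_m}\}\in\mathcal{A}$ then satisfy $g<f+\varepsilon$ everywhere (a finite maximum of functions lying below $f+\varepsilon$) and $g>f-\varepsilon$ everywhere (each point lies in some $V_{x_j}$). Since $g-f$ is continuous on the compact set $\mathcal{X}$, these strict pointwise bounds attain their extrema and give $\max_{x\in\mathcal{X}}|f(x)-g(x)|<\varepsilon$, as required.

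The one step deserving care is the value-matching at pairs, where the admissibility condition $|a-b|\leq\|y-x\|_2$ must be verified; this is the place where the $1$-Lipschitz hypothesis on $f$ is indispensable, and everything else is the standard double-covering scheme. I would also emphasise that no closure assumption on $\mathcal{A}$ is needed: because the lattice operations are applied only finitely many times, the approximant $g$ lies in $\mathcal{A}$ exactly, and it automatically belongs to $\mathcal{C}_1(\mathcal{X},\mathbb{R})$ since the pointwise minimum and maximum of $1$-Lipschitz functions are again $1$-Lipschitz.
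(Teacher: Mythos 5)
Your proof is correct. The paper itself does not prove this statement---it is quoted verbatim (up to notation) from the cited reference \cite[Lemma 1]{anil2019sorting}---and your argument is precisely the standard double-compactness lattice argument used there: the key observation that the restricted separation hypothesis suffices because the target $f$ is itself $1$-Lipschitz (so the pair $(f(x),f(y))$ is always admissible), followed by the min-over-a-finite-subcover step to get $g_x \le f+\varepsilon$ with $g_x(x)=f(x)$, and the max-over-a-finite-subcover step to get the two-sided bound. Since the lattice operations are applied only finitely many times, $g\in\mathcal{A}$ exactly, and compactness upgrades the strict pointwise bounds to a strict uniform bound; no step is missing.
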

\subsection{$1$-Lipschitz residual layers}
The main building block of ResNets are layers of the form $x\mapsto x + F_{\theta_\ell}(x)=:\Phi_{\theta_\ell}(x)$, where $x\in\mathbb{R}^d$, and $F_{\theta_\ell}:\mathbb{R}^d\to\mathbb{R}^d$ is a parametric map depending on the parameters collected in $\theta_\ell$. We recall that given two Lipschitz continuous functions $f:\mathbb{R}^{d_1}\to\mathbb{R}^{d_2}$ and $g:\mathbb{R}^{d_2}\to\mathbb{R}^{d_3}$, the Lipschitz constant of $h=g\circ f:\mathbb{R}^{d_1}\to\mathbb{R}^{d_3}$ satisfies $\mathrm{Lip}(h)\leq \mathrm{Lip}(f)\mathrm{Lip}(g)$. For this reason, to build $1$-Lipschitz networks, one typically works with layers that are all $1$-Lipschitz. For a generic Lipschitz continuous function $F_{\theta_\ell}$, it is challenging to have a better bound than $\mathrm{Lip}(\Phi_{\theta_\ell})\leq 1+\mathrm{Lip}(F_{\theta_\ell})$, which is the one following from the triangular inequality. However, by making further assumptions of the form of $F_{\theta_\ell}$, it is possible to get $1$-Lipschitz residual layers, as formalised in the following proposition.
\begin{proposition}[Theorem 2.3 and Lemma 2.5 in \cite{sherry2024designing}]\label{prop:nonexp}
Assume $\sigma:\mathbb{R}\to\mathbb{R}$ is $1-$Lipschitz continuous and non-decreasing, and define $\Phi_{\theta_\ell}:\mathbb{R}^h\to\mathbb{R}^h$ as 
\begin{equation}\label{eq:1LipRes}
\Phi_{\theta_\ell}(x)=x-\tau_\ell W_\ell^\top \sigma(W_\ell x + b_\ell)
\end{equation} 
with $W_\ell\in\mathbb{R}^{h_\ell\times h}$, $\tau_\ell\in\mathbb{R}$, $b\in\mathbb{R}^{h_\ell}$ having $0\leq \tau_\ell\leq 2/\|W_\ell\|_2^2$. Then, $\mathrm{Lip}(\Phi_{\theta_\ell})\leq 1$.
\end{proposition}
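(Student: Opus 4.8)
The plan is to reduce the Lipschitz estimate to a spectral-norm bound on a symmetric matrix. Fix arbitrary $x,y\in\mathbb{R}^h$ and set $z=x-y$, $u=W_\ell x+b_\ell$, $v=W_\ell y+b_\ell$, so that $u-v=W_\ell z$. Writing the difference of the two residual maps,
\[
\Phi_{\theta_\ell}(x)-\Phi_{\theta_\ell}(y)=z-\tau_\ell W_\ell^\top\bigl(\sigma(u)-\sigma(v)\bigr),
\]
where $\sigma$ acts componentwise. The first step is to exploit the two hypotheses on $\sigma$—that it is non-decreasing and $1$-Lipschitz—to linearise this finite difference. For each coordinate $i$, define $\alpha_i=(\sigma(u_i)-\sigma(v_i))/(u_i-v_i)$ when $u_i\neq v_i$ and $\alpha_i=0$ otherwise; monotonicity gives $\alpha_i\geq 0$ and the $1$-Lipschitz property gives $\alpha_i\leq 1$, while in both cases $\sigma(u_i)-\sigma(v_i)=\alpha_i(u_i-v_i)$. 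Collecting the $\alpha_i$ into a diagonal matrix $D=\mathrm{diag}(\alpha_1,\dots,\alpha_{h_\ell})$ with entries in $[0,1]$, we obtain $\sigma(u)-\sigma(v)=DW_\ell z$, and hence
\[
\Phi_{\theta_\ell}(x)-\Phi_{\theta_\ell}(y)=\bigl(I_h-\tau_\ell W_\ell^\top D W_\ell\bigr)z.
\]

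Next I would reduce the claim to an eigenvalue bound. Set $M=W_\ell^\top D W_\ell$, which is symmetric positive semidefinite since $D$ has non-negative entries; in fact $M=(D^{1/2}W_\ell)^\top(D^{1/2}W_\ell)$. It therefore suffices to show $\|I_h-\tau_\ell M\|_2\leq 1$, which, because $I_h-\tau_\ell M$ is symmetric, amounts to checking that every eigenvalue $1-\tau_\ell\lambda$ (with $\lambda\geq 0$ an eigenvalue of $M$) lies in $[-1,1]$. The upper bound $1-\tau_\ell\lambda\leq 1$ is immediate from $\tau_\ell\geq 0$ and $\lambda\geq 0$, so the only real content is the lower bound $1-\tau_\ell\lambda\geq -1$, i.e. $\tau_\ell\lambda\leq 2$ for the largest eigenvalue $\lambda=\lambda_{\max}(M)$.

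The key estimate is then $\lambda_{\max}(M)\leq\|W_\ell\|_2^2$. This follows from $\lambda_{\max}(M)=\|D^{1/2}W_\ell\|_2^2\leq\|D^{1/2}\|_2^2\,\|W_\ell\|_2^2$ together with $\|D^{1/2}\|_2=\max_i\sqrt{\alpha_i}\leq 1$. Combining with the step-size hypothesis $\tau_\ell\leq 2/\|W_\ell\|_2^2$ gives $\tau_\ell\lambda_{\max}(M)\leq\tau_\ell\|W_\ell\|_2^2\leq 2$, which closes the eigenvalue bound and yields $\|\Phi_{\theta_\ell}(x)-\Phi_{\theta_\ell}(y)\|_2\leq\|z\|_2=\|x-y\|_2$; since $x,y$ were arbitrary, $\mathrm{Lip}(\Phi_{\theta_\ell})\leq 1$. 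The main subtlety to get right is the first step: $\sigma$ need not be differentiable, so the diagonalisation must be carried out through finite-difference ratios rather than a derivative, and the degenerate coordinates $u_i=v_i$ must be handled so that the identity $\sigma(u)-\sigma(v)=DW_\ell z$ holds exactly rather than merely approximately. Everything after that is a routine symmetric-matrix spectral argument. I also note the alternative viewpoint that $\Phi_{\theta_\ell}$ is an explicit Euler step $x\mapsto x-\tau_\ell\nabla g(x)$ on the convex function $g(x)=1_{h_\ell}^\top\Sigma(W_\ell x+b_\ell)$, with $\Sigma$ a componentwise antiderivative of $\sigma$, from which non-expansiveness for $\tau_\ell\leq 2/\mathrm{Lip}(\nabla g)$ follows from the classical theory of gradient-descent steps on smooth convex functions; the linear-algebra route above is, however, the more self-contained one.
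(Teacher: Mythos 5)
Your proof is correct, and it is worth noting that the paper itself does not prove this proposition: it is imported verbatim from \cite{sherry2024designing} (Theorem 2.3 and Lemma 2.5 there), with the paper only remarking that $\Phi_{\theta_\ell}$ is an explicit Euler step for the negative gradient flow of $g_\ell(x)=1_{h_\ell}^\top\gamma(W_\ell x+b_\ell)$, $\gamma'=\sigma$. The cited proof runs along exactly that line: since $\sigma$ is non-decreasing, $\gamma$ is convex, so $g_\ell$ is a convex $C^1$ function whose gradient $\nabla g_\ell(x)=W_\ell^\top\sigma(W_\ell x+b_\ell)$ is Lipschitz with constant at most $\|W_\ell\|_2^2$ (this is the content of their Lemma 2.5), and a gradient-descent step with step size at most $2/\mathrm{Lip}(\nabla g_\ell)$ on such a function is non-expansive by co-coercivity (Baillon--Haddad), which is their Theorem 2.3. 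Your argument reaches the same conclusion by a genuinely different, more elementary route: the secant-slope diagonalisation $\sigma(u)-\sigma(v)=D(u-v)$ with $D=\mathrm{diag}(\alpha_i)$, $\alpha_i\in[0,1]$, followed by a spectral bound on the symmetric matrix $I_h-\tau_\ell W_\ell^\top DW_\ell$. The two hypotheses on $\sigma$ play structurally identical roles in both proofs (monotonicity gives positive semi-definiteness of $M=W_\ell^\top DW_\ell$, respectively convexity of $g_\ell$; the $1$-Lipschitz property gives $\lambda_{\max}(M)\le\|W_\ell\|_2^2$, respectively $\mathrm{Lip}(\nabla g_\ell)\le\|W_\ell\|_2^2$), and your treatment of the degenerate coordinates $u_i=v_i$ makes the finite-difference identity exact, which is the one point where a careless derivative-based version would fail for $\mathrm{ReLU}$. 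What each approach buys: yours is fully self-contained, needing only the spectral theorem for symmetric matrices, and works pointwise without ever invoking convexity; the convex-analysis route is less elementary but more conceptual and more general, applying to any gradient step on a convex function with Lipschitz gradient (not just elementwise-activation structures) and connecting the threshold $2/\|W_\ell\|_2^2$ to the standard step-size theory of gradient descent and averaged operators, which is what \cite{sherry2024designing} exploit for their Plug-and-Play convergence results. Since you also sketch this alternative viewpoint in your closing remark, your proposal in effect covers both arguments.
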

The map $\Phi_{\theta_\ell}$ can be interpreted as a single explicit Euler step of size $\tau_\ell$ for the negative gradient flow differential equation $\dot{x}=-W_\ell^\top \sigma(W_\ell x+b_\ell) = -\nabla g_\ell(x)$, $g_\ell(x)=1^\top_{h_\ell} \gamma(W_\ell x + b_\ell)$, $\gamma:\mathbb{R}\to\mathbb{R}$ defined by $\gamma'=\sigma$. We define the set of residual layers satisfying the assumptions of Proposition \ref{prop:nonexp} and having weights with spectral norm bounded by one:
\[
\begin{split}
\mathcal{E}_{h,\sigma} = \Bigl\{\Phi_{\theta}:\mathbb{R}^h\to\mathbb{R}^h\,\Bigm|&\,\Phi_\theta(x)=x-\tau W^\top \sigma(W x + b),\,W\in\mathbb{R}^{k\times h},\,b\in\mathbb{R}^{k},\\
&\,\theta=(W, b),\,0\leq \tau \leq 2,\,\|W\|_2\leq 1,\,k\in\mathbb{N}\Bigr\}.
\end{split}
\]
ResNets with layers as in \eqref{eq:1LipRes} have been used in \cite{meunier2022dynamical,prach20241,sherry2024designing} to improve the robustness to adversarial attacks, and in \cite{sherry2024designing} to approximate the proximal operator and develop a provably convergent Plug-and-Play algorithm for inverse problems.

We will work with residual layers that satisfy the assumptions of Proposition \ref{prop:nonexp} and combine them with suitably constrained affine maps to prove our density results. Our focus is on the activation function $\sigma(x) = \mathrm{ReLU}(x) = \max\{0,x\}$, which satisfies the assumptions and simplifies several derivations since it allows us to represent the identity, and the entrywise maximum and minimum functions exactly, which are fundamental operations for our theory. They can be recovered as
\[
x = \sigma(x)-\sigma(-x),\,\,\max\{x,y\} = x + \sigma(y-x),\,\,\min\{x,y\} = x - \sigma(x-y),\,\,\forall x,y\in\mathbb{R}^d.
\]
Some linear maps belong to $\mathcal{E}_{h,\sigma}$ as well, as formalised in the next lemma.
\begin{lemma}\label{lemma:linearGradient}
Let $M\in\mathbb{R}^{h\times h}$ be a symmetric matrix with eigenvalues all in the interval $[0,1]$. Then the linear map $x\mapsto Mx$ belongs to $\mathcal{E}_{h,\sigma}$ if $\sigma=\mathrm{ReLU}$.
\end{lemma}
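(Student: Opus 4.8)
The plan is to exhibit explicit parameters $\theta=(W,b)$ and a step size $\tau$ that realise the map $x\mapsto Mx$ as an element of $\mathcal{E}_{h,\sigma}$. Writing the generic layer as $\Phi_\theta(x)=x-\tau W^\top\sigma(Wx+b)$ and matching it to $Mx$ amounts to forcing $\tau W^\top\sigma(Wx+b)=(I_h-M)x$ for every $x$. Since $M$ is symmetric with spectrum contained in $[0,1]$, the matrix $I_h-M$ is symmetric positive semidefinite with eigenvalues in $[0,1]$, so its principal square root $(I_h-M)^{1/2}$ is well defined, symmetric, positive semidefinite, and satisfies $\|(I_h-M)^{1/2}\|_2=\sqrt{\lambda_{\max}(I_h-M)}\leq 1$. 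This square root will be the seed for constructing $W$.

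The central idea is to linearise the ReLU using the identity $\sigma(y)-\sigma(-y)=y$ recalled above. First I would set $b=0_{2h}$ and take the stacked weight $W=\begin{pmatrix}A\\-A\end{pmatrix}$ with $A\in\mathbb{R}^{h\times h}$ still to be chosen, so that $k=2h\in\mathbb{N}$. Then $\sigma(Wx)$ splits into the blocks $\sigma(Ax)$ and $\sigma(-Ax)$, and $W^\top\sigma(Wx)=A^\top\sigma(Ax)-A^\top\sigma(-Ax)=A^\top(\sigma(Ax)-\sigma(-Ax))=A^\top A x$, so the nonlinearity cancels exactly. Consequently $\Phi_\theta(x)=(I_h-\tau A^\top A)x$, and the task reduces to choosing $A$ and $\tau$ with $\tau A^\top A=I_h-M$.

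Next I would set $A=\tfrac{1}{\sqrt{2}}(I_h-M)^{1/2}$ and $\tau=2$, which yields $\tau A^\top A=2\cdot\tfrac12(I_h-M)=I_h-M$ as required, hence $\Phi_\theta(x)=Mx$. It then remains only to verify the two constraints defining $\mathcal{E}_{h,\sigma}$. The step size $\tau=2$ lies in $[0,2]$, and since $W^\top W=2A^\top A=I_h-M$, we obtain $\|W\|_2=\sqrt{\lambda_{\max}(I_h-M)}\leq 1$, where the inequality uses precisely the hypothesis that the eigenvalues of $M$ lie in $[0,1]$.

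I expect the only delicate point to be satisfying both constraints simultaneously: the stacking $W=\begin{pmatrix}A\\-A\end{pmatrix}$ inflates the spectral norm by a factor of $\sqrt{2}$, so the naive choice $A=(I_h-M)^{1/2}$ with $\tau=1$ would give $\|W\|_2=\sqrt{2}\,\|(I_h-M)^{1/2}\|_2$, which can exceed $1$. The rescaling by $1/\sqrt{2}$ absorbs this excess into the step size, landing exactly at the boundary $\tau=2$. This balancing is feasible only because the eigenvalue hypothesis guarantees $\|(I_h-M)^{1/2}\|_2\leq 1$; that bound is the crux that lets both the norm condition and the step-size condition hold at once.
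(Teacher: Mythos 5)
Your proof is correct and takes essentially the same route as the paper's: both linearise the ReLU via the stacked weight $\begin{pmatrix}A\\-A\end{pmatrix}$ together with the identity $\sigma(y)-\sigma(-y)=y$, factor $I_h-M$ as $W^\top W$ (the paper through the eigendecomposition $V=\Lambda R$, you through the principal square root $(I_h-M)^{1/2}$, which agree up to an orthogonal factor), and balance the $\sqrt{2}$ inflation against the step size $\tau=2$. There is no substantive difference between the two arguments.
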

\begin{proof}
The matrix $M-I_h$ is symmetric and negative semi-definite. Thus, it can be diagonalised as $M-I_h = - R^\top \Lambda^2 R$
with $R^\top R=RR^\top=I_h$ and $\Lambda=\mathrm{diag}(\lambda_1,...,\lambda_h)$ having $\|\Lambda\|_2\leq 1$. Define $V=\Lambda R$. Then,
\[
\begin{split}
Mx-x &= -V^\top (Vx) = -V^\top (\sigma(Vx)-\sigma(-Vx)) = -2\begin{pmatrix} \frac{1}{\sqrt{2}}V^\top & -\frac{1}{\sqrt{2}}V^\top \end{pmatrix} \sigma\left(\begin{pmatrix}
    \frac{1}{\sqrt{2}}V \\ -\frac{1}{\sqrt{2}}V
\end{pmatrix} x\right) \\
&=: -2W^\top \sigma(Wx),\,\,W^\top = \begin{pmatrix} \frac{1}{\sqrt{2}}V^\top & -\frac{1}{\sqrt{2}}V^\top \end{pmatrix},
\end{split}
\]
where we used the positive homogeneity of $\sigma$, i.e., $\sigma(\gamma x)=\gamma\sigma(x)$ for all $x\in\mathbb{R}$ and $\gamma\geq 0$.  We conclude the desired result by setting $\tau=2$, and noticing that $\|W\|_2\leq 1$ since $\|V\|_2\leq 1$.
\end{proof}

\section{Density with unbounded width and depth}
\label{sec:Proposed-Class}

In this section, we consider the following set of parametric maps 
\begin{align*}
\mathcal{G}_{d, \sigma}(\mathcal{X}, \mathbb{R})
:= \mathcal{C}_{1}(\mathcal{X},\mathbb{R}) \cap \Bigl\{&v^\top \circ \Phi_{\theta_{L}} \circ \cdots \circ \Phi_{\theta_{1}} \circ Q :\mathcal{X}\to\mathbb{R}\,\Bigm|\,Q(x)=\widehat{Q}x+\widehat{q},\,\widehat{Q}\in \mathbb{R}^{h \times d},\\
&\widehat{q}\in\mathbb{R}^h,\,v\in\mathbb{R}^h,\,\|v\|_2=1,\,\Phi_{\theta_{\ell}}\in\mathcal{E}_{h,\sigma},\,L,h \in \mathbb{N}  
\Bigl\}.
\end{align*}
We remark that in the definition of $\mathcal{G}_{d,\sigma}(\mathcal{X},\mathbb{R})$, the matrix $\widehat{Q}$ is not directly constrained in its norm. However, the intersection with $\mathcal{C}_{1}(\mathcal{X},\mathbb{R})$ only allows us to consider $1$-Lipschitz maps. The lack of explicit constraints over $\widehat{Q}$ leads to problems when implementing these networks, if the goal is to guarantee their $1$-Lipschitz regularity. This situation will be resolved by the practicality of the set of networks considered in Section \ref{sec:second}. Still, one way to leverage the theory we develop for $\mathcal{G}_{d, \sigma}(\mathcal{X}, \mathbb{R})$ in numerical simulations is to leave $\widehat{Q}$ unconstrained while training the model, but simultaneously regularising the loss function so that the Lipschitz constant of the network is controlled by one, as done, for example, in \cite{lunz2018adversarial}. 
\begin{theorem}\label{thm:gradFreeInner}
Let $d\in\mathbb{N}$, $\sigma=\mathrm{ReLU}$ and $\mathcal{X}\subset\mathbb{R}^d$ be compact. Then, $\mathcal{G}_{d, \sigma}(\mathcal{X}, \mathbb{R})$ satisfies the universal approximation property for $\mathcal{C}_1(\mathcal{X},\mathbb{R})$.
\end{theorem}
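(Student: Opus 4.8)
The plan is to verify the three hypotheses of the Restricted Stone-Weierstrass Theorem (Theorem~\ref{thm:stoneW}) for the family $\mathcal{A}=\mathcal{G}_{d,\sigma}(\mathcal{X},\mathbb{R})$: that it is contained in $\mathcal{C}_1(\mathcal{X},\mathbb{R})$, that it separates the points of $\mathcal{X}$, and that it is a lattice. The first requirement is immediate, since the definition of $\mathcal{G}_{d,\sigma}(\mathcal{X},\mathbb{R})$ already intersects the candidate networks with $\mathcal{C}_1(\mathcal{X},\mathbb{R})$. The case $\#\mathcal{X}\le 1$ is trivial, so I assume $\mathcal{X}$ has at least two points and the theorem applies.

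For separation, given distinct $x,y\in\mathcal{X}$ and reals $a,b$ with $|a-b|\le\|y-x\|_2$, I would exhibit a single affine function. Choosing $w=\tfrac{b-a}{\|y-x\|_2^2}(y-x)$ gives $\|w\|_2\le 1$ and $w^\top(y-x)=b-a$, so $f(z)=w^\top z+(a-w^\top x)$ satisfies $f(x)=a$, $f(y)=b$, and has Lipschitz constant $\|w\|_2\le 1$. To place $f$ in $\mathcal{G}_{d,\sigma}(\mathcal{X},\mathbb{R})$ I take hidden width $h=1$, lifting $Q(z)=w^\top z+(a-w^\top x)$, a single identity residual layer (which lies in $\mathcal{E}_{1,\sigma}$ by Lemma~\ref{lemma:linearGradient} with $M=I_1$), and readout $v=1$.

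The substantive work is the lattice property: for $f,g\in\mathcal{G}_{d,\sigma}(\mathcal{X},\mathbb{R})$ both $\max\{f,g\}$ and $\min\{f,g\}$ must lie in the family. As maxima and minima of $1$-Lipschitz functions, these are automatically $1$-Lipschitz, so only the architectural form has to be produced. Writing $f=v_1^\top\circ N_1$ and $g=v_2^\top\circ N_2$, with $N_i$ the residual stacks lifting $\mathcal{X}$ into $\mathbb{R}^{h_i}$, I would first pad the shorter stack with identity layers so both have depth $L$, then run them in parallel by concatenating states into $\mathbb{R}^{h_1+h_2}$ and using block-diagonal layers. Distinct step sizes $\tau$ can be unified to $\tau=2$ on each block via the rescaling $W\mapsto\sqrt{\tau/2}\,W$, $b\mapsto\sqrt{\tau/2}\,b$, which preserves $\|W\|_2\le 1$ thanks to the positive homogeneity of $\sigma$; the resulting block-diagonal weight has spectral norm at most one, so each combined layer belongs to $\mathcal{E}_{h_1+h_2,\sigma}$.

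Finally I would append one residual layer realising the pointwise extrema. With the orthonormal vectors $\tilde v_1=(v_1,0)$ and $\tilde v_2=(0,v_2)$, set $W=\tfrac{1}{\sqrt 2}(\tilde v_1-\tilde v_2)^\top$ and $\tau=2$, giving $\Phi(z)=z-2W^\top\sigma(Wz)$ with $\|W\|_2=1$. Reading out with $v=\tilde v_2$ yields $\tilde v_2^\top\Phi(z)=b+\sigma(a-b)=\max\{a,b\}$, where $a=v_1^\top z_1$ and $b=v_2^\top z_2$, while reading out with $v=\tilde v_1$ yields $a-\sigma(a-b)=\min\{a,b\}$; both readout vectors are unit-norm, so $\max\{f,g\},\min\{f,g\}\in\mathcal{G}_{d,\sigma}(\mathcal{X},\mathbb{R})$, establishing the lattice property and completing the verification. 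I expect the delicate point to be exactly this last construction: matching the residual form $z-\tau W^\top\sigma(Wz+b)$, with its built-in minus sign and the constraints $\|W\|_2\le 1$ and $0\le\tau\le 2$, to the additive identity $\max\{a,b\}=a+\sigma(b-a)$ forces the boundary choice $\tau=2$, $\|W\|_2=1$ and relies on the orthogonality of $\tilde v_1$ and $\tilde v_2$.
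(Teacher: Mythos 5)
Your proposal is correct and takes essentially the same route as the paper's first proof: verify separation via $1$-Lipschitz affine maps, then obtain the lattice property by running the two networks in parallel (padding with identity layers, unifying step sizes through positive homogeneity of $\mathrm{ReLU}$) and appending one residual layer with $\tau=2$ and a rank-one weight built from the readout vectors to realise the max/min. The only difference is a minor streamlining: you read $\max\{f,g\}$ and $\min\{f,g\}$ directly off the unprojected parallel state, which lets you skip the intermediate symmetric projection layer $M$ that the paper inserts via Lemma~\ref{lemma:secondToLastStep} together with Lemma~\ref{lemma:linearGradient}; both computations are valid.
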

We prove this theorem in two ways since they provide different perspectives towards the set $\mathcal{G}_{d,\sigma}(\mathcal{X},\mathbb{R})$. First, in Section \ref{sec:proofStoneWeierstrass}, we verify that $\mathcal{G}_{d,\sigma}(\mathcal{X},\mathbb{R})$ satisfies the assumptions of Theorem \ref{thm:stoneW}. Then, in Section \ref{sec:pwlInG}, we show that all the piecewise-linear $1$-Lipschitz functions from $\mathcal{X}$ to $\mathbb{R}$ belong to $\mathcal{G}_{d,\sigma}(\mathcal{X},\mathbb{R})$.

\subsection{Proof of Theorem~\ref{thm:gradFreeInner} based on Restricted Stone-Weierstrass}\label{sec:proofStoneWeierstrass}

\begin{lemma}\label{lemma:GseparatesPoints}
Let $d\in\mathbb{N}$, $\mathcal{X}\subseteq\mathbb{R}^d$ have at least two points, and $\sigma=\mathrm{ReLU}$. Then $\mathcal{G}_{d,\sigma}(\mathcal{X},\mathbb{R})$ separates the points of $\mathcal{X}$.
\end{lemma}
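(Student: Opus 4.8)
The plan is to reduce point separation to constructing a single affine $1$-Lipschitz function interpolating the prescribed data, and then to exhibit that function as an element of $\mathcal{G}_{d,\sigma}(\mathcal{X},\mathbb{R})$ with all residual blocks set to the identity. Fix distinct points $x,y\in\mathcal{X}$ and values $a,b\in\mathbb{R}$ with $|a-b|\leq\|y-x\|_2$. First I would define the direction $w=\frac{b-a}{\|y-x\|_2^2}(y-x)\in\mathbb{R}^d$, which satisfies $w^\top(y-x)=b-a$ and $\|w\|_2=\frac{|b-a|}{\|y-x\|_2}\leq 1$, the last inequality being exactly the separation hypothesis $|a-b|\leq\|y-x\|_2$. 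Setting $c=a-w^\top x$ and $f(z)=w^\top z+c$ then yields $f(x)=a$ and $f(y)=b$, while $\mathrm{Lip}(f)=\|w\|_2\leq 1$ guarantees $f\in\mathcal{C}_1(\mathcal{X},\mathbb{R})$.

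It then remains to write $f$ in the compositional form defining $\mathcal{G}_{d,\sigma}(\mathcal{X},\mathbb{R})$. I would take the hidden width $h=1$, so that the readout $v\in\mathbb{R}^1$ can be chosen as $v=1$, which has unit norm as required by $\|v\|_2=1$. Each residual block $\Phi_{\theta_\ell}\in\mathcal{E}_{1,\sigma}$ can be made the identity by choosing $\tau_\ell=0$ (admissible since $0\leq\tau_\ell\leq 2$ and $\|W\|_2\leq 1$ hold vacuously for, say, $W=0$), so that $\Phi_{\theta_L}\circ\cdots\circ\Phi_{\theta_1}=\mathrm{id}_{\mathbb{R}}$ irrespective of $L$. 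Finally I would pick the affine lift $Q(z)=\widehat{Q}z+\widehat{q}$ with $\widehat{Q}=w^\top\in\mathbb{R}^{1\times d}$ and $\widehat{q}=c\in\mathbb{R}$. Composing gives $v^\top\circ\Phi_{\theta_L}\circ\cdots\circ\Phi_{\theta_1}\circ Q=f$, an element of $\mathcal{G}_{d,\sigma}(\mathcal{X},\mathbb{R})$ interpolating the prescribed values at $x$ and $y$.

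The argument is essentially elementary, so I do not anticipate a serious obstacle; the only points requiring care are checking that the structural constraints of $\mathcal{G}_{d,\sigma}(\mathcal{X},\mathbb{R})$ are met by this minimal construction, namely the unit-norm readout $\|v\|_2=1$ (handled by $h=1$, $v=1$) and the identity realisation of the residual layers (handled by $\tau_\ell=0$). Since the target is genuinely affine and $1$-Lipschitz, no nonlinearity is needed, and the construction does not rely on the specific choice $\sigma=\mathrm{ReLU}$ beyond its being admissible in $\mathcal{E}_{1,\sigma}$.
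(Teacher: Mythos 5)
Your proposal is correct and follows essentially the same route as the paper: both reduce separation to exhibiting an affine $1$-Lipschitz interpolant with slope vector of norm $|a-b|/\|y-x\|_2\leq 1$, realised inside $\mathcal{G}_{d,\sigma}(\mathcal{X},\mathbb{R})$ with width $h=1$ and trivial residual structure. The only cosmetic differences are that the paper obtains the interpolant by solving a $2\times 2$ linear system (and takes $L=0$) whereas you write the slope $w=\frac{b-a}{\|y-x\|_2^2}(y-x)$ explicitly and realise the identity blocks via $\tau_\ell=0$, which if anything is slightly cleaner since it avoids any ambiguity about whether $L=0$ is admissible.
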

The proof of this lemma is in Appendix \ref{app:firstProof}, and relies on the fact that all the affine $1$-Lipschitz functions from $\mathcal{X}$ to $\mathbb{R}$ belong to $\mathcal{G}_{d,\sigma}(\mathcal{X},\mathbb{R})$. 

\begin{lemma}\label{lemma:secondToLastStep}
Let $d\in\mathbb{N}$, $\mathcal{X}\subseteq\mathbb{R}^d$, $\sigma=\mathrm{ReLU}$. Consider two functions $f,g\in\mathcal{G}_{d,\sigma}(\mathcal{X},\mathbb{R})$. There exist $L\in\mathbb{N}$, $h_1,h_2\in\mathbb{N}$, $v_1\in\mathbb{R}^{h_1},v_2\in\mathbb{R}^{h_2}$ with $\|v_1\|_2=\|v_2\|_2=1$, $Q_1:\mathbb{R}^d\to\mathbb{R}^{h_1}$ and $Q_2:\mathbb{R}^d\to\mathbb{R}^{h_2}$ affine maps, $\Phi_{\theta_1},...,\Phi_{\theta_L}\in\mathcal{E}_{h_1+h_2,\sigma}$, and $M\in\mathbb{R}^{(h_1+h_2)\times (h_1+h_2)}$ symmetric positive semi-definite with $\|M\|_2\leq 1$, such that 
\[
\begin{bmatrix}
    f(x)v_1 \\ g(x)v_2
\end{bmatrix} =M \circ \Phi_{\theta_L}\circ ... \circ \Phi_{\theta_1} \circ \begin{bmatrix} Q_1 \\ Q_2 \end{bmatrix} x.
\]
\end{lemma}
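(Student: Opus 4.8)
The plan is to lay the two networks side by side as a single block-diagonal ResNet acting on the product space $\mathbb{R}^{h_1+h_2}$, and then fold the two scalar read-out vectors into one symmetric positive semi-definite matrix $M$. First I would unpack the definitions, writing $f = v_f^\top \circ \Phi^f_{\theta_{L_f}} \circ \cdots \circ \Phi^f_{\theta_1} \circ Q_f$ and $g = v_g^\top \circ \Phi^g_{\theta_{L_g}} \circ \cdots \circ \Phi^g_{\theta_1} \circ Q_g$, where $h_1,h_2$ are the two widths, $\|v_f\|_2=\|v_g\|_2=1$, and $\Phi^f_{\theta_i}\in\mathcal{E}_{h_1,\sigma}$, $\Phi^g_{\theta_j}\in\mathcal{E}_{h_2,\sigma}$. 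Because the identity map lies in $\mathcal{E}_{h,\sigma}$ (take $\tau=0$, or invoke Lemma~\ref{lemma:linearGradient} with $M=I_h$), I can pad the shorter composition with identity layers so that both have a common depth $L=\max\{L_f,L_g\}$. I would then set $Q_1=Q_f$, $Q_2=Q_g$, $v_1=v_f$, $v_2=v_g$, so that the constraints $\|v_1\|_2=\|v_2\|_2=1$ are inherited directly from membership in $\mathcal{G}_{d,\sigma}(\mathcal{X},\mathbb{R})$.

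The core step is to verify that the parallel action of one layer from each network is again a \emph{single} layer in $\mathcal{E}_{h_1+h_2,\sigma}$. Given $\Phi^1(x_1)=x_1-\tau_1 W_1^\top\sigma(W_1 x_1+b_1)$ and $\Phi^2(x_2)=x_2-\tau_2 W_2^\top\sigma(W_2 x_2+b_2)$ with $0\le\tau_i\le 2$ and $\|W_i\|_2\le1$, the naive stacking carries two different step sizes, which is the main obstacle: the definition of $\mathcal{E}$ allows only one $\tau$. The hard part is thus purely bookkeeping around this mismatch, and I would remove it by normalising both step sizes to $\tau=2$. Using the positive homogeneity of $\sigma$ exactly as in the proof of Lemma~\ref{lemma:linearGradient}, the substitutions $W_i\mapsto\sqrt{\tau_i/2}\,W_i$ and $b_i\mapsto\sqrt{\tau_i/2}\,b_i$ leave each layer unchanged while resetting its step size to $2$, and they preserve $\|\sqrt{\tau_i/2}\,W_i\|_2\le1$ since $\tau_i\le2$. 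Placing the rescaled weights block-diagonally as $W=\mathrm{diag}(\sqrt{\tau_1/2}\,W_1,\sqrt{\tau_2/2}\,W_2)$ and $b=(\sqrt{\tau_1/2}\,b_1,\sqrt{\tau_2/2}\,b_2)$ gives the single layer $z\mapsto z-2W^\top\sigma(Wz+b)$, whose spectral norm is $\|W\|_2=\max_i\|\sqrt{\tau_i/2}\,W_i\|_2\le1$ by block-diagonality, so it lies in $\mathcal{E}_{h_1+h_2,\sigma}$. Applying this layer by layer produces $\Phi_{\theta_1},\dots,\Phi_{\theta_L}$, whose composition on $[Q_1 x;\,Q_2 x]$ outputs the stacked pre-read-out states $[z_f;\,z_g]$ of the two networks.

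Finally I would define $M=\mathrm{diag}(v_f v_f^\top,\,v_g v_g^\top)\in\mathbb{R}^{(h_1+h_2)\times(h_1+h_2)}$. Since $f(x)=v_f^\top z_f$ and $g(x)=v_g^\top z_g$, one computes $M[z_f;\,z_g]=[(v_f v_f^\top)z_f;\,(v_g v_g^\top)z_g]=[f(x)v_f;\,g(x)v_g]=[f(x)v_1;\,g(x)v_2]$, which is precisely the claimed identity. It then remains only to confirm the three required properties of $M$: it is symmetric because each rank-one block $v_f v_f^\top$ is symmetric; it is positive semi-definite as a block-diagonal matrix of Gram blocks; and its nonzero eigenvalues equal $\|v_f\|_2^2=\|v_g\|_2^2=1$, so all eigenvalues lie in $\{0,1\}$ and $\|M\|_2=1\le1$. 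This completes the construction.
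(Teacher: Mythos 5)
Your proposal is correct and follows essentially the same route as the paper's proof: pad the shorter network with identity layers to equalise depth, stack each pair of layers block-diagonally after resolving the step-size mismatch via positive homogeneity of $\mathrm{ReLU}$, and take $M=\mathrm{diag}(v_1v_1^\top,\,v_2v_2^\top)$. The only cosmetic difference is that you normalise both step sizes to $\tau=2$, whereas the paper rescales the smaller step to match the larger one; your variant is marginally cleaner since it avoids the paper's separate handling of zero step sizes.
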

The proof of this lemma is in Appendix \ref{app:firstProof}, and is based on the fact that the identity map on $\mathbb{R}^h$ belongs to $\mathcal{E}_{h,\sigma}$. We remark that, by Lemma \ref{lemma:linearGradient}, the linear map defined by $M$ belongs to $\mathcal{E}_{h_1+h_2,\sigma}$.

\begin{lemma}
Let $d\in\mathbb{N}$, $\mathcal{X}\subseteq\mathbb{R}^d$, and $\sigma=\mathrm{ReLU}$. The set $\mathcal{G}_{d,\sigma}(\mathcal{X},\mathbb{R})$ is a lattice.
\end{lemma}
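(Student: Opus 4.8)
The plan is to show that for any $f,g\in\mathcal{G}_{d,\sigma}(\mathcal{X},\mathbb{R})$ both $\max\{f,g\}$ and $\min\{f,g\}$ again lie in $\mathcal{G}_{d,\sigma}(\mathcal{X},\mathbb{R})$. Membership in $\mathcal{C}_1(\mathcal{X},\mathbb{R})$ is automatic, since a pointwise maximum or minimum of two $1$-Lipschitz scalar functions is again $1$-Lipschitz; the real content is to exhibit the required architectural form. I would start from the penultimate representation furnished by Lemma~\ref{lemma:secondToLastStep}, writing the stacked vector $\Psi(x)=M\circ\Phi_{\theta_L}\circ\cdots\circ\Phi_{\theta_1}\circ[Q_1;Q_2]\,x$ with $\Psi(x)=[f(x)v_1;\,g(x)v_2]$, where $v_1,v_2$ are unit vectors and $M$ is symmetric positive semi-definite with $\|M\|_2\le 1$, hence $M\in\mathcal{E}_{h_1+h_2,\sigma}$ by Lemma~\ref{lemma:linearGradient}. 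Everything producing $\Psi$ is then already a legal prefix of a $\mathcal{G}_{d,\sigma}$ network of common width $h_1+h_2$.

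Next I would append a single residual layer $\Phi_{\theta_{L+1}}\in\mathcal{E}_{h_1+h_2,\sigma}$ that converts this block vector into the two order statistics simultaneously. Concretely, take $W=\tfrac{1}{\sqrt2}\begin{bmatrix} v_1^\top & -v_2^\top\end{bmatrix}\in\mathbb{R}^{1\times(h_1+h_2)}$, bias $b=0$, and step size $\tau=2$. Using $\|v_1\|_2=\|v_2\|_2=1$ one gets $W\,\Psi(x)=\tfrac{1}{\sqrt2}(f(x)-g(x))$, and positive homogeneity of $\sigma$ gives $\sigma(W\Psi(x))=\tfrac{1}{\sqrt2}\sigma(f(x)-g(x))$. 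Substituting into the residual map $z\mapsto z-\tau W^\top\sigma(Wz)$ and invoking the ReLU identities $\min\{a,b\}=a-\sigma(a-b)$ and $\max\{a,b\}=b+\sigma(a-b)$ yields
\[
\Phi_{\theta_{L+1}}\begin{bmatrix} f(x)v_1 \\ g(x)v_2\end{bmatrix}=\begin{bmatrix}\bigl(f(x)-\sigma(f(x)-g(x))\bigr)v_1 \\ \bigl(g(x)+\sigma(f(x)-g(x))\bigr)v_2\end{bmatrix}=\begin{bmatrix}\min\{f(x),g(x)\}\,v_1 \\ \max\{f(x),g(x)\}\,v_2\end{bmatrix}.
\]

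It then remains to read off a scalar and to confirm admissibility. Choosing the unit vectors $w_{\min}=[v_1;0]$ and $w_{\max}=[0;v_2]$ gives $w_{\min}^\top\Phi_{\theta_{L+1}}(\Psi(x))=\min\{f(x),g(x)\}$ and $w_{\max}^\top\Phi_{\theta_{L+1}}(\Psi(x))=\max\{f(x),g(x)\}$. Each is a $\mathcal{G}_{d,\sigma}$ network of width $h_1+h_2$, with layers $\Phi_{\theta_1},\dots,\Phi_{\theta_L},M,\Phi_{\theta_{L+1}}$ all in $\mathcal{E}_{h_1+h_2,\sigma}$, affine input $[Q_1;Q_2]$, and unit output vector; combined with the $\mathcal{C}_1$ observation above, both belong to $\mathcal{G}_{d,\sigma}(\mathcal{X},\mathbb{R})$, so the set is a lattice. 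Admissibility of the new layer is immediate: $\|W\|_2=\tfrac{1}{\sqrt2}\bigl\|[v_1;-v_2]\bigr\|_2=1$ and $0\le\tau=2\le 2$.

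The step I expect to be the main obstacle is precisely the design of $\Phi_{\theta_{L+1}}$. The residual block has tied weights $W^\top\sigma(W\,\cdot)$ together with the spectral budget $\|W\|_2\le1$, and the obvious symmetric rank-one choice collapses to a purely linear map, by the same mechanism as in Lemma~\ref{lemma:linearGradient}, so it cannot produce the nonsmooth max or min. The resolution is to exploit the sign asymmetry in $W=\tfrac{1}{\sqrt2}\begin{bmatrix} v_1^\top & -v_2^\top\end{bmatrix}$: the single subtracted term $\sigma(f-g)$ then acts with opposite signs on the two blocks, delivering the minimum in one and the maximum in the other at once, while the rescaling by $\tfrac{1}{\sqrt2}$ and the choice $\tau=2$ keep the layer inside $\mathcal{E}_{h_1+h_2,\sigma}$.
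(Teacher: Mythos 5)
Your proposal is correct and follows essentially the same route as the paper's proof: invoke Lemma~\ref{lemma:secondToLastStep} together with Lemma~\ref{lemma:linearGradient} to absorb $M$ into a residual layer, then append one more layer in $\mathcal{E}_{h_1+h_2,\sigma}$ with $W=\tfrac{1}{\sqrt2}\begin{bmatrix}\pm v_1^\top & \mp v_2^\top\end{bmatrix}$, $\tau=2$, and read off the result with a unit vector supported on one block. The only (harmless) difference is a sign flip in $W$, which swaps which block carries the maximum versus the minimum, and that you extract both order statistics from a single construction where the paper treats the minimum as ``analogous.''
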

\begin{proof}
Let $f,g\in\mathcal{G}_{d,\sigma}(\mathcal{X},\mathbb{R})$. We show that $h:\mathcal{X}\to\mathbb{R}$ defined by $h(x)=\max\{f(x),g(x)\}$ belongs to $\mathcal{G}_{d,\sigma}(\mathcal{X},\mathbb{R})$ as well. Analogously, one can show that also $k(x)=\min\{f(x),g(x)\}$ belongs to the set, which is hence a lattice. Recall that since $f$ and $g$ are $1$-Lipschitz, $k$ and $h$ will be $1$-Lipschitz as well. Thus, we just have to check that $h$ and $k$ can be written as an element of the parametric set we intersect with $\mathcal{C}_1(\mathcal{X},\mathbb{R})$. Lemma \ref{lemma:secondToLastStep} allows us to write
\[
\begin{bmatrix}
    f(x)v_1 \\ g(x)v_2
\end{bmatrix} = \Phi_{\theta_{L+1}} \circ \Phi_{\theta_L}\circ ... \circ \Phi_{\theta_1} \circ \begin{bmatrix} Q_1 \\ Q_2  \end{bmatrix}x.
\]
We then define
\[
v = \begin{bmatrix} v_1 \\ 0_{h_2} \end{bmatrix},\,\,
W_{L+2} = \frac{1}{\sqrt{2}}\begin{bmatrix}
    -v_1^\top & v_2^\top 
\end{bmatrix}\in\mathbb{R}^{1\times (h_1+h_2)},\,\,b_{L+2} = 0_{h_1+h_2},\,\,\tau_{L+2}=2.
\]
We see that $\|W_{L+2}\|_2\leq 1$, and hence $\Phi_{\theta_{L+2}}(x)=x-\tau_{L+2}W_{L+2}^\top \sigma(W_{L+2}x+b_{L+2})$ belongs to $\mathcal{E}_{h_1+h_2,\sigma}$. Furthermore, we also notice that $\|v\|_2=1$, and that
\[
\begin{split}
&v^\top \circ \Phi_{\theta_{L+2}}\circ \Phi_{\theta_{L+1}} \circ \Phi_{\theta_L}\circ ... \circ \Phi_{\theta_1} \circ \begin{bmatrix} Q_1 \\ Q_2  \end{bmatrix}x = v^\top \circ \Phi_{\theta_{L+2}}\left(\begin{bmatrix}
    f(x)v_1 \\ g(x)v_2
\end{bmatrix} \right)\\
&=v^\top \left(\begin{bmatrix}
    f(x)v_1 \\ g(x)v_2
\end{bmatrix} + \begin{bmatrix} v_1 \\ -v_2 \end{bmatrix} \sigma(g(x)-f(x))\right)= \begin{bmatrix} v_1^\top 0_{h_2}^\top \end{bmatrix} \begin{bmatrix} v_1\max\{f(x),g(x)\} \\ v_2 \min\{f(x),g(x)\} \end{bmatrix} = h(x)
\end{split} 
\]
as desired.
\end{proof}
We have now proved that $\mathcal{G}_{d,\sigma}(\mathcal{X},\mathbb{R})$ is a lattice that separates the points of $\mathcal{X}$. Thus, it satisfies the universal approximation property for $\mathcal{C}_1(\mathcal{X},\mathbb{R})$.
\subsection{Proof of Theorem~\ref{thm:gradFreeInner} based on piecewise affine functions}\label{sec:pwlInG}

We now present a more constructive reasoning to prove Theorem \ref{thm:gradFreeInner}. This argument is based on showing that all the scalar, piecewise affine, $1$-Lipschitz functions over $\mathcal{X}$ belong to $\mathcal{G}_{d,\sigma}(\mathcal{X},\mathbb{R})$.
\begin{definition}
A continuous function $f:\mathbb{R}^d\to\mathbb{R}$ is piecewise affine if there exists a finite collection $\mathcal{P}$ of open, pairwise disjoint, connected sets of $\mathbb{R}^d$ with $\mathbb{R}^d = \displaystyle\cup_{P\in\mathcal{P}}\overline{P}$ and $f|_P:P\to\mathbb{R}$ is affine for every $P\in\mathcal{P}$. 
\end{definition}
\begin{lemma}[Theorem 4.1 in \cite{ovchinnikov2000max}]\label{lemma:maxMin}
Let $f:\mathbb{R}^d\to\mathbb{R}$ be a continuous piecewise affine function. Then, there exists a choice of scalars $b_{i,j}\in\mathbb{R}$ and vectors $a_{i,j}\in\mathbb{R}^d$ such that
\begin{equation}\label{eq:pwl}
f(x) = \max\{f_1(x),...,f_k(x)\},\,\,f_i(x) = \min\{a_{i,1}^\top x+b_{i,1},...,a_{i,l_i}^\top x+b_{i,l_i}\}.
\end{equation}
\end{lemma}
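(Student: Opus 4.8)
The plan is to prove the representation directly: for every base point I will exhibit a single ``inner minimum'' of affine pieces that touches $f$ at that point and never exceeds $f$ elsewhere, and then take the maximum over the finitely many such inner minima that can occur. First I would extract from the piecewise affine structure a finite list of distinct affine functions $\ell_1,\dots,\ell_m$ (the pieces) such that on the closure of every cell $P\in\mathcal P$ the restriction of $f$ equals one of the $\ell_j$; by continuity, for each $x\in\mathbb R^d$ there is at least one \emph{active} piece $\ell_k$ with $\ell_k(x)=f(x)$.

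For a fixed point $p$, I would set $S_p=\{\,j:\ell_j(p)\ge f(p)\,\}$ and define $f_p(x)=\min_{j\in S_p}\ell_j(x)$. The easy half is \emph{exactness at $p$}: the active piece lies in $S_p$ and every index in $S_p$ gives value at least $f(p)$ at $p$, so $f_p(p)=f(p)$. The crux is the \emph{global upper bound} $f_p(x)\le f(x)$ for all $x$. I would prove this by restricting to the segment $\gamma(t)=(1-t)p+tx$, $t\in[0,1]$: the map $h(t)=f(\gamma(t))$ is a scalar continuous piecewise affine function whose pieces are the affine functions $\mu_j(t)=\ell_j(\gamma(t))$, and by continuity these agree at the breakpoints. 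It then suffices to establish the one-dimensional statement: if $S=\{j:\mu_j(0)\ge h(0)\}$ then $\min_{j\in S}\mu_j(1)\le h(1)$, since $S$ coincides with $S_p$ and the conclusion reads $f_p(x)\le f(x)$.

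I expect this one-dimensional lemma to be the main obstacle, and I would prove it by induction on the number of breakpoints of $h$. If $t_1$ is the first breakpoint and $\mu_{j_1}$ is the piece active on $[0,t_1]$ (so $j_1\in S$ and $h(t_1)=\mu_{j_1}(t_1)$), the inductive hypothesis applied on $[t_1,1]$ with $\widetilde S=\{j:\mu_j(t_1)\ge h(t_1)\}$ gives $\min_{j\in\widetilde S}\mu_j(1)\le h(1)$. The key comparison is that any $j\in\widetilde S$ is either already in $S$, or satisfies $\mu_j(0)<\mu_{j_1}(0)$ together with $\mu_j(t_1)\ge\mu_{j_1}(t_1)$; the affine difference $\mu_j-\mu_{j_1}$ is then negative at $0$ and nonnegative at $t_1$, hence nondecreasing, so $\mu_j(1)\ge\mu_{j_1}(1)$ with $j_1\in S$. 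Either way every $\widetilde S$-piece is dominated at $t=1$ by some $S$-piece, so $\min_{j\in S}\mu_j(1)\le\min_{j\in\widetilde S}\mu_j(1)\le h(1)$, completing the induction.

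Finally I would assemble the representation. Because $S_p$ is a subset of $\{1,\dots,m\}$, it takes only finitely many values as $p$ ranges over $\mathbb R^d$, so $\{f_p\}$ is a finite family of min-of-affine functions. Exactness gives $f(x)=f_x(x)\le\max_p f_p(x)$, while the global bound gives $f_p(x)\le f(x)$ for every $p$, hence $\max_p f_p(x)\le f(x)$. Combining, $f=\max_p f_p$, which is precisely a maximum of minima of affine functions; relabelling the finitely many distinct index sets and the corresponding affine functions to match the form in \eqref{eq:pwl} yields the claim.
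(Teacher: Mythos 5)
You cannot be compared against an in-paper argument here, because the paper gives none: Lemma~\ref{lemma:maxMin} is imported verbatim as Theorem 4.1 of \cite{ovchinnikov2000max}, so supplying a proof is already a departure. What you have written is a correct, self-contained reconstruction that is essentially Ovchinnikov's own strategy: for each base point $p$ take $f_p=\min_{j\in S_p}\ell_j$ over the pieces dominating $f$ at $p$, show exactness at $p$ and global domination $f_p\leq f$ via a one-dimensional argument along segments, and conclude $f=\max_p f_p$, the maximum being finite because $S_p$ ranges over subsets of $\{1,\dots,m\}$. The individual steps check out: exactness at $p$ is immediate; in the induction, for $j\in\widetilde S\setminus S$ the affine function $\mu_j-\mu_{j_1}$ is negative at $0$ and nonnegative at $t_1>0$, hence has positive slope and is still nonnegative at $1$, so every $\widetilde S$-piece is dominated at $t=1$ by the $S$-piece $\mu_{j_1}$; and the assembly into a max of mins matches \eqref{eq:pwl}.

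One step deserves more than your one-line assertion: that $h=f\circ\gamma$ is piecewise affine with \emph{finitely many} breakpoints, each piece being one of the $\mu_j$. Your induction is on the number of breakpoints, so finiteness is what makes it well-founded, and it is not automatic from the paper's definition: the cells $P\in\mathcal{P}$ are merely open connected sets, so a segment can a priori cross cell boundaries infinitely often. The patch is short. Since $f$ agrees with $\ell_{j(P)}$ on each closure $\overline{P}$ and these closures cover $\mathbb{R}^d$, every $t\in[0,1]$ lies in some $E_j=\{t:h(t)=\mu_j(t)\}$, and each $E_j$ is closed by continuity; after merging indices whose $\mu_j$ coincide on the line, let $B$ be the set of parameters where two distinct $\mu_i,\mu_j$ agree, a set of at most $\binom{m}{2}$ points. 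On any connected component $I$ of $[0,1]\setminus B$ the sets $E_j\cap I$ are pairwise disjoint (a common point of two of them would lie in $B$), relatively closed, finitely many, and cover $I$; connectedness forces exactly one of them to be nonempty, so $h$ coincides with a single $\mu_j$ on $I$. Hence $h$ is piecewise affine with breakpoints contained in the finite set $B$, and your induction goes through. With this justification added, your proof is complete and correct.
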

We remark that if $f:\mathbb{R}^d\to\mathbb{R}$ is a continuous piecewise affine $1$-Lipschitz function, then necessarily the vectors $a_{i,j}\in\mathbb{R}^d$ appearing in \eqref{eq:pwl} satisfy $\|a_{i,j}\|_2\leq 1$. This is a consequence of Rademacher's Theorem \cite[Theorem 3.1.6]{federer2014geometric}, ensuring the almost everywhere differentiability of $f$, which implies that $\|\nabla f(x)\|_2\leq 1$ for almost every $x\in\mathbb{R}^d$.

We introduce a few fundamental results needed for such a constructive proof.

\begin{proposition}\label{prop:maxRepresentation}
The functions $\mathbb{R}^d\ni x\mapsto \max\{x_1,...,x_d\}=f(x)\in\mathbb{R}$ and $\mathbb{R}^d\ni x\mapsto \min\{x_1,...,x_d\}=g(x)\in\mathbb{R}$ belong to $\mathcal{G}_{d,\sigma}(\mathbb{R}^d,\mathbb{R})$ with $\sigma=\mathrm{ReLU}$. 
\end{proposition}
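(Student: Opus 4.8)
The plan is to prove membership of $f$ and $g$ in $\mathcal{G}_{d,\sigma}(\mathbb{R}^d,\mathbb{R})$ by an explicit construction. Since $x\mapsto\max_i x_i$ and $x\mapsto\min_i x_i$ are $1$-Lipschitz in the $\ell^2$ norm (their a.e.\ gradients are standard basis vectors, hence of unit $\ell^2$-norm), membership in $\mathcal{C}_1(\mathbb{R}^d,\mathbb{R})$ is automatic, and it remains only to exhibit a representation $v^\top\circ\Phi_{\theta_L}\circ\cdots\circ\Phi_{\theta_1}\circ Q$ satisfying all the constraints in the definition of $\mathcal{G}_{d,\sigma}$. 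The elementary building block will be a single residual layer in $\mathcal{E}_{h,\sigma}$ that performs a pairwise \emph{compare-and-swap}: it replaces a pair of coordinates $(z_i,z_j)$ by $(\min\{z_i,z_j\},\max\{z_i,z_j\})$ and fixes all others.

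First I would realise a single compare-and-swap. Using $\min\{a,b\}=a-\sigma(a-b)$ and $\max\{a,b\}=b+\sigma(a-b)$, the desired update on the pair $(i,j)$ is $z\mapsto z+(e_j-e_i)\,\sigma(z_i-z_j)$, which places the minimum in coordinate $i$ and the maximum in coordinate $j$. Writing this in the negative-gradient form $z-\tau W^\top\sigma(Wz)$ naively forces the row $W=(e_i-e_j)^\top$, whose spectral norm is $\sqrt2>1$ and thus violates the constraint $\|W\|_2\le1$ of $\mathcal{E}_{h,\sigma}$. The key step is to absorb this factor exactly as in Lemma~\ref{lemma:linearGradient}: set $W=\tfrac1{\sqrt2}(e_i-e_j)^\top$ so that $\|W\|_2=1$, take $b=0$ and $\tau=2$, and use the positive homogeneity of $\sigma=\mathrm{ReLU}$; then $\tau W^\top\sigma(Wz)=(e_i-e_j)\,\sigma(z_i-z_j)$, so $z-\tau W^\top\sigma(Wz)$ reproduces the compare-and-swap and this layer belongs to $\mathcal{E}_{h,\sigma}$.

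Next I would compose these layers into a reduction tree. Disjoint compare-and-swaps on pairs $(i_1,j_1),\dots,(i_m,j_m)$ can be packed into one residual layer whose weight matrix $W$ has rows $\tfrac1{\sqrt2}(e_{i_r}-e_{j_r})^\top$; because the supports are disjoint these rows are orthonormal, so $WW^\top=I_m$ and therefore $\|W\|_2=1$, while the coordinate updates do not interfere. After padding the state dimension to $h=2^{\lceil\log_2 d\rceil}$ by duplicating one input coordinate through the affine lift $Q$ (which changes neither the maximum nor the minimum), a balanced binary tournament of $\lceil\log_2 h\rceil$ such layers funnels the global maximum into a fixed coordinate $p$; choosing $v=e_p$, so that $\|v\|_2=1$, reads it off and yields $f$. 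Running the same construction while tracking the minimum places the global minimum in a fixed coordinate and yields $g$.

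The main obstacle is structural rather than computational: layers in $\mathcal{E}_{h,\sigma}$ are constrained both to the symmetric form $z-\tau W^\top\sigma(Wz+b)$ and to $\|W\|_2\le1$, so a comparison cannot be encoded by an arbitrary affine map. The decisive observations are that the $\tfrac1{\sqrt2}$ rescaling can be undone by taking $\tau=2$ (the largest admissible step) together with the homogeneity of $\mathrm{ReLU}$, and that parallelising only \emph{disjoint} swaps keeps the rows of $W$ orthonormal and hence preserves $\|W\|_2\le1$. Once these two points are in place, the tournament depth $\lceil\log_2 h\rceil$ and the unit-norm read-out vector complete the representation.
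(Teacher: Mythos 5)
Your proof is correct, and its elementary building block is exactly the paper's: a rank-one negative-gradient step with row $\tfrac{1}{\sqrt{2}}(e_i-e_j)^\top$, zero bias and step $\tau=2$, which by positive homogeneity of $\mathrm{ReLU}$ realises a pairwise compare-and-swap while respecting $\|W\|_2\le 1$ (the paper's $W_1=\begin{bmatrix}-1/\sqrt{2} & 1/\sqrt{2} & 0 &\dots\end{bmatrix}$ is this very layer). Where you genuinely differ is the composition scheme. The paper performs a sequential scan: coordinate $1$ carries a running maximum, and each of $d-1$ layers, each with a single-row weight matrix, compares it against one further coordinate; the lift is simply $Q=I_d$, with no padding and no orthogonality argument. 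You instead pack disjoint swaps into a single layer, noting that disjoint supports make the rows of $W$ orthonormal, so $WW^\top=I_m$ and $\|W\|_2=1$, pad the width to a power of two through the norm-unconstrained lift $Q$, and run a balanced tournament of depth $\lceil\log_2 d\rceil$ instead of $d-1$. Since $\mathcal{G}_{d,\sigma}(\mathbb{R}^d,\mathbb{R})$ places no bound on depth or width, this logarithmic saving is not needed for the proposition, but it is a genuine quantitative improvement, and your parallel-packing observation is close in spirit to the block-diagonal weight matrices the paper itself uses later (in Lemma~\ref{lemma:secondToLastStep} and the proof of Theorem~\ref{thm:pwl1Lipschitz}) to assemble several computations in one residual layer.
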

\begin{proof}
We focus on $f$, and the reasoning for $g$ is analogous. The map
\begin{equation}\label{eq:maxminMap}
x\mapsto \begin{bmatrix}
    \max\{x_1,x_2\} & \min\{x_1,x_2\} & x_3 & \dots & x_d
\end{bmatrix}^\top
\end{equation}
can be realised as $\Phi_{\theta_1}(x)=x-2W_1^\top \sigma(W_1 x)$, where
\[
W_1 = \begin{bmatrix}
    -1/\sqrt{2} & 1/\sqrt{2} & 0 & \dots & 0
\end{bmatrix}\in\mathbb{R}^{1\times d}.
\]
Given that $\max\{x_1,x_2,x_3\} = \max\{\max\{x_1,x_2\},x_3\}$, it follows that choosing
\[
W_2 = \begin{bmatrix}
    -1/\sqrt{2} & 0 & 1/\sqrt{2} & 0 & \dots & 0
\end{bmatrix}\in\mathbb{R}^{1\times d},
\]
one has that $\Phi_{\theta_1}(x) -2 W_2^\top \sigma(W_2 \Phi_{\theta_1}(x))$ takes the form
\[
\begin{bmatrix}
    \max\{x_1,x_2,x_3\} & \min\{x_1,x_2\} & \min\{x_3,\max\{x_1,x_2\}\} & x_4 & \dots & x_d
\end{bmatrix}^\top.
\]
We can thus call $\Phi_{\theta_2}(x)=x-2W_2^\top \sigma(W_2 x)$. The argument continues up to when, setting $v=e_1$, the first vector of the canonical basis of $\mathbb{R}^d$, we get
\[
v^\top\circ \Phi_{\theta_{d-1}} \circ ... \circ \Phi_{\theta_1} (x) = \max\{x_1,...,x_d\} = f(x)
\]
as desired. We remark that, in this case, $Q=I_d$ and $h=d$.
\end{proof}

This analysis, together with Proposition \ref{prop:maxRepresentation}, implies that scalar, piecewise affine, $1-$Lipschitz functions all belong to $\mathcal{G}_{d,\sigma}(\mathbb{R}^d,\mathbb{R})$, as we formalise in the following theorem.
\begin{theorem}\label{thm:pwl1Lipschitz}
Any piecewise affine 1-Lipschitz function $f:\mathbb{R}^d\to\mathbb{R}$ can be represented by a network in $\mathcal{G}_{d,\sigma}(\mathbb{R}^d,\mathbb{R})$ with $\sigma=\mathrm{ReLU}$.
\end{theorem}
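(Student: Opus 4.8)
The plan is to realise $f$ \emph{exactly} as a composition of an affine lift, a sequence of residual layers from $\mathcal{E}_{h,\sigma}$, and a unit-norm linear projection, exploiting the $\max$--$\min$ normal form of Lemma~\ref{lemma:maxMin}. First I would invoke Lemma~\ref{lemma:maxMin} to write $f(x)=\max_{1\le i\le k}\min_{1\le j\le l_i}(a_{i,j}^\top x+b_{i,j})$, and use the Rademacher remark following it to record that $\|a_{i,j}\|_2\le 1$, so that each affine piece $\ell_{i,j}(x)=a_{i,j}^\top x+b_{i,j}$ is $1$-Lipschitz. Setting $N=\sum_i l_i$, I would then define the affine lift $Q:\mathbb{R}^d\to\mathbb{R}^{N}$, $Q(x)=\widehat{Q}x+\widehat{q}$, whose rows are the $a_{i,j}^\top$ and whose offset $\widehat{q}$ collects the $b_{i,j}$, so that the coordinates of $Q(x)$ are precisely the values $\ell_{i,j}(x)$. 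Crucially, $Q$ itself need not be $1$-Lipschitz: the definition of $\mathcal{G}_{d,\sigma}$ leaves $\widehat{Q}$ unconstrained and only intersects with $\mathcal{C}_1(\mathbb{R}^d,\mathbb{R})$, so the sole requirement is that the \emph{overall} composition be $1$-Lipschitz, which will hold automatically because it will equal $f$.

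The heart of the construction is to implement the nested $\max$--$\min$ purely with residual layers, reusing the elementary move from the proof of Proposition~\ref{prop:maxRepresentation}. For a pair of coordinates $p,q$, the single layer $x\mapsto x-2W^\top\sigma(Wx)$ with $W=\tfrac{1}{\sqrt{2}}(e_q-e_p)^\top$ leaves every other coordinate untouched and overwrites coordinates $p$ and $q$ by $\max\{x_p,x_q\}$ and $\min\{x_p,x_q\}$ respectively; since $\|W\|_2=1$ and $\tau=2$, this layer lies in $\mathcal{E}_{N,\sigma}$, and swapping $p$ and $q$ lets me place the min or the max in whichever coordinate I choose. I would first perform, for each fixed $i$, a sequence of such pairwise-$\min$ layers acting only on the block $\{(i,j):1\le j\le l_i\}$, folding it into a single coordinate that then holds $f_i(x)=\min_j \ell_{i,j}(x)$; because distinct blocks occupy disjoint coordinates, these reductions do not interfere. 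I would then fold the $k$ resulting coordinates together with a sequence of pairwise-$\max$ layers, so that one designated coordinate ends up holding $\max_i f_i(x)=f(x)$. Choosing $v$ to be the corresponding canonical basis vector gives $\|v\|_2=1$ and $v^\top\circ\Phi_{\theta_L}\circ\cdots\circ\Phi_{\theta_1}\circ Q=f$; as $f\in\mathcal{C}_1(\mathbb{R}^d,\mathbb{R})$ by hypothesis, this composition belongs to $\mathcal{G}_{d,\sigma}(\mathbb{R}^d,\mathbb{R})$.

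I expect the main obstacle to be organisational rather than conceptual: keeping track of which coordinate carries each partial result through the two stages of reduction, and confirming that every elementary layer genuinely stays in $\mathcal{E}_{N,\sigma}$ (that is, $b=0$, $\tau\in[0,2]$, and $\|W\|_2\le 1$). An alternative, less explicit route would bypass the max-reduction over blocks altogether: each $f_i$ is a min of affine $1$-Lipschitz functions, hence lies in $\mathcal{G}_{d,\sigma}(\mathbb{R}^d,\mathbb{R})$ by Proposition~\ref{prop:maxRepresentation} precomposed with the block affine lift (which only changes the input map $Q$), and then $f=\max_i f_i$ follows from the lattice property already established for $\mathcal{G}_{d,\sigma}(\mathbb{R}^d,\mathbb{R})$ by a short induction on $k$. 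I would present the direct construction as the main argument, since it is exactly the structural insight this section is meant to provide, and record the lattice shortcut as a remark.
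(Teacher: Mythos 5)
Your proposal is correct and follows essentially the same route as the paper's proof: the $\max$--$\min$ normal form of Lemma~\ref{lemma:maxMin} with $\|a_{i,j}\|_2\le 1$ via Rademacher, the affine lift $Q$ stacking all affine pieces, pairwise $\max$/$\min$ residual layers as in Proposition~\ref{prop:maxRepresentation} to fold each block and then the blocks together, a canonical-basis projection $v$, and the observation that membership in $\mathcal{G}_{d,\sigma}(\mathbb{R}^d,\mathbb{R})$ needs only the overall map (which equals $f$) to be $1$-Lipschitz, not $Q$. The only cosmetic difference is that the paper performs the per-block reductions in parallel via block-diagonal weight matrices, yielding the depth bound $L \leq (k-1)+(\max\{l_1,\dots,l_k\}-1)$, whereas your sequential folding gives the same representation with larger depth.
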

See Appendix~\ref{app:alt:thm:gradFreeInner} for the proof. A proof of Theorem \ref{thm:gradFreeInner} then follows from the universal approximation property of piecewise affine $1$-Lipschitz maps defined on a compact set $\mathcal{X}$ in $\mathcal{C}_1(\mathcal{X},\mathbb{R})$. 
\begin{lemma}\label{lemma:densityPWL}
The set of piecewise affine $1$-Lipschitz functions over $\mathcal{X}\subset\mathbb{R}^d$, a compact set, satisfies the universal approximation property for $\mathcal{C}_1(\mathcal{X},\mathbb{R})$.
\end{lemma}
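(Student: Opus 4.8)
The plan is to construct, for any $f\in\mathcal{C}_1(\mathcal{X},\mathbb{R})$ and any $\varepsilon>0$, an explicit piecewise affine $1$-Lipschitz function $g$ with $\max_{x\in\mathcal{X}}|f(x)-g(x)|<\varepsilon$. The naive strategy of interpolating $f$ linearly on a fine triangulation of a box containing $\mathcal{X}$ fails: even when $f$ is $1$-Lipschitz, the gradient of the simplicial interpolant can exceed $1$ on badly shaped simplices, so the interpolant need not lie in $\mathcal{C}_1(\mathcal{X},\mathbb{R})$. I therefore build the approximant from a McShane-type infimum formula, which keeps the slope of every affine piece bounded by $1$ by construction.

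First I would recall the McShane extension $x\mapsto\inf_{y\in\mathcal{X}}\{f(y)+\|x-y\|_2\}$, which is a $1$-Lipschitz extension of $f$, and discretise it: fix a finite $\delta$-net $\{y_1,\dots,y_N\}\subset\mathcal{X}$ (which exists by compactness) and consider $\min_i\{f(y_i)+\|x-y_i\|_2\}$. A short computation using the $1$-Lipschitz property of $f$ shows this sandwiches $f$ within $2\delta$ on $\mathcal{X}$. The only defect is that the Euclidean norm is not piecewise affine.

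The key step is to replace $\|\cdot\|_2$ by a polyhedral under-approximation. Writing $\|z\|_2=\max_{\|u\|_2=1}u^\top z$, I fix a finite $\eta$-net $\{u_1,\dots,u_K\}$ of the unit sphere and set $N(z)=\max_k u_k^\top z$. Then $N$ is convex piecewise affine, satisfies $(1-\eta)\|z\|_2\le N(z)\le\|z\|_2$, and is $1$-Lipschitz because every affine piece has slope $u_k$ with $\|u_k\|_2=1$. Defining
\[
g(x)=\min_{1\le i\le N}\bigl\{f(y_i)+N(x-y_i)\bigr\},
\]
$g$ is a minimum of finitely many functions each of the form ``constant plus max of affine maps with $\ell^2$-slope $1$'', hence $g$ is continuous piecewise affine and, being a min of maxes of $1$-Lipschitz affine functions, lies in $\mathcal{C}_1(\mathcal{X},\mathbb{R})$. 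This places $g$ in exactly the max-min form of \eqref{eq:pwl} with $\|a_{i,j}\|_2\le 1$.

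It remains to estimate the error. Let $D=\max_{x,y\in\mathcal{X}}\|x-y\|_2$ (the single-point case being trivial). Choosing $y_{i^*}$ within $\delta$ of $x$ gives $g(x)\le f(y_{i^*})+N(x-y_{i^*})\le f(x)+2\delta$, while for every $i$ the bound $N(x-y_i)\ge\|x-y_i\|_2-\eta D$ together with $f(y_i)\ge f(x)-\|x-y_i\|_2$ yields $g(x)\ge f(x)-\eta D$. Hence $|g(x)-f(x)|\le\max\{2\delta,\eta D\}$ on $\mathcal{X}$, and taking $\delta<\varepsilon/2$ and $\eta<\varepsilon/D$ finishes the proof. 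I expect the only genuine obstacle to be recognising that ordinary interpolation breaks the norm constraint and that the McShane min-formula with a polyhedral norm repairs it; the remaining estimates are routine. As an alternative, one could observe that the set of continuous piecewise affine $1$-Lipschitz functions is a lattice separating the points of $\mathcal{X}$ (affine maps of $\ell^2$-slope $\le1$ already separate points) and invoke Theorem~\ref{thm:stoneW}, but this would make the present constructive route logically dependent on the Stone--Weierstrass argument it is meant to bypass.
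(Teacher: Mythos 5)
Your proof is correct, but it takes a genuinely different route from the paper's. The paper proves the lemma only for $\mathcal{X}$ a compact \emph{convex polytope}: it covers $\mathcal{X}$ by small simplices, invokes the construction of a $1$-Lipschitz piecewise affine vertex interpolant from \cite[Proof of Proposition 2.2]{neumayer2023approximation} on each simplex, glues the pieces, and then needs a line-segment argument (using convexity of $\mathcal{X}$) to show the glued function is globally $1$-Lipschitz. Your McShane-type construction $g(x)=\min_i\{f(y_i)+N(x-y_i)\}$, with $N$ a polyhedral under-approximation of $\|\cdot\|_2$ built from an $\eta$-net of the unit sphere, avoids both the convexity assumption and the external citation: it works for an arbitrary compact $\mathcal{X}$, the $1$-Lipschitz property is immediate because every affine piece has a unit-norm slope $u_k$, and the error analysis ($f(x)-\eta D\le g(x)\le f(x)+2\delta$) is elementary and complete. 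You also correctly diagnose why naive simplicial interpolation fails (the very issue the paper's cited construction is designed to fix). What the paper's approach buys in exchange is locality — the interpolant matches $f$ exactly at vertices and adapts to a triangulation — whereas your construction is global and its piece count scales with the size of the sphere net, which is immaterial for a density statement. One cosmetic inaccuracy: your $g$ is a min of maxes of affine maps, not literally the max-of-mins form of \eqref{eq:pwl}; this is harmless, since the lemma only requires $g$ to be piecewise affine and $1$-Lipschitz (and Lemma~\ref{lemma:maxMin} converts between the two representations anyway). Your closing remark that one could instead verify the lattice and point-separation hypotheses and invoke Theorem~\ref{thm:stoneW} matches the paper's own observation preceding its constructive proof.
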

To prove this lemma, one could use the Restricted Stone-Weierstrass Theorem, since the set of piecewise affine $1$-Lipschitz functions is a lattice separating points. We provide a more explicit and direct proof of Lemma \ref{lemma:densityPWL} for the case $\mathcal{X}$ is a convex polytope in Appendix \ref{app:alt:thm:gradFreeInner}.

\section{Density with fixed width and unbounded depth}
\label{sec:second}

Theorem \ref{thm:gradFreeInner} ensures that it is possible to approximate to arbitrary accuracy any scalar $1$-Lipschitz function over a compact set $\mathcal{X}$ by using ResNets relying on negative gradient steps. This result is informative but it has two drawbacks: (i) the elements of $\mathcal{G}_{d,\sigma}(\mathcal{X},\mathbb{R})$ do not have explicit constraints on the affine lifting layer $Q$, making them challenging to implement, (ii) there is no control neither on the depth nor on the width of the networks in $\mathcal{G}_{d,\sigma}(\mathcal{X},\mathbb{R})$. We now address these limitations by providing a second set of $1$-Lipschitz networks, which are easier to implement and have fixed width.

For the derivations in this section, we need to introduce two sets of suitably constrained affine maps. Let $k\in\mathbb{N}$, fix a vector $m\in\mathbb{N}^k$, and call $\alpha_m=\|m\|_1=m_1+...+m_k$. We define
\begin{align*}
\widetilde{\mathcal{L}}_{m} &= \Bigl\{A\in\mathbb{R}^{\alpha_m\times \alpha_m}\Bigm|\, A = \begin{bmatrix}
    A_{11} & ... & A_{1k} \\ 
    \vdots & \ddots & \vdots \\
    A_{k1} & ... & A_{kk}
\end{bmatrix},\,A_{ij}\in\mathbb{R}^{m_i\times m_j},\,\sum_{j=1}^k \|A_{ij}\|_2\leq 1,\,i=1,...,k\Bigr\},\\
\mathcal{L}_m &= \Bigl\{A:\mathbb{R}^{\alpha_m}\to \mathbb{R}^{\alpha_m}\Bigm|\, \exists \widehat{A}\in\widetilde{\mathcal{L}}_m,\,\,\widehat{a}\in\mathbb{R}^{\alpha_m}:\,A(u)=\widehat{A}u+\widehat{a},\,\forall u\in\mathbb{R}^{\alpha_m}\Bigr\}, \\
\widetilde{\mathcal{R}}_{d,m} &= \Bigl\{B\in\mathbb{R}^{\alpha_m\times d}\Bigm|\,B = \begin{bmatrix} B_1 \\ \vdots \\ B_k \end{bmatrix},\,B_i\in\mathbb{R}^{m_i\times d},\,\|B_i\|_2\leq 1,\,i=1,...,k\Bigr\},\\
\mathcal{R}_{d,m} &= \Bigl\{Q:\mathbb{R}^{d}\to\mathbb{R}^{\alpha_m}\Bigm|\,\exists \widehat{Q}\in\widetilde{\mathcal{R}}_{d,m},\,\widehat{q}\in\mathbb{R}^{\alpha_m}:\,Q(x)=\widehat{Q}x+\widehat{q},\,\forall x\in\mathbb{R}^d\Bigr\}.
\end{align*}
We also extend the set of functions $\mathcal{E}_{h,\sigma}$ to a subset of $\mathcal{E}_{h+3,\sigma}$ as follows
\[
\begin{split}
\widetilde{\mathcal{E}}_{h,\sigma} = \Bigl\{\Phi_{\theta}:\mathbb{R}^{h+3}\to\mathbb{R}^{h+3}\,\Bigm|&\, \Phi_\theta(x)=\begin{bmatrix} \max\{x_1,x_2\} \\ \min\{x_1,x_2\} \\ x_3 \\ \widetilde{\Phi}_\theta(x_{4:})\end{bmatrix},\,\,\widetilde{\Phi}_\theta\in\mathcal{E}_{h,\sigma}\Bigr\},
\end{split}
\]
where, for $x\in\mathbb{R}^{h+3}$, $x_{4:}\in\mathbb{R}^h$ denotes a vector coinciding with $x$ to which the first three entries are removed. We remark that the first two components of the functions in $\widetilde{\mathcal{E}}_{h,\sigma}$ resemble the $\mathrm{MaxMin}$ activation in \cite{anil2019sorting} or the Orthogonal Permutation Linear Unit in \cite{chernodub2016norm}.

\begin{lemma}\label{lemma:subset}
Let $h\in\mathbb{N}$ and $\sigma=\mathrm{ReLU}$. The set $\widetilde{\mathcal{E}}_{h,\sigma}$ is a subset of $\mathcal{E}_{h+3,\sigma}$.
\end{lemma}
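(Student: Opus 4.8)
The plan is to show that every element of $\widetilde{\mathcal{E}}_{h,\sigma}$ can be written as a single residual layer of the form \eqref{eq:1LipRes} acting on $\mathbb{R}^{h+3}$, with weights of spectral norm at most one and step size $\tau$ in $[0,2]$, thereby placing it in $\mathcal{E}_{h+3,\sigma}$. The key observation is that the map $\Phi_\theta$ in $\widetilde{\mathcal{E}}_{h,\sigma}$ is block-structured: it applies the $\mathrm{MaxMin}$ operation to the pair $(x_1,x_2)$, leaves $x_3$ untouched, and applies some $\widetilde{\Phi}_\theta \in \mathcal{E}_{h,\sigma}$ to the last $h$ coordinates. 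Since $\mathcal{E}_{h+3,\sigma}$ consists of maps of the single residual form, I need to exhibit one weight matrix $W$ and step $\tau$ realising this whole block map at once.

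The natural approach is to build $W$ in block-diagonal form so that the residual update decouples across the three blocks. First I would recall, exactly as in the proof of Proposition~\ref{prop:maxRepresentation}, that the $\mathrm{MaxMin}$ map on $(x_1,x_2)$ is realised by $x - 2\,w^\top\sigma(wx)$ with the single row $w = \begin{bmatrix} -1/\sqrt{2} & 1/\sqrt{2}\end{bmatrix}$, which has $\|w\|_2 = 1$; note this uses step size $\tau = 2$. Second, since $\widetilde{\Phi}_\theta \in \mathcal{E}_{h,\sigma}$, by definition it has the form $u \mapsto u - \widetilde{\tau}\,\widetilde{W}^\top\sigma(\widetilde{W}u + \widetilde{b})$ with $\|\widetilde{W}\|_2 \le 1$ and $0 \le \widetilde{\tau} \le 2$. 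The obstacle is that the two blocks use possibly different step sizes, so I cannot simply concatenate the weight rows and share a single $\tau$. I would resolve this by absorbing the step size into the weights: one can always rescale so that $\tau = 2$ throughout, replacing $\widetilde{W}$ by $\sqrt{\widetilde{\tau}/2}\,\widetilde{W}$, which keeps the spectral norm bounded by one because $\widetilde{\tau}/2 \le 1$. The identity block on $x_3$ contributes no rows to $W$ (a zero update).

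Concretely, I would define
\[
W = \begin{bmatrix} w & 0 \\ 0 & \sqrt{\widetilde{\tau}/2}\,\widetilde{W} \end{bmatrix}, \qquad b = \begin{bmatrix} 0 \\ \widetilde{b}\end{bmatrix}, \qquad \tau = 2,
\]
with the zero columns aligned so that the first block acts on $(x_1,x_2)$, the $x_3$ coordinate receives no weight, and the second block acts on $x_{4:}$. Then $\Phi_\theta(x) = x - 2\,W^\top\sigma(Wx + b)$ reproduces the $\mathrm{MaxMin}$ action on the first block, leaves $x_3$ fixed, and on the last block gives $x_{4:} - 2\cdot(\widetilde{\tau}/2)\,\widetilde{W}^\top\sigma(\widetilde{W}x_{4:} + \widetilde{b}) = x_{4:} - \widetilde{\tau}\,\widetilde{W}^\top\sigma(\widetilde{W}x_{4:}+\widetilde{b}) = \widetilde{\Phi}_\theta(x_{4:})$, as required.

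The last thing to verify is the spectral-norm constraint $\|W\|_2 \le 1$. Because $W$ is block-diagonal, its spectral norm is the maximum of the block spectral norms; the first block has norm $\|w\|_2 = 1$ and the second has norm $\sqrt{\widetilde{\tau}/2}\,\|\widetilde{W}\|_2 \le 1$, so $\|W\|_2 \le 1$. With $\tau = 2 \in [0,2]$ and $k = h+1 \in \mathbb{N}$ rows, the resulting $\Phi_\theta$ satisfies all the defining conditions of $\mathcal{E}_{h+3,\sigma}$, which completes the argument. I expect the only genuinely delicate point to be the step-size rescaling, since one must confirm that folding $\widetilde{\tau}$ into the weights preserves both the exact value of $\widetilde{\Phi}_\theta$ and the norm bound simultaneously; everything else is bookkeeping on the block structure.
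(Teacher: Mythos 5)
Your construction follows exactly the same route as the paper's proof (block weight matrix with the $\mathrm{MaxMin}$ row $\begin{bmatrix}-1/\sqrt{2} & 1/\sqrt{2}\end{bmatrix}$ acting on $(x_1,x_2)$, no rows touching $x_3$, a rescaled $\widetilde{W}$ on $x_{4:}$, and $\tau=2$), and your spectral-norm argument is fine. However, there is a genuine error in precisely the step you flagged as delicate: you rescale the weight matrix, $\widetilde{W}\mapsto\sqrt{\widetilde{\tau}/2}\,\widetilde{W}$, but leave the bias $\widetilde{b}$ unscaled. With your $W$, $b$, and $\tau=2$, the update on the last block is
\[
x_{4:} - 2\sqrt{\widetilde{\tau}/2}\;\widetilde{W}^\top\sigma\Bigl(\sqrt{\widetilde{\tau}/2}\,\widetilde{W}x_{4:} + \widetilde{b}\Bigr),
\]
and positive homogeneity of $\mathrm{ReLU}$, $\sigma(\gamma z)=\gamma\sigma(z)$ for $\gamma\geq 0$, only applies when the \emph{entire} argument is scaled by $\gamma$. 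Since your bias is not scaled, $\sigma(\sqrt{\widetilde{\tau}/2}\,\widetilde{W}x_{4:}+\widetilde{b})\neq\sqrt{\widetilde{\tau}/2}\,\sigma(\widetilde{W}x_{4:}+\widetilde{b})$ in general, so your claimed identity with $\widetilde{\Phi}_\theta(x_{4:})=x_{4:}-\widetilde{\tau}\widetilde{W}^\top\sigma(\widetilde{W}x_{4:}+\widetilde{b})$ fails whenever $\widetilde{b}\neq 0$ and $\widetilde{\tau}\neq 2$. For a concrete failure take $h=1$, $\widetilde{W}=1$, $\widetilde{b}=1$, $\widetilde{\tau}=1/2$, $x_4=0$: your layer subtracts $2\cdot\tfrac12\cdot\sigma(1)=1$, whereas $\widetilde{\Phi}_\theta$ subtracts $\tfrac12\cdot\sigma(1)=\tfrac12$.

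The repair is exactly what the paper does: set $\gamma=\sqrt{\widetilde{\tau}/2}$ and scale the bias as well, $b=\begin{bmatrix}0 & 0 & 0 & \gamma\widetilde{b}^\top\end{bmatrix}^\top$, so that $\sigma(\gamma\widetilde{W}x_{4:}+\gamma\widetilde{b})=\gamma\sigma(\widetilde{W}x_{4:}+\widetilde{b})$ and the update becomes $2\gamma^2\widetilde{W}^\top\sigma(\widetilde{W}x_{4:}+\widetilde{b})=\widetilde{\tau}\,\widetilde{W}^\top\sigma(\widetilde{W}x_{4:}+\widetilde{b})$ as required. One further minor slip: the definition of $\mathcal{E}_{h,\sigma}$ allows $\widetilde{W}\in\mathbb{R}^{h'\times h}$ with arbitrary $h'\in\mathbb{N}$, so your $W$ has $h'+1$ rows, not necessarily $h+1$; this is harmless since the number of rows $k$ is unconstrained in the definition of $\mathcal{E}_{h+3,\sigma}$, but the count $k=h+1$ as you state it is not justified.
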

See Appendix \ref{app:thm:gradFixedInner} for the proof. 

Fix $h\geq 3$. We now consider the set
\begin{align*}
\widetilde{\mathcal{G}}_{d, \sigma, h}(\mathcal{X}, \mathbb{R})
:= \Bigl\{&v^\top \circ \Phi_{\theta_{L}} \circ A_{L-1}\circ \cdots \circ\Phi_{\theta_2}\circ A_{1}\circ \Phi_{\theta_{1}} \circ Q : \mathcal{X}\to\mathbb{R} \Bigm|\, m=(1,1,1,h-3),
\\
& Q \in \mathcal{R}_{d,m},v \in \mathbb{R}^{h},\|v\|_1\leq 1, A_1,...,A_{L-1}\in\mathcal{L}_{m},
\Phi_{\theta_{\ell}}\in\widetilde{\mathcal{E}}_{h-3,\sigma},\,L\in\mathbb{N}
\Bigl\}.
\end{align*}

\begin{lemma}
\label{lem:tilde-G-1-Lip}
Let $\sigma=\mathrm{ReLU}$, $d,h\in\mathbb{N}$, with $h\geq 3$. All the functions in $\widetilde{\mathcal{G}}_{d,\sigma,h}(\mathbb{R}^d,\mathbb{R})$ are $1$-Lipschitz.
\end{lemma}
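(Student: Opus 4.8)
The plan is to equip the hidden space $\mathbb{R}^{\alpha_m}=\mathbb{R}^{h}$ (recall $\alpha_m=1+1+1+(h-3)=h$ for $m=(1,1,1,h-3)$) with an auxiliary norm adapted to the block partition induced by $m$, and then to check that \emph{every} factor in the composition defining an element of $\widetilde{\mathcal{G}}_{d,\sigma,h}(\mathbb{R}^d,\mathbb{R})$ is $1$-Lipschitz with respect to it. Writing a vector $u\in\mathbb{R}^{\alpha_m}$ in blocks $u=(u^{(1)},\dots,u^{(k)})$ with $u^{(i)}\in\mathbb{R}^{m_i}$, I would use the mixed norm
\[
\|u\|_{m,\infty}=\max_{1\le i\le k}\|u^{(i)}\|_2 .
\]
Since the Lipschitz constant of a composition is bounded by the product of the individual Lipschitz constants, it suffices to show that $Q$ is $1$-Lipschitz from $(\mathbb{R}^d,\|\cdot\|_2)$ to $(\mathbb{R}^h,\|\cdot\|_{m,\infty})$, that each $\Phi_{\theta_\ell}$ and each $A_i$ is $1$-Lipschitz from $(\mathbb{R}^h,\|\cdot\|_{m,\infty})$ to itself, and that the readout $v^\top$ is $1$-Lipschitz from $(\mathbb{R}^h,\|\cdot\|_{m,\infty})$ to $(\mathbb{R},|\cdot|)$.

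For the interior maps, the affine layer $A_i(u)=\widehat{A}u+\widehat{a}$ with $\widehat{A}\in\widetilde{\mathcal{L}}_m$ is handled by the triangle inequality: the $i$-th output block of $\widehat{A}(u-w)$ is $\sum_j A_{ij}(u^{(j)}-w^{(j)})$, whose $\ell^2$ norm is at most $\bigl(\sum_j\|A_{ij}\|_2\bigr)\max_j\|u^{(j)}-w^{(j)}\|_2\le\|u-w\|_{m,\infty}$, using exactly the row constraint $\sum_j\|A_{ij}\|_2\le 1$ built into $\widetilde{\mathcal{L}}_m$. For $\Phi_{\theta_\ell}\in\widetilde{\mathcal{E}}_{h-3,\sigma}$ I would treat the four output blocks separately: the last block $\widetilde{\Phi}_\theta\in\mathcal{E}_{h-3,\sigma}$ is $\ell^2$-nonexpansive by Proposition~\ref{prop:nonexp} (consistent with Lemma~\ref{lemma:subset}); the third block is the identity; and the first two scalar blocks, $\max\{x_1,x_2\}$ and $\min\{x_1,x_2\}$, are $1$-Lipschitz in the $\ell^\infty$ distance of $(x_1,x_2)$, i.e. $|\max\{x_1,x_2\}-\max\{y_1,y_2\}|\le\max\{|x_1-y_1|,|x_2-y_2|\}$, and likewise for the minimum. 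In each case the block difference is bounded by $\|x-y\|_{m,\infty}$, so taking the maximum over blocks gives $\|\Phi_{\theta_\ell}(x)-\Phi_{\theta_\ell}(y)\|_{m,\infty}\le\|x-y\|_{m,\infty}$. The point of the norm $\|\cdot\|_{m,\infty}$ is precisely that the first two blocks of $\Phi$ mix coordinates across blocks $1$ and $2$, which is compatible with an $\ell^\infty$-type norm across blocks but not with a global $\ell^2$ norm.

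For the two endpoints, the lifting $Q(x)=\widehat{Q}x+\widehat{q}$ with $\widehat{Q}\in\widetilde{\mathcal{R}}_{d,m}$ satisfies $\|(\widehat{Q}(x-y))^{(i)}\|_2=\|B_i(x-y)\|_2\le\|B_i\|_2\|x-y\|_2\le\|x-y\|_2$ for every $i$, hence $\|Q(x)-Q(y)\|_{m,\infty}\le\|x-y\|_2$. For the readout, the functional $u\mapsto v^\top u=\sum_i (v^{(i)})^\top u^{(i)}$ obeys $|v^\top u|\le\sum_i\|v^{(i)}\|_2\,\|u^{(i)}\|_2\le\bigl(\sum_i\|v^{(i)}\|_2\bigr)\|u\|_{m,\infty}$, so it is $1$-Lipschitz as soon as $\sum_i\|v^{(i)}\|_2\le 1$, which is the dual-norm condition of $\|\cdot\|_{m,\infty}$. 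The last ingredient is to deduce $\sum_i\|v^{(i)}\|_2\le 1$ from the stated constraint $\|v\|_1\le 1$: the three scalar blocks contribute $|v_1|+|v_2|+|v_3|$ to both quantities, while the last block contributes $\|(v_4,\dots,v_h)\|_2\le\|(v_4,\dots,v_h)\|_1$, whence $\sum_i\|v^{(i)}\|_2\le\|v\|_1\le1$. Composing the five bounds yields that the whole map is $1$-Lipschitz from $(\mathbb{R}^d,\|\cdot\|_2)$ to $\mathbb{R}$, as claimed.

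The main obstacle is conceptual rather than computational: identifying the correct intermediate norm. Once one recognises that the row constraint defining $\widetilde{\mathcal{L}}_m$, the max/min block of $\widetilde{\mathcal{E}}_{h-3,\sigma}$, and the $\ell^1$ budget on $v$ all point to the same mixed $\ell^\infty$-over-blocks / $\ell^2$-within-blocks norm (with its $\ell^1$/$\ell^2$ dual), every individual estimate becomes a one-line application of the triangle inequality or of Proposition~\ref{prop:nonexp}. I would therefore spend the care on verifying the $\ell^\infty$-Lipschitzness of the max/min block and on the dual-norm reduction $\|v\|_1\le1\Rightarrow\sum_i\|v^{(i)}\|_2\le1$, and treat the remaining steps as routine.
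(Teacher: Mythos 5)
Your proof is correct, and every estimate in it is sound: the row constraint of $\widetilde{\mathcal{L}}_m$ gives nonexpansiveness of each $A_i$ in your mixed norm, the max/min block and Proposition~\ref{prop:nonexp} give nonexpansiveness of each $\Phi_{\theta_\ell}$, the column constraint of $\widetilde{\mathcal{R}}_{d,m}$ handles $Q$, and the reduction $\sum_i\|v^{(i)}\|_2\le\|v\|_1\le 1$ correctly invokes the dual of $\|\cdot\|_{m,\infty}$ for the readout. Substantively this is the same argument as the paper's, but packaged differently. The paper never names a norm on the hidden space: it introduces the function class $\mathcal{F}_{d,m}$ of maps $\mathbb{R}^d\to\mathbb{R}^{\alpha_m}$ whose blocks are each $1$-Lipschitz in $\ell^2$ (see \eqref{eq:Fm}) and propagates membership in that class through the layers by induction, with Lemma~\ref{lemma:closedLinear} playing the role of your estimate for $A_i$ and Lemma~\ref{lemma:constrainedProj} playing the role of your dual-norm estimate for $v^\top$. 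Your formulation is the ``pointwise'' reification of theirs: a map lies in $\mathcal{F}_{d,m}$ exactly when it is $1$-Lipschitz from $(\mathbb{R}^d,\|\cdot\|_2)$ to $(\mathbb{R}^{\alpha_m},\|\cdot\|_{m,\infty})$, so the two proofs perform literally the same inequalities. What your version buys is modularity and a conceptual explanation: each layer is certified once as a nonexpansive map between fixed normed spaces, the conclusion follows from submultiplicativity of Lipschitz constants under composition, and it becomes transparent that the three constraint sets $\mathcal{R}_{d,m}$, $\mathcal{L}_m$, and $\{\|v\|_1\le 1\}$ are precisely operator-norm and dual-norm conditions for one and the same intermediate norm. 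What the paper's version buys is that it stays entirely inside $\ell^2$ statements about functions of the input $x$, requiring no auxiliary norm and matching the way the hidden states are manipulated blockwise in the constructive proofs of Proposition~\ref{prop:hdp2} and Theorem~\ref{thm:pwlGtilde}.
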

See Appendix~\ref{app:thm:gradFixedInner} for the proof. This lemma addresses the first limitation in the definition of $\mathcal{G}_{d,\sigma}(\mathcal{X},\mathbb{R})$, given that we have explicit constraints over all the terms defining a neural network in $\widetilde{\mathcal{G}}_{d,\sigma,h}(\mathcal{X},\mathbb{R})$. 

We now state the second main result of this paper. 

\begin{theorem}\label{thm:gradFixedInner}
Let $d\in\mathbb{N}$, $\sigma=\mathrm{ReLU}$, and $\mathcal{X}\subset\mathbb{R}^d$ be compact. The set $\widetilde{\mathcal{G}}_{d, \sigma, d+3}(\mathcal{X}, \mathbb{R})$ satisfies the universal approximation property for $\mathcal{C}_1(\mathcal{X},\mathbb{R})$.
\end{theorem}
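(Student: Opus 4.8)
The plan is to prove density by showing that $\widetilde{\mathcal{G}}_{d,\sigma,d+3}(\mathcal{X},\mathbb{R})$ already \emph{contains} every piecewise affine $1$-Lipschitz function on $\mathcal{X}$, exactly mirroring the constructive route behind Theorem~\ref{thm:pwl1Lipschitz}. Since Lemma~\ref{lemma:densityPWL} shows the piecewise affine $1$-Lipschitz functions satisfy the universal approximation property for $\mathcal{C}_1(\mathcal{X},\mathbb{R})$, the theorem follows once such containment is established. Given a piecewise affine $1$-Lipschitz $f$, Lemma~\ref{lemma:maxMin} writes $f(x)=\max_{i}\min_{j}(a_{ij}^\top x+b_{ij})$, and the remark following it guarantees $\|a_{ij}\|_2\le 1$. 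So it suffices to realise this finite max--min of $1$-Lipschitz affine functions as a single network of width $h=d+3$ of the prescribed type.

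For the construction I would fix the block pattern $m=(1,1,1,d)$, so that a state vector splits into three scalar coordinates followed by a block of size $d$. The central design choice is to let the size-$d$ block carry the input $x$ unchanged throughout: the lifting $Q\in\mathcal{R}_{d,m}$ installs $x$ there via $B_4=I_d$ (admissible since $\|I_d\|_2\le1$) and simultaneously seeds the scalar coordinates with the first affine values; every block $\Phi_{\theta_\ell}\in\widetilde{\mathcal{E}}_{d,\sigma}$ acts on the size-$d$ block by the identity (admissible by Lemma~\ref{lemma:linearGradient} with $M=I_d$); and every $A_\ell\in\mathcal{L}_m$ preserves it via $A_{44}=I_d$. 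Because $x$ is always available, any affine value $a^\top x+b$ with $\|a\|_2\le1$ can be written into a scalar coordinate by one row of a linear map, whose row-block constraint $\sum_{j}\|A_{ij}\|_2\le1$ is met since the only nonzero block in that row is $A_{i4}=a^\top$. The three scalar coordinates then serve as accumulators, and the crucial observation is that each block outputs \emph{both} $\max\{x_1,x_2\}$ and $\min\{x_1,x_2\}$, so the $A_\ell$ need only route values. I maintain the invariant that, on entering a block, coordinate $1$ holds the running group-minimum, coordinate $2$ holds the next affine value, and coordinate $3$ holds the running overall maximum.

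With this invariant, a min-step is one block (replacing coordinates $1,2$ by their $\max$ and $\min$ while leaving coordinate $3$ untouched) followed by one linear map that copies the produced minimum into coordinate $1$, loads the next affine value into coordinate $2$, and keeps the maximum in coordinate $3$; each such map uses one unit-norm row per coordinate and hence lies in $\mathcal{L}_m$. Finishing a group costs one extra block and map that fold the group minimum into the running maximum by a $\max$ operation (for the first group the maximum is simply initialised to that group's minimum). After the last group, $f(x)$ sits in coordinate $3$, which every block leaves fixed; choosing $v=e_3$, the third canonical basis vector with $\|v\|_1=1$, makes the terminal projection return exactly $f(x)$. The biases $b_{ij}$ are absorbed into the unconstrained shift vectors of $Q$ and the $A_\ell$, all norm budgets are met, so this finite-depth, width-$(d+3)$ network lies in $\widetilde{\mathcal{G}}_{d,\sigma,d+3}(\mathcal{X},\mathbb{R})$ and equals $f$, its $1$-Lipschitzness being consistent with Lemma~\ref{lem:tilde-G-1-Lip}.

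The hard part is the fixed-width bookkeeping: with only three scalar coordinates one must run an arbitrarily long max--min computation without ever exceeding the per-row budget $\sum_{j}\|A_{ij}\|_2\le1$ of the maps in $\mathcal{L}_m$. What makes this feasible is precisely the three facts above --- (i) the input is transported losslessly in the size-$d$ block so affine functions are recomputable on demand, (ii) a single $\mathrm{MaxMin}$ block yields the maximum and minimum simultaneously so the linear maps merely select and shift, and (iii) at any instant only three scalars need coexist, namely the current group-minimum, the current overall-maximum, and one freshly formed affine value. The remaining care lies at the group boundaries and the first-group initialisation, and in checking that each routing row (copying a scalar, or selecting one of the two block outputs) keeps its norm sum at most one; these are the routine verifications that close the argument.
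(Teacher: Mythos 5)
Your proposal is correct and follows essentially the same route as the paper: reduce to the containment of piecewise affine $1$-Lipschitz functions (via Lemma~\ref{lemma:maxMin} and Lemma~\ref{lemma:densityPWL}), then build a width-$(d+3)$ network that carries $x$ unchanged in the size-$d$ block, uses the two MaxMin coordinates as workspace for the group minima, and accumulates the running maximum in the third coordinate, extracted at the end by $v=e_3$. This is precisely the paper's proof via Theorem~\ref{thm:pwlGtilde} (with Proposition~\ref{prop:hdp2} handling the within-group minima and the maps $S$, $\Psi$, $\overline{A_i}$ performing the same routing and folding steps you describe).
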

This result ensures we can fix the network width to $h=d+3$ and preserve the universal approximation property. We will further comment on the connections between the two sets $\mathcal{G}_{d,\sigma}(\mathcal{X},\mathbb{R})$ and $\widetilde{\mathcal{G}}_{d,\sigma,h}(\mathcal{X},\mathbb{R})$ and on extensions of the set $\widetilde{\mathcal{G}}_{d,\sigma,h}(\mathcal{X},\mathbb{R})$ in Section \ref{sec:Discussion}. 
We remark that our proof of Theorem \ref{thm:gradFixedInner} relies on setting $\widetilde{\Phi}_\theta(x_{4:})=x_{4:}$ for the elements in $\widetilde{\mathcal{E}}_{h,\sigma}$. Still, we present the results for the larger set of allowed residual maps $\widetilde{\mathcal{E}}_{h,\sigma}$ since, in practice, this additional freedom can lead to more efficient approximations of target maps than the one provided constructively in our proof.

\subsection{Proof of Theorem~\ref{thm:gradFixedInner}}

This proof follows similar ideas as the one presented in Section \ref{sec:pwlInG}.
\begin{proposition}\label{prop:hdp2}
Fix $n\in\mathbb{N}$, $a_1,...,a_n\in\mathbb{R}^d$, and $b_1,...,b_n\in\mathbb{R}$, with $\|a_1\|_2,...,\|a_n\|_2\leq 1$. The functions $\mathbb{R}^d\ni x\mapsto \max\{a_1^\top x+b_1,...,a_n^\top x+b_n\}=f(x)\in\mathbb{R}$ and $\mathbb{R}^d\ni x\mapsto \min\{a_1^\top x+b_1,...,a_n^\top x+b_n\}=g(x)\in\mathbb{R}$ belong to $\widetilde{\mathcal{G}}_{d,\sigma,h}(\mathbb{R}^d,\mathbb{R})$ with $\sigma=\mathrm{ReLU}$ and $h=d+3$.  
\end{proposition}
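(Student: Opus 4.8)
The plan is to give an explicit, exact construction, treating $f$ (the maximum) in full and obtaining $g$ (the minimum) by a symmetric argument that tracks a running minimum in the second register. I write the state of a network in $\widetilde{\mathcal{G}}_{d,\sigma,d+3}$ as $(s_1,s_2,s_3,y)\in\mathbb{R}\times\mathbb{R}\times\mathbb{R}\times\mathbb{R}^d$, which is natural because $m=(1,1,1,d)$ and $\alpha_m=d+3=h$. The guiding idea is threefold: keep a \emph{frozen} copy of the input in the last block, $y=x$, so that every affine map $A_\ell\in\mathcal{L}_m$ can re-read the linear functionals $a_i^\top x = a_i^\top y$; accumulate the running maximum $M_j=\max\{a_1^\top x+b_1,\dots,a_j^\top x+b_j\}$ in $s_1$; and exploit the $\mathrm{MaxMin}$ action hard-wired into $\widetilde{\mathcal{E}}_{d,\sigma}$ to fold in exactly one new affine piece per residual layer. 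The third scalar register $s_3$ is not needed and is simply carried along.

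First I would pick the lifting $Q\in\mathcal{R}_{d,m}$ with row blocks $B_1=a_1^\top$, $B_2=a_2^\top$, $B_3=0$, $B_4=I_d$ and biases $b_1,b_2,0,0$, so that after $Q$ the state is $(a_1^\top x+b_1,\ a_2^\top x+b_2,\ 0,\ x)$; since $\|a_i\|_2\leq 1$ and $\|I_d\|_2=1$, each block obeys $\|B_i\|_2\leq 1$, hence $Q\in\mathcal{R}_{d,m}$. In every residual layer I take $\widetilde{\Phi}_\theta=\mathrm{id}\in\mathcal{E}_{d,\sigma}$ (the identity belongs to $\mathcal{E}_{d,\sigma}$ by Lemma~\ref{lemma:linearGradient} with $M=I_d$), so each $\Phi_{\theta_\ell}\in\widetilde{\mathcal{E}}_{d,\sigma}$ sends $(s_1,s_2,s_3,y)$ to $(\max\{s_1,s_2\},\min\{s_1,s_2\},s_3,y)$, leaving $y=x$ untouched. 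Thus $\Phi_{\theta_1}$ places $M_2=\max\{a_1^\top x+b_1,a_2^\top x+b_2\}$ into $s_1$. I then alternate, for $j=2,\dots,n-1$: an affine map $A_{j-1}\in\mathcal{L}_m$ that keeps the running maximum in $s_1$ (block $A_{11}=1$), loads the next piece $a_{j+1}^\top y+b_{j+1}$ into $s_2$ (block $A_{24}=a_{j+1}^\top$, bias $b_{j+1}$), and preserves $y$ (block $A_{44}=I_d$); followed by a residual layer $\Phi_{\theta_j}$ which updates $s_1\mapsto\max\{s_1,s_2\}=M_{j+1}$. After $\Phi_{\theta_{n-1}}$ the first register holds $M_n=f(x)$, and reading out with $v=e_1$ the first canonical basis vector of $\mathbb{R}^{d+3}$, for which $\|v\|_1=1$, completes the map; here $L=n-1$, with the degenerate case $n=1$ handled by additionally setting $B_2=a_1^\top$ at the lifting so that $\Phi_{\theta_1}$ returns $a_1^\top x+b_1$ with $L=1$.

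The substantive part, and the step requiring care, is confirming that every affine map used above genuinely lies in $\mathcal{L}_m$, i.e. that each block row meets the budget $\sum_{j}\|A_{ij}\|_2\leq 1$. Each scalar row costs at most $1$ because it either copies a single register ($|A_{i\cdot}|=1$) or reads one functional ($\|A_{24}\|_2=\|a_{j+1}\|_2\leq 1$), and since the ``keep'' and ``load'' operations sit in distinct rows their budgets never interact; the vector row maintaining $y$ spends its entire allowance on $A_{44}=I_d$, which is precisely why $y$ cannot be coupled to the scalar registers within a single affine map, forcing the alternation of a ``load'' step and a ``compare'' step rather than fusing them. I would finish by recording that the constraints of $\mathcal{R}_{d,m}$, $\widetilde{\mathcal{E}}_{d,\sigma}$, $\mathcal{L}_m$ and $\|v\|_1\leq 1$ all hold, so the constructed function is an element of $\widetilde{\mathcal{G}}_{d,\sigma,d+3}(\mathbb{R}^d,\mathbb{R})$ (its $1$-Lipschitz continuity being automatic by Lemma~\ref{lem:tilde-G-1-Lip}), and by observing that the minimum follows verbatim, maintaining the running minimum in $s_2$ via $A_{22}=1$ and reading out with $v=e_2$.
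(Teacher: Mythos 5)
Your construction is correct and is essentially the same as the paper's proof: identical lifting $(a_1^\top x+b_1,\,a_2^\top x+b_2,\,0,\,x)$, identity maps in the lower residual block, interleaved affine maps in $\mathcal{L}_m$ that preserve the running extremum and load one new affine piece $a_{j+1}^\top y+b_{j+1}$ from the frozen copy of $x$, and readout $v=e_1$ after $L=n-1$ residual layers. Your explicit verification of the row-budget constraints and the handling of the degenerate case $n=1$ are details the paper leaves implicit, but the argument is the same.
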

\begin{proof}
We focus on $f$, and a similar reasoning applies for $g$. Set $\widetilde{W}_1=0_{d,d}$, $\widetilde{b}_1=0_d$, and $Q$ so that
\[
\Phi_{\theta_1}\circ Q(x) = \begin{bmatrix}
    \max\{a_1^\top x +b_1, a_2^\top x +b_2\} & \min\{a_1^\top x +b_1, a_2^\top x +b_2\} & 0 & x^\top
\end{bmatrix}^\top,
\]
where $\widetilde{\Phi}_{\theta_1}(x)=x-2\widetilde{W}_1^\top \sigma(\widetilde{W}_1 x + \widetilde{b}_1)$. Set $m=(1,1,1,d)$. Let us then introduce $A_1\in\mathcal{L}_{m}$ defined as $A_1(x) = \widetilde{A}_1x+\widetilde{a}_1$ where
\[
\widetilde{A}_1 = \begin{bmatrix}
    1 & 0 & 0 & 0_d^\top \\
    0 & 0 & 0 & a_3^\top \\
    0 & 0 & 0 & 0_d^\top \\
    0_d & 0_d & 0_d & I_d
\end{bmatrix}\in\mathbb{R}^{h\times h},\,\widetilde{a}_1 = \begin{bmatrix} 0 \\ b_3 \\ 0 \\ 0 \end{bmatrix}\in\mathbb{R}^h,
\]
so that
\[
A_1\circ\Phi_{\theta_1}\circ Q(x)=\begin{bmatrix}
    \max\{a_1^\top x+b_1,a_2^\top x+b_2\} & a_3^\top x+b_3 & 0 & x^\top
\end{bmatrix}^\top \in\mathbb{R}^{h}.
\]
The next step is to build $\Phi_{\theta_2}$ so that
\[
\Phi_{\theta_2} \circ A_1 \circ \Phi_{\theta_1} \circ Q(x) = \begin{bmatrix}
    \max\{a_1^\top x+b_1,a_2^\top x+b_2, a_3^\top x+b_3\} \\
    \min\{\max\{a_1^\top x+b_1,a_2^\top x+b_2\},a_3^\top x+b_3\} \\ 0 \\ x
\end{bmatrix}.
\]
The reasoning extends up to when we reach the final configuration, after $L=n-1$ residual maps, where $v^\top \circ\Phi_{\theta_L}\circ A_{L-1} \circ \Phi_{\theta_{L-1}} \circ ... \circ A_1 \circ \Phi_{\theta_1}\circ Q(x) = f(x)$, by fixing $v=e_1\in\mathbb{R}^h$.
\end{proof}
We remark that in the proof of Proposition \ref{prop:hdp2} the third entry is irrelevant, and the fourth component is used as a memory of the original input $x$. The third component is fundamental in the proof of Theorem \ref{thm:pwlGtilde}, which is why it is included. Keeping track of $x$ is essential to recover the affine pieces $a_i^\top x+b_i$ online, without generating them all at the beginning as we did in the proof of Theorem \ref{thm:pwl1Lipschitz}. This operation allows us to detach the hidden dimension $h$ from the number of linear pieces, and get a universal approximation theorem for a fixed network width.
\begin{theorem}\label{thm:pwlGtilde}
Any piecewise affine 1-Lipschitz function $f:\mathbb{R}^d\to\mathbb{R}$ can be represented by a network in $\widetilde{\mathcal{G}}_{d,\sigma,h}(\mathbb{R}^d,\mathbb{R})$ with $\sigma=\mathrm{ReLU}$ and $h=d+3$.
\end{theorem}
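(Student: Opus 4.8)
The plan is to start from the max-of-min representation of $f$ guaranteed by Lemma~\ref{lemma:maxMin}: since $f$ is continuous piecewise affine, we may write $f(x)=\max_{1\le i\le k}f_i(x)$ with $f_i(x)=\min_{1\le j\le l_i}(a_{i,j}^\top x+b_{i,j})$, and, because $f$ is $1$-Lipschitz, the remark following Lemma~\ref{lemma:maxMin} ensures $\|a_{i,j}\|_2\le 1$ for every $i,j$. The goal is then to realise this nested structure by a single element of $\widetilde{\mathcal{G}}_{d,\sigma,d+3}(\mathbb{R}^d,\mathbb{R})$, reusing the \emph{online} construction of Proposition~\ref{prop:hdp2} to assemble each inner minimum $f_i$ one affine piece at a time, while adding an accumulator coordinate that carries the running outer maximum $\max_{i'\le i}f_{i'}$ as the next $f_{i+1}$ is being built.

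Concretely, I would allocate the $h=d+3$ coordinates as follows: coordinates $1$ and $2$ serve as the workspace on which the $\mathrm{MaxMin}$ block of $\widetilde{\mathcal{E}}_{d,\sigma}$ acts, coordinate $3$ stores the accumulated outer maximum, and coordinates $4,\dots,d+3$ hold a verbatim copy of the input $x$. As in Proposition~\ref{prop:hdp2}, the lifting $Q\in\mathcal{R}_{d,m}$ with $m=(1,1,1,d)$ places $x$ into the memory coordinates via an identity row-block (of spectral norm $1$) and seeds the first two affine pieces of $f_1$ into coordinates $1,2$ using row-blocks $a_{1,1}^\top$ and $a_{1,2}^\top$ (each of norm $\le 1$). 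Each $\mathrm{MaxMin}$ layer $\Phi_{\theta_\ell}\in\widetilde{\mathcal{E}}_{d,\sigma}$ updates coordinate $2$ to the running minimum while leaving coordinate $3$ and the memory untouched (we take $\widetilde{\Phi}_\theta=\mathrm{id}$ on the memory block), and each intermediate map $A_\ell\in\mathcal{L}_m$ reads a fresh affine piece $a_{1,j}^\top x+b_{1,j}$ off the memory into coordinate $1$; after $l_1-1$ such alternations, coordinate $2$ equals $f_1(x)$.

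The new ingredient is the accumulation of the outer maximum. Once $f_i$ has been formed in coordinate $2$, a map $A_\ell\in\mathcal{L}_m$ copies the accumulator from coordinate $3$ into coordinate $1$ (row $A_{13}=1$) while preserving $f_i$ in coordinate $2$; the next $\mathrm{MaxMin}$ layer then places $\max\{f_i,\max_{i'<i}f_{i'}\}$ in coordinate $1$; and a further $A_\ell$ writes this value back into coordinate $3$ (row $A_{31}=1$) and simultaneously seeds the first two affine pieces of $f_{i+1}$ into coordinates $1,2$. The base case $i=1$ is handled by simply copying $f_1$ into coordinate $3$. After processing all $k$ inner functions, coordinate $3$ holds $f(x)=\max_i f_i(x)$, and we read it out with $v=e_3$, which satisfies $\|v\|_1=1\le 1$; the $1$-Lipschitzness of the network is then automatic from Lemma~\ref{lem:tilde-G-1-Lip}.

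The main thing to verify — and the only place something could go wrong — is that every block used above meets the norm constraints defining $\mathcal{R}_{d,m}$ and $\mathcal{L}_m$. Each scalar output row of an $A_\ell$ either copies a single coordinate ($\|A_{ij}\|_2=1$ with all other blocks zero) or evaluates one affine piece ($\|A_{i4}\|_2=\|a_{i,j}\|_2\le 1$ with all other blocks zero), so the row-sum bound $\sum_j\|A_{ij}\|_2\le 1$ holds; the memory row-block is the identity, of spectral norm $1$; and the biases are unconstrained, so the offsets $b_{i,j}$ cause no difficulty. The conceptual point worth emphasising is that keeping $x$ in memory lets us generate the affine pieces sequentially rather than all at once, which is precisely what decouples the width $d+3$ from the total number of pieces $\sum_i l_i$ and makes the fixed-width representation possible.
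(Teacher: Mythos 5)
Your proposal is correct and follows essentially the same route as the paper's own proof: the max-of-min representation from Lemma~\ref{lemma:maxMin}, the online assembly of each inner minimum via the construction of Proposition~\ref{prop:hdp2} with coordinates $4,\dots,d+3$ serving as a memory of $x$, and the third coordinate acting as an accumulator for the running outer maximum, finally read out with $v=e_3$. The only discrepancies are bookkeeping choices (which workspace coordinate holds each intermediate value, and the exact ordering of the copy/swap maps in $\mathcal{L}_m$), which do not affect the argument.
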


The proof of Theorem \ref{thm:pwlGtilde} relies on the representation of piecewise affine $1$-Lipschitz functions in \eqref{eq:pwl}, the construction presented in the proof of Proposition \ref{prop:hdp2}, and on using the third component as a running maximum of the previously computed minima. See Appendix \ref{app:thm:gradFixedInner} for the full proof.

The proof of Theorem \ref{thm:gradFixedInner} then follows by combining Theorem \ref{thm:pwlGtilde} with Lemma \ref{lemma:densityPWL}. 

We discuss how the network size depends on the input dimension $d \in \mathbb{N}$ in Appendix~\ref{app:dependence-d}.

\begin{remark}
The set of networks $\widetilde{\mathcal{G}}_{d,\mathrm{ReLU},d+3}(\mathcal{X},\mathbb{R})$ contains all the piecewise affine $1$-Lipschitz functions, as stated in Theorem~\ref{thm:pwlGtilde}. On the other hand, it is also true that all the elements of $\widetilde{\mathcal{G}}_{d,\mathrm{ReLU},d+3}(\mathcal{X},\mathbb{R})$ are piecewise affine $1$-Lipschitz functions. The latter result follows from the fact that $\mathrm{ReLU}$ is piecewise affine, and we only compose it with affine maps. This reasoning allows us to conclude that the new architecture that we propose and study in Theorem \ref{thm:gradFixedInner} coincides with the set of piecewise affine $1$-Lipschitz functions, and provides yet another representation strategy for this lattice of functions. 
\end{remark}

\section{Discussion and future work}
\label{sec:Discussion}
We now connect our two main theorems, comment on their practical value, and provide relevant extensions.

\paragraph{Different proving strategies.}To prove Theorem \ref{thm:gradFreeInner}, we followed two strategies: verified that the assumptions of the Restricted Stone-Weierstrass Theorem are satisfied by $\mathcal{G}_{d,\sigma}(\mathcal{X},\mathbb{R})$, and verified that $\mathcal{G}_{d,\sigma}(\mathcal{X},\mathbb{R})$ contains all the scalar piecewise affine and $1$-Lipschitz functions. The two arguments are strictly related. In fact, the first one relies on showing that $\mathcal{G}_{d,\sigma}(\mathcal{X},\mathbb{R})$ is a lattice separating the points of $\mathcal{X}$, while the second shows that $\mathcal{G}_{d,\sigma}(\mathcal{X},\mathbb{R})$ contains a lattice that separates points. This latter strategy is the one we used to prove Theorem \ref{thm:gradFixedInner} as well.

\paragraph{Necessity for the affine maps $A_1,...,A_{L-1}\in\mathcal{L}_m$.}To obtain universality while maintaining the width fixed, we introduced affine layers between the residual gradient steps. Setting them to identity maps would lead to a subset of $\widetilde{\mathcal{G}}_{d,\sigma,h}(\mathcal{X},\mathbb{R})$ which might not be universal. Since these maps are not $1$-Lipschitz as maps from $\mathbb{R}^{\alpha_m}$ to itself, we had to restrict the set of allowed gradient steps to $\widetilde{\mathcal{E}}_{h,\sigma}\subset\mathcal{E}_{h+3,\sigma}$. It is thus interesting to further explore if it is possible to remove these affine maps and allow for more general gradient steps. Still, there are two fundamental aspects to mention. 

First, to get a $1$-Lipschitz network, it is not necessary to have all the layers that are $1$-Lipschitz, as demonstrated by the maps in $\widetilde{\mathcal{G}}_{d,\sigma,h}(\mathcal{X},\mathbb{R})$. This idea is explored in \cite{celledoni2023dynamical}, where the authors build $1$-Lipschitz networks combining $1$-Lipschitz maps as in $\mathcal{E}_{d,\sigma}$ with positive gradient steps of the form $u\mapsto u + \tau W^\top \sigma(Wu+b)$. It will thus be interesting to further explore this path for future research. 

Second, the restriction defined in $\widetilde{\mathcal{E}}_{h,\sigma}$ provides a rather minimal set to carry out the proofs in this paper. One can generalise it while preserving the Lipschitz property and potentially getting more practically efficient networks. We provide a generalisation of $\widetilde{\mathcal{E}}_{h,\sigma}$ and $\widetilde{\mathcal{G}}_{d,\sigma,h}(\mathcal{X},\mathbb{R})$ in Appendix \ref{app:extensionE}.

\paragraph{Connections between Theorem \ref{thm:gradFreeInner} and Theorem \ref{thm:gradFixedInner}.}Both the main results we proved in this paper ensure the universality of neural networks that rely on residual layers coming from negative gradient flows. Theorem \ref{thm:gradFreeInner} focuses on architectures studied in \cite{sherry2024designing,prach20241,meunier2022dynamical}, while Theorem \ref{thm:gradFixedInner} considers a new, practically implementable constrained architecture that we propose. It is essential to note that in \cite{sherry2024designing,prach20241,meunier2022dynamical} the elements in $\mathcal{G}_{d,\sigma}(\mathcal{X},\mathbb{R})$ are implemented with a unit-norm constraint on the lifting map $Q:\mathbb{R}^d\to\mathbb{R}^h$. Our theory does not allow us to say that this constraining strategy provides a set of networks dense in $\mathcal{C}_1(\mathcal{X},\mathbb{R})$, and hence leaving it unconstrained while regularising for the network to be almost $1$-Lipschitz would be a better strategy according to our analysis.

Theorem \ref{thm:gradFixedInner} trades the unlimited width of Theorem \ref{thm:gradFreeInner} for additional constrained linear layers. Still, there are several similarities between the two theorems. For example, the constrained maps in $\widetilde{\mathcal{E}}_{h,\sigma}$ appear also in the proof of Theorem \ref{thm:gradFreeInner}, see, for example, \eqref{eq:maxminMap}.

\paragraph{Extension to multivalued functions.}This paper focuses on $\mathcal{C}_1(\mathcal{X},\mathbb{R})$. Some of our results extend to the functions in $\mathcal{C}_1(\mathcal{X},\mathbb{R}^c)$, as formalised by the following lemma, with proof in Appendix \ref{app:multivaluedExtension}.
\begin{lemma}\label{lemma:extensionG}
Let $c,d\in\mathbb{N}$, $\mathcal{X}\subset\mathbb{R}^d$ be compact, and $\sigma=\mathrm{ReLU}$. Define the set
\begin{align*}
\mathcal{G}_{c, d, \sigma}(\mathcal{X}, \mathbb{R}^c)
:= \Bigl\{&P \circ \Phi_{\theta_{L}} \circ \cdots \circ \Phi_{\theta_{1}} \circ Q :\mathcal{X}\to\mathbb{R}^c\,\Bigm|\,Q(x)=\widehat{Q}x+\widehat{q},\,\widehat{Q}\in \mathbb{R}^{h \times d},\\
&\widehat{q}\in\mathbb{R}^h,\,P\in\mathbb{R}^{c\times h},\,\|P\|_{2,\infty}= 1,\,\Phi_{\theta_{\ell}}\in\mathcal{E}_{h,\sigma},\,L,h \in \mathbb{N}  
\Bigl\}.
\end{align*}
Then, for any $f\in\mathcal{C}_1(\mathcal{X},\mathbb{R}^c)$ and $\varepsilon>0$, exists $g\in\mathcal{G}_{c,d,\sigma}(\mathcal{X},\mathbb{R}^c)$ with $\max_{x\in\mathcal{X}}\|f(x)-g(x)\|_2\leq \varepsilon$.
\end{lemma}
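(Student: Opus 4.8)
The plan is to reduce the vector-valued approximation to $c$ independent scalar approximations and then assemble the resulting scalar networks into a single block-diagonal network of the prescribed form. First I would note that every component $f_i$ of $f=(f_1,\dots,f_c)$ lies in $\mathcal{C}_1(\mathcal{X},\mathbb{R})$, since $|f_i(y)-f_i(x)|\le\|f(y)-f(x)\|_2\le\|y-x\|_2$. Applying Theorem~\ref{thm:gradFreeInner} to each $f_i$ with tolerance $\varepsilon/\sqrt{c}$ produces scalar networks $g_i=v_i^\top\circ\Phi_{\theta_{L_i}^{(i)}}\circ\cdots\circ\Phi_{\theta_1^{(i)}}\circ Q_i\in\mathcal{G}_{d,\sigma}(\mathcal{X},\mathbb{R})$ with $\|v_i\|_2=1$ and $\max_x|f_i(x)-g_i(x)|<\varepsilon/\sqrt{c}$. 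Summing the squared componentwise errors then gives $\max_x\|f(x)-g(x)\|_2\le\varepsilon$ for $g:=(g_1,\dots,g_c)$, so it only remains to realise $g$ as an element of $\mathcal{G}_{c,d,\sigma}(\mathcal{X},\mathbb{R}^c)$.

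To write $g$ in the required form I would stack the scalar networks in parallel on $\mathbb{R}^{H}$ with $H=\sum_{i=1}^c h_i$. The lifting map $Q(x)$ is the vertical concatenation of the $Q_i(x)$, which is an unconstrained affine map and hence admissible. Before stacking the residual blocks I would perform two normalisations. First, using the positive homogeneity of $\sigma$, I rescale each block $\Phi_\theta(x)=x-\tau\widehat{W}^\top\sigma(\widehat{W}x+\widehat{b})$ to step size $\tau=2$ by replacing $\widehat{W}$ with $\sqrt{\tau/2}\,\widehat{W}$ and $\widehat{b}$ with $\sqrt{\tau/2}\,\widehat{b}$; since $\tau\le2$ this preserves the bound $\|W\|_2\le1$. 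Second, I pad the shorter networks so that all have depth $L:=\max_i L_i$, inserting identity residual layers, which belong to $\mathcal{E}_{h_i,\sigma}$ (take $W=0$, or apply Lemma~\ref{lemma:linearGradient} with $M=I_{h_i}$).

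With these normalisations in place, the $\ell$-th combined residual layer is $\Phi_{\theta_\ell}(z)=z-2W_\ell^\top\sigma(W_\ell z)$ with $W_\ell=\mathrm{blockdiag}(W_\ell^{(1)},\dots,W_\ell^{(c)})$; since the spectral norm of a block-diagonal matrix is the maximum of the spectral norms of its blocks, $\|W_\ell\|_2\le1$, so $\Phi_{\theta_\ell}\in\mathcal{E}_{H,\sigma}$. Because all maps are block-diagonal, the $i$-th block of the state evolves exactly as in the network $g_i$, so after the $L$ layers the state equals $(z^{(1)},\dots,z^{(c)})$ with $v_i^\top z^{(i)}=g_i(x)$. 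Finally I set $P=\mathrm{blockdiag}(v_1^\top,\dots,v_c^\top)\in\mathbb{R}^{c\times H}$, whose $i$-th row is $v_i^\top$ supported on the $i$-th block; each row has Euclidean norm $\|v_i\|_2=1$, so $\|P\|_{2,\infty}=1$, and $P$ reads off $(g_1(x),\dots,g_c(x))=g(x)$. This exhibits $g\in\mathcal{G}_{c,d,\sigma}(\mathcal{X},\mathbb{R}^c)$ and completes the argument.

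The conceptual content is straightforward, so the main obstacle is bookkeeping: ensuring that the parallel composition lands inside the tightly constrained family $\mathcal{E}_{H,\sigma}$, which forces a single step size and a single spectral-norm-bounded weight per layer. The homogeneity rescaling to $\tau=2$ and the identity-padding to a common depth are precisely what reconcile the differing step sizes, depths, and widths of the $c$ scalar networks; once these are handled, the block-diagonal structure keeps the components decoupled and the row-wise unit-norm projection matches the $\|P\|_{2,\infty}=1$ constraint exactly. It is worth emphasising that $\mathcal{G}_{c,d,\sigma}(\mathcal{X},\mathbb{R}^c)$ is \emph{not} intersected with $\mathcal{C}_1(\mathcal{X},\mathbb{R}^c)$, so the constructed $g$ need not itself be $1$-Lipschitz (it is only $\sqrt{c}$-Lipschitz), which is harmless for the density claim being proved.
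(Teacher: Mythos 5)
Your proof is correct and follows essentially the same route as the paper's: componentwise scalar approximation to accuracy $\varepsilon/\sqrt{c}$ via Theorem~\ref{thm:gradFreeInner}, identity-padding to a common depth, block-diagonal stacking of the lifting, residual, and projection layers (with step sizes reconciled via the positive homogeneity of $\sigma$, which the paper does pairwise in Lemma~\ref{lemma:secondToLastStep} rather than normalising everything to $\tau=2$), and the observation that $\|P\|_{2,\infty}=\max_i\|v_i\|_2=1$. The only slip is notational: your displayed combined layer $z\mapsto z-2W_\ell^\top\sigma(W_\ell z)$ should carry the concatenated rescaled biases, i.e.\ $z\mapsto z-2W_\ell^\top\sigma(W_\ell z+b_\ell)$, as your own rescaling step already indicates.
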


Since we are not intersecting with the set of $1$-Lipschitz functions, this time, it is not true that $\mathcal{G}_{c,d,\sigma}(\mathcal{X},\mathbb{R}^c)\subset\mathcal{C}_1(\mathcal{X},\mathbb{R}^c)$. 
In fact, since $\|P\|_{2,\infty} = \max_{i=1,...,c}\|e_i^\top P\|_2$, where $e_i\in\mathbb{R}^c$ is the $i$-th vector of the canonical basis, we have ${\mathcal{G}}_{d, \sigma}(\mathcal{X}, \mathbb{R})\subset{\mathcal{G}}_{1, d, \sigma}(\mathcal{X}, \mathbb{R}^1)$. Extending the universality result of $\widetilde{\mathcal{G}}_{d,\sigma,h}(\mathcal{X},\mathbb{R})$ and enforcing the Lipschitz constraint will be the topic for future research.
\paragraph{Extension to larger sets of parametric functions.}
As for any universal approximation theorem, the theoretical analysis we provide can be extended to sets containing $\mathcal{G}_{d,\sigma}(\mathcal{X},\mathbb{R})$ and $\widetilde{\mathcal{G}}_{d,\sigma,h}(\mathcal{X},\mathbb{R})$. For example, the universality of $\mathcal{G}_{d, \sigma}(\mathcal{X}, \mathbb{R})$ implies that the universality persists relaxing the constraint over $v\in\mathbb{R}^h$ from $\|v\|_2=1$ to $\|v\|_2\leq 1$. Similarly, we can remove the constraints on the matrices $W_\ell$, and allow for $0\leq \tau_\ell\leq 2/\|W_\ell\|_2^2$, still leading to $1$-Lipschitz Euler steps, see Proposition \ref{prop:nonexp}. In fact, $\|W_\ell\|_2\leq 1$ with $\tau_\ell\in [0,2]$ is a particular instance of this constraint. Relaxing such constraints could improve the numerical performance, given that the model would be less restricted.

\paragraph{Extension to other activation functions.} Our proofs strongly rely on the properties of $\mathrm{ReLU}$. The same results could be obtained by any other activation functions that are positively homogeneous, can represent the identity map, and the entrywise maximum and minimum functions. When this is not the case, developing a similar theory would require significantly different arguments. 
For the further discussion of other activation functions, see Appendix~\ref{app:activation}.

\paragraph{Implementability of our networks.}The elements of $\mathcal{G}_{d,\sigma}(\mathcal{X},\mathbb{R})$ and $\widetilde{\mathcal{G}}_{d,\sigma,h}(\mathcal{X},\mathbb{R})$ are neural networks that can be numerically implemented. All the weight constraints can be efficiently enforced in a projected gradient descent fashion. The spectral norms can be estimated via the power method, see \cite{miyato2018spectral}, whereas row-wise constraints as in $\mathcal{L}_m$ can be enforced by dividing by the $\ell^2$ norms of the rows. We further remark that the lack of explicit constraints on the lifting map $Q$ in $\mathcal{G}_{d,\sigma}(\mathcal{X},\mathbb{R})$ can be overcome by regularising the loss function with a term penalising the violation of the Lipschitz constraint, such as $\sum_{i=1}^N(\mathrm{ReLU}(\|\nabla_x\mathcal{N}_\theta(x_i)\|_2-1))^2$. This additional term promotes the local Lipschitz constant of the network $\mathcal{N}_\theta:\mathbb{R}^d\to\mathbb{R}$ to be smaller or equal than one, see \cite{lunz2018adversarial}. The locations $x_1,...,x_N\in\mathcal{X}$ can be randomly sampled during each training iteration. 
For some empirical validation of the implementability and trainability of our networks, see Appendix~\ref{app:experiments}.

\section*{Acknowledgements}
DM acknowledges support from the EPSRC programme grant in ‘The Mathematics of Deep Learning’, under the project EP/V026259/1. TF was supported by JSPS KAKENHI Grant Number JP24K16949, 25H01453, JST CREST JPMJCR24Q5, JST ASPIRE JPMJAP2329. CBS acknowledges support from the Philip Leverhulme Prize, the Royal Society Wolfson Fellowship, the EPSRC advanced career fellowship EP/V029428/1, the EPSRC programme grant EP/V026259/1, and the EPSRC grants EP/S026045/1 and EP/T003553/1, EP/N014588/1, EP/T017961/1, the Wellcome Innovator Awards 215733/Z/19/Z and 221633/Z/20/Z, the European Union Horizon 2020 research and innovation programme under the Marie Skodowska-Curie grant agreement NoMADS and REMODEL, the Cantab Capital Institute for the Mathematics of Information and the Alan Turing Institute.
This research was also supported by the NIHR Cambridge Biomedical Research Centre (NIHR203312). The views expressed are those of the author(s) and not necessarily those of the NIHR or the Department of Health and Social Care. 

\bibliographystyle{plain}
\bibliography{Referneces}

\begin{thebibliography}{10}

\bibitem{anil2019sorting}
Cem Anil, James Lucas, and Roger Grosse.
\newblock Sorting out {L}ipschitz function approximation.
\newblock In {\em International Conference on Machine Learning}, pages
  291--301. PMLR, 2019.

\bibitem{arjovsky2017wasserstein}
Martin Arjovsky, Soumith Chintala, and L{\'e}on Bottou.
\newblock Wasserstein {G}enerative {A}dversarial {N}etworks.
\newblock In {\em International Conference on Machine Learning}, pages
  214--223. PMLR, 2017.

\bibitem{castin2023smooth}
Val{\'e}rie Castin, Pierre Ablin, and Gabriel Peyr{\'e}.
\newblock How smooth is attention?
\newblock {\em arXiv preprint arXiv:2312.14820}, 2023.

\bibitem{celledoni2023dynamical}
Elena Celledoni, Davide Murari, Brynjulf Owren, Carola-Bibiane Sch{\"o}nlieb,
  and Ferdia Sherry.
\newblock Dynamical {S}ystems--{B}ased {N}eural {N}etworks.
\newblock {\em SIAM Journal on Scientific Computing}, 45(6):A3071--A3094, 2023.

\bibitem{chen2022optimization}
Qi~Chen, Yifei Wang, Yisen Wang, Jiansheng Yang, and Zhouchen Lin.
\newblock Optimization-{I}nduced {G}raph {I}mplicit {N}onlinear {D}iffusion.
\newblock In {\em International Conference on Machine Learning}, pages
  3648--3661. PMLR, 2022.

\bibitem{cheng2024you}
Yanqi Cheng, Carola-Bibiane Sch{\"o}nlieb, and Angelica~I Aviles-Rivero.
\newblock You kan do it in a single shot: Plug-and-play methods with
  single-instance priors.
\newblock {\em arXiv preprint arXiv:2412.06204}, 2024.

\bibitem{chernodub2016norm}
Artem Chernodub and Dimitri Nowicki.
\newblock Norm-preserving {O}rthogonal {P}ermutation {L}inear {U}nit
  {A}ctivation {F}unctions ({OPLU}).
\newblock {\em arXiv preprint arXiv:1604.02313}, 2016.

\bibitem{cybenko1989approximation}
George Cybenko.
\newblock Approximation by superpositions of a sigmoidal function.
\newblock {\em Mathematics of Control, Signals and Systems}, 2(4):303--314,
  1989.

\bibitem{dasoulas2021lipschitz}
George Dasoulas, Kevin Scaman, and Aladin Virmaux.
\newblock {L}ipschitz {N}ormalization for {S}elf-{A}ttention {L}ayers with
  {A}pplication to {G}raph {N}eural {N}etworks.
\newblock In {\em International Conference on Machine Learning}, pages
  2456--2466. PMLR, 2021.

\bibitem{de2025approximation}
Arturo De~Marinis, Davide Murari, Elena Celledoni, Nicola Guglielmi, Brynjulf
  Owren, and Francesco Tudisco.
\newblock Approximation properties of neural {ODEs}.
\newblock {\em arXiv preprint arXiv:2503.15696}, 2025.

\bibitem{eliasof2021pde}
Moshe Eliasof, Eldad Haber, and Eran Treister.
\newblock {PDE-GCN}: {N}ovel {A}rchitectures for {G}raph {N}eural {N}etworks
  {M}otivated by {P}artial {D}ifferential {E}quations.
\newblock {\em Advances in neural information processing systems},
  34:3836--3849, 2021.

\bibitem{eliasof2024resilient}
Moshe Eliasof, Davide Murari, Ferdia Sherry, and Carola-Bibiane Sch{\"o}nlieb.
\newblock Resilient {G}raph {N}eural {N}etworks: {A} {C}oupled {D}ynamical
  {S}ystems {A}pproach.
\newblock In {\em ECAI 2024}, pages 1607--1614. IOS Press, 2024.

\bibitem{federer2014geometric}
Herbert Federer.
\newblock {\em {G}eometric {M}easure {T}heory}.
\newblock Springer, 2014.

\bibitem{gulrajani2017improved}
Ishaan Gulrajani, Faruk Ahmed, Martin Arjovsky, Vincent Dumoulin, and Aaron~C
  Courville.
\newblock Improved {T}raining of {W}asserstein {GAN}s.
\newblock {\em Advances in Neural Information Processing Systems}, 30, 2017.

\bibitem{hanin2017approximating}
Boris Hanin and Mark Sellke.
\newblock Approximating {C}ontinuous {F}unctions by {R}elu {N}ets of {M}inimal
  {W}idth.
\newblock {\em arXiv preprint arXiv:1710.11278}, 2017.

\bibitem{hertrich2021convolutional}
Johannes Hertrich, Sebastian Neumayer, and Gabriele Steidl.
\newblock Convolutional {P}roximal {N}eural {N}etworks and {P}lug-and-{P}lay
  {A}lgorithms.
\newblock {\em Linear Algebra and its Applications}, 631:203--234, 2021.

\bibitem{hornik1989multilayer}
Kurt Hornik, Maxwell Stinchcombe, and Halbert White.
\newblock Multilayer {F}eedforward {N}etworks are {U}niversal {A}pproximators.
\newblock {\em Neural Networks}, 2(5):359--366, 1989.

\bibitem{kim2021lipschitz}
Hyunjik Kim, George Papamakarios, and Andriy Mnih.
\newblock The {L}ipschitz {C}onstant of {S}elf-{A}ttention.
\newblock In {\em International Conference on Machine Learning}, pages
  5562--5571. PMLR, 2021.

\bibitem{leake2020deep}
Carl Leake and Daniele Mortari.
\newblock Deep {T}heory of {F}unctional {C}onnections: {A} {N}ew {M}ethod for
  {E}stimating the {S}olutions of {P}artial {D}ifferential {E}quations.
\newblock {\em Machine Learning and Knowledge Extraction}, 2(1):37--55, 2020.

\bibitem{leshno1993multilayer}
Moshe Leshno, Vladimir~Ya. Lin, Allan Pinkus, and Shimon Schocken.
\newblock Multilayer feedforward networks with a nonpolynomial activation
  function can approximate any function.
\newblock {\em Neural Networks}, 6(6):861--867, 1993.

\bibitem{lunz2018adversarial}
Sebastian Lunz, Ozan {\"O}ktem, and Carola-Bibiane Sch{\"o}nlieb.
\newblock Adversarial {R}egularizers in {I}nverse {P}roblems.
\newblock {\em Advances in Neural Information Processing Systems}, 31, 2018.

\bibitem{massucco2025neural}
Alex Massucco, Davide Murari, and Carola-Bibiane Sch{\"o}nlieb.
\newblock Neural {N}etworks with {O}rthogonal {J}acobian.
\newblock {\em arXiv preprint arXiv:2508.02882}, 2025.

\bibitem{meunier2022dynamical}
Laurent Meunier, Blaise~J Delattre, Alexandre Araujo, and Alexandre Allauzen.
\newblock A {D}ynamical {S}ystem {P}erspective for {L}ipschitz {N}eural
  {N}etworks.
\newblock In {\em International Conference on Machine Learning}, pages
  15484--15500. PMLR, 2022.

\bibitem{miyato2018spectral}
Takeru Miyato, Toshiki Kataoka, Masanori Koyama, and Yuichi Yoshida.
\newblock Spectral {N}ormalization for {G}enerative {A}dversarial {N}etworks.
\newblock {\em arXiv preprint arXiv:1802.05957}, 2018.

\bibitem{neumayer2023approximation}
Sebastian Neumayer, Alexis Goujon, Pakshal Bohra, and Michael Unser.
\newblock Approximation of {L}ipschitz {F}unctions using {D}eep {S}pline
  {N}eural {N}etworks.
\newblock {\em SIAM Journal on Mathematics of Data Science}, 5(2):306--322,
  2023.

\bibitem{opschoor2020deep}
Joost~AA Opschoor, Philipp~C Petersen, and Christoph Schwab.
\newblock Deep {ReLU} networks and high-order finite element methods.
\newblock {\em Analysis and Applications}, 18(05):715--770, 2020.

\bibitem{ovchinnikov2000max}
Sergei Ovchinnikov.
\newblock Max-{M}in {R}epresentation of {P}iecewise {L}inear {F}unctions.
\newblock {\em arXiv preprint math/0009026}, 2000.

\bibitem{pinkus1999approximation}
Allan Pinkus.
\newblock Approximation theory of the {MLP} model in neural networks.
\newblock {\em Acta Numerica}, 8:143--195, 1999.

\bibitem{prach20241}
Bernd Prach, Fabio Brau, Giorgio Buttazzo, and Christoph~H Lampert.
\newblock 1-{L}ipschitz {L}ayers {C}ompared: {M}emory, {S}peed, and
  {C}ertifiable {R}obustness.
\newblock In {\em Proceedings of the IEEE/CVF Conference on Computer Vision and
  Pattern Recognition}, pages 24574--24583, 2024.

\bibitem{qi2023lipsformer}
Xianbiao Qi, Jianan Wang, Yihao Chen, Yukai Shi, and Lei Zhang.
\newblock Lipsformer: {I}ntroducing {L}ipschitz {C}ontinuity to {V}ision
  {T}ransformers.
\newblock {\em arXiv preprint arXiv:2304.09856}, 2023.

\bibitem{richter2022neural}
Jack Richter-Powell, Yaron Lipman, and Ricky~TQ Chen.
\newblock Neural {C}onservation {L}aws: {A} {D}ivergence-{F}ree {P}erspective.
\newblock {\em Advances in Neural Information Processing Systems},
  35:38075--38088, 2022.

\bibitem{ryu2019plug}
Ernest Ryu, Jialin Liu, Sicheng Wang, Xiaohan Chen, Zhangyang Wang, and Wotao
  Yin.
\newblock Plug-and-{P}lay {M}ethods {P}rovably {C}onverge with {P}roperly
  {T}rained {D}enoisers.
\newblock In {\em International Conference on Machine Learning}, pages
  5546--5557. PMLR, 2019.

\bibitem{sherry2024designing}
Ferdia Sherry, Elena Celledoni, Matthias~J Ehrhardt, Davide Murari, Brynjulf
  Owren, and Carola-Bibiane Sch{\"o}nlieb.
\newblock Designing {S}table {N}eural {N}etworks using {C}onvex {A}nalysis and
  {ODE}s.
\newblock {\em Physica D: Nonlinear Phenomena}, 463:134159, 2024.

\bibitem{szegedy2013intriguing}
Christian Szegedy, Wojciech Zaremba, Ilya Sutskever, Joan Bruna, Dumitru Erhan,
  Ian Goodfellow, and Rob Fergus.
\newblock Intriguing properties of neural networks.
\newblock {\em arXiv preprint arXiv:1312.6199}, 2013.

\bibitem{trockman2021orthogonalizing}
Asher Trockman and J~Zico Kolter.
\newblock Orthogonalizing {C}onvolutional {L}ayers with the {C}ayley
  {T}ransform.
\newblock {\em arXiv preprint arXiv:2104.07167}, 2021.

\bibitem{tsuzuku2018lipschitz}
Yusuke Tsuzuku, Issei Sato, and Masashi Sugiyama.
\newblock Lipschitz-{M}argin {T}raining: {S}calable {C}ertification of
  {P}erturbation {I}nvariance for {D}eep {N}eural {N}etworks.
\newblock {\em Advances in Neural Information Processing Systems}, 31, 2018.

\bibitem{villani2008optimal}
C{\'e}dric Villani.
\newblock {\em Optimal {T}ransport: {O}ld and {N}ew}, volume 338.
\newblock Springer, 2008.

\bibitem{wang2020orthogonal}
Jiayun Wang, Yubei Chen, Rudrasis Chakraborty, and Stella~X Yu.
\newblock Orthogonal {C}onvolutional {N}eural {N}etworks.
\newblock In {\em Proceedings of the IEEE/CVF Conference on Computer Vision and
  Pattern Recognition}, pages 11505--11515, 2020.

\bibitem{xiao2018dynamical}
Lechao Xiao, Yasaman Bahri, Jascha Sohl-Dickstein, Samuel Schoenholz, and
  Jeffrey Pennington.
\newblock Dynamical {I}sometry and a {M}ean {F}ield {T}heory of {CNN}s: {H}ow
  to {T}rain 10,000-{L}ayer {V}anilla {C}onvolutional {N}eural {N}etworks.
\newblock In {\em International conference on machine learning}, pages
  5393--5402. PMLR, 2018.

\bibitem{yarotsky2017error}
Dmitry Yarotsky.
\newblock Error bounds for approximations with deep {R}e{LU} networks.
\newblock {\em Neural Networks}, 94:103--114, 2017.

\bibitem{yarotsky2018optimal}
Dmitry Yarotsky.
\newblock Optimal approximation of continuous functions by very deep {R}e{LU}
  networks.
\newblock In {\em Conference on Learning Theory}, 2018.

\bibitem{yoshida2017spectral}
Yuichi Yoshida and Takeru Miyato.
\newblock Spectral {N}orm {R}egularization for {I}mproving the
  {G}eneralizability of {D}eep {L}earning.
\newblock {\em arXiv preprint arXiv:1705.10941}, 2017.

\end{thebibliography}

\newpage

\appendix

\section{Proofs in Section~\ref{sec:Proposed-Class}}

\subsection{Proof of Theorem~\ref{thm:gradFreeInner} based on Restricted Stone-Weierstrass}
\label{app:firstProof}
In this appendix, we provide a proof of the lemmas used in Section \ref{sec:proofStoneWeierstrass}. Before proving them, we report the statement as well.

We start with Lemma \ref{lemma:GseparatesPoints}.
\begin{lemma}
Let $d\in\mathbb{N}$, $\mathcal{X}\subseteq\mathbb{R}^d$ have at least two points, and $\sigma=\mathrm{ReLU}$. Then $\mathcal{G}_{d,\sigma}(\mathcal{X},\mathbb{R})$ separates the points of $\mathcal{X}$.
\end{lemma}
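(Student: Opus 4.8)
The plan is to reduce the separation property entirely to the claim that every affine $1$-Lipschitz function $\mathcal{X}\to\mathbb{R}$ lies in $\mathcal{G}_{d,\sigma}(\mathcal{X},\mathbb{R})$, and then to exhibit, for arbitrary prescribed data, an affine $1$-Lipschitz interpolant.

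First I would establish the key membership fact. Given $w\in\mathbb{R}^d$ with $\|w\|_2\le 1$ and $c\in\mathbb{R}$, consider the affine map $x\mapsto w^\top x+c$. I would realise it inside the architecture with hidden width $h=1$ by taking $Q(x)=w^\top x+c$ (so $\widehat{Q}=w^\top$ and $\widehat{q}=c$), a single residual block $\Phi_{\theta_1}$ equal to the identity on $\mathbb{R}$, and $v=1$. The identity indeed belongs to $\mathcal{E}_{1,\sigma}$ by Lemma~\ref{lemma:linearGradient} (apply it with $M=I_1$, whose single eigenvalue lies in $[0,1]$). The resulting composition $v^\top\circ\Phi_{\theta_1}\circ Q$ equals $x\mapsto w^\top x+c$, which is $1$-Lipschitz because $\|w\|_2\le 1$, hence lies in the intersection with $\mathcal{C}_1(\mathcal{X},\mathbb{R})$ that defines $\mathcal{G}_{d,\sigma}(\mathcal{X},\mathbb{R})$.

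Next I would perform the interpolation. Fix distinct $x,y\in\mathcal{X}$ and $a,b\in\mathbb{R}$ with $|a-b|\le\|y-x\|_2$. Set
\[
w=\frac{b-a}{\|y-x\|_2^2}\,(y-x),\qquad c=a-w^\top x.
\]
Then $\|w\|_2=|b-a|/\|y-x\|_2\le 1$ by the hypothesis on $a,b$, and a direct computation gives $w^\top x+c=a$ together with $w^\top(y-x)=b-a$, so the map $f(\cdot)=w^\top\cdot+c$ satisfies $f(x)=a$ and $f(y)=b$. By the membership fact, $f\in\mathcal{G}_{d,\sigma}(\mathcal{X},\mathbb{R})$, which furnishes the required separating function.

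There is essentially no hard step here; the only point requiring care is confirming that a genuinely affine map can be expressed within the compositional form that defines $\mathcal{G}_{d,\sigma}(\mathcal{X},\mathbb{R})$ — that is, that we are permitted a trivial (identity) residual block — which is exactly what Lemma~\ref{lemma:linearGradient} secures. Everything else is the elementary geometry of interpolating two prescribed values by an affine function whose gradient norm is controlled by $|a-b|/\|y-x\|_2$.
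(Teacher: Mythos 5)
Your proof is correct and follows essentially the same route as the paper: both arguments reduce separation to realising affine $1$-Lipschitz interpolants $x\mapsto w^\top x+c$ with $w$ parallel to $y-x$ and $\|w\|_2=|a-b|/\|y-x\|_2\le 1$ inside $\mathcal{G}_{d,\sigma}(\mathcal{X},\mathbb{R})$. The only cosmetic difference is that the paper takes $L=0$ (no residual blocks), whereas you insert an explicit identity block via Lemma~\ref{lemma:linearGradient} — a slightly more careful reading of the definition, but not a different argument.
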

\begin{proof}
Consider the functions in $\mathcal{G}_{d,\sigma}(\mathcal{X},\mathbb{R})$ with $h=1$ and $L=0$. These are all and only the $1$-Lipschitz affine functions of the form $g(x)=u^\top x + w$, for a pair $u\in\mathbb{R}^d$ and $w\in\mathbb{R}$, with $\|u\|_2\leq 1$. Let $x,y\in \mathcal{X}$ be two distinct points and $a,b\in\mathbb{R}$ be such that $|a-b|\leq \|y-x\|_2$. Let 
\[
\widehat{u} = \frac{x-y}{\|x-y\|_2}\in\mathbb{R}^d.
\]
Since $\widehat{u}^\top x - \widehat{u}^\top y = \widehat{u}^\top (x-y) = \|x-y\|_2\neq 0$, the linear system
\[
\begin{bmatrix}
\widehat{u}^\top x & 1 \\ \widehat{u}^\top y & 1
\end{bmatrix}\begin{bmatrix} \lambda \\ w \end{bmatrix} = \begin{bmatrix}
    a \\ b
\end{bmatrix}
\]
has a unique solution. We set $g(z)=\lambda \widehat{u}^\top z + w$, where
\[
\lambda = \frac{1}{\|x-y\|_2}(a-b).
\]
Since $|a-b|\leq \|x-y\|_2$ we conclude $|\lambda|\leq 1$ and hence if we set $g(x)=u^\top x + w$, with $u=\lambda \widehat{u}$, we get $g\in\mathcal{G}_{d,\sigma}(\mathcal{X},\mathbb{R})$, $g(x)=a$, and $g(y)=b$ as desired.
\end{proof}

We now provide the statement and proof of Lemma \ref{lemma:secondToLastStep}.
\begin{lemma}
Let $d\in\mathbb{N}$, $\mathcal{X}\subseteq\mathbb{R}^d$, $\sigma=\mathrm{ReLU}$. Consider two functions $f,g\in\mathcal{G}_{d,\sigma}(\mathcal{X},\mathbb{R})$. There exist $L\in\mathbb{N}$, $h_1,h_2\in\mathbb{N}$, $v_1\in\mathbb{R}^{h_1},v_2\in\mathbb{R}^{h_2}$ with $\|v_1\|_2=\|v_2\|_2=1$, $Q_1:\mathbb{R}^d\to\mathbb{R}^{h_1}$ and $Q_2:\mathbb{R}^d\to\mathbb{R}^{h_2}$ affine maps, $\Phi_{\theta_1},...,\Phi_{\theta_L}\in\mathcal{E}_{h_1+h_2,\sigma}$, and $M\in\mathbb{R}^{(h_1+h_2)\times (h_1+h_2)}$ symmetric positive semi-definite with $\|M\|_2\leq 1$, such that 
\[
\begin{bmatrix}
    f(x)v_1 \\ g(x)v_2
\end{bmatrix} =M \circ \Phi_{\theta_L}\circ ... \circ \Phi_{\theta_1} \circ \begin{bmatrix} Q_1 \\ Q_2 \end{bmatrix} x.
\]
\end{lemma}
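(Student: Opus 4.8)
The plan is to run the two given networks ``in parallel'' as a single block-diagonal residual network on $\mathbb{R}^{h_1+h_2}$, and then to collapse its output into the required rank-one form using $M$. First I would write the two functions explicitly: since $f,g\in\mathcal{G}_{d,\sigma}(\mathcal{X},\mathbb{R})$, there are widths $h_1,h_2$, depths $L_f,L_g$, unit vectors $v_1\in\mathbb{R}^{h_1}$, $v_2\in\mathbb{R}^{h_2}$, affine lifting maps $Q_1,Q_2$, and residual layers $\Phi^f_\ell\in\mathcal{E}_{h_1,\sigma}$, $\Phi^g_\ell\in\mathcal{E}_{h_2,\sigma}$ with $f=v_1^\top\circ\Phi^f_{L_f}\circ\cdots\circ\Phi^f_1\circ Q_1$ and $g=v_2^\top\circ\Phi^g_{L_g}\circ\cdots\circ\Phi^g_1\circ Q_2$. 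Writing $u(x)$ and $w(x)$ for the respective pre-projection outputs, we have $f(x)=v_1^\top u(x)$ and $g(x)=v_2^\top w(x)$.

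Before combining I would make the two networks compatible in two ways. I would equalise the depths by setting $L=\max\{L_f,L_g\}$ and padding the shorter network with identity layers, which is legitimate because the identity belongs to $\mathcal{E}_{h,\sigma}$ (take $W=0$, or apply Lemma~\ref{lemma:linearGradient} with $M=I_h$). I would also normalise every residual step to step size $\tau=2$: given a layer $x\mapsto x-\tau W^\top\sigma(Wx+b)$ with $0\leq\tau\leq 2$, the substitution $W\leftarrow\sqrt{\tau/2}\,W$, $b\leftarrow\sqrt{\tau/2}\,b$ leaves the map unchanged by positive homogeneity of $\sigma$ while preserving $\|W\|_2\leq 1$, so every layer may be assumed to have $\tau=2$. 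This normalisation is what lets two layers with originally distinct step sizes be merged, since a single shared $\tau$ multiplies both diagonal blocks.

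The core step is then to stack each matched pair of layers block-diagonally: for layer $\ell$ I would let $W_\ell$ be the block-diagonal matrix with diagonal blocks $W^f_\ell$ and $W^g_\ell$, set $b_\ell=\begin{bmatrix} b^f_\ell \\ b^g_\ell\end{bmatrix}$, and define $\Phi_{\theta_\ell}\begin{bmatrix} y \\ z\end{bmatrix}=\begin{bmatrix} y \\ z\end{bmatrix}-2W_\ell^\top\sigma\bigl(W_\ell\begin{bmatrix} y \\ z\end{bmatrix}+b_\ell\bigr)$, which acts as $\Phi^f_\ell$ on the first $h_1$ coordinates and as $\Phi^g_\ell$ on the last $h_2$. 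Because the spectral norm of a block-diagonal matrix equals the maximum of the spectral norms of its blocks, $\|W_\ell\|_2\leq 1$, and with $\tau=2$ this certifies $\Phi_{\theta_\ell}\in\mathcal{E}_{h_1+h_2,\sigma}$. Feeding $\begin{bmatrix} Q_1 \\ Q_2\end{bmatrix}x$ through $\Phi_{\theta_L}\circ\cdots\circ\Phi_{\theta_1}$ then yields exactly $\begin{bmatrix} u(x) \\ w(x)\end{bmatrix}$.

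Finally I would take $M$ to be the block-diagonal matrix with blocks $v_1v_1^\top$ and $v_2v_2^\top$, so that $M\begin{bmatrix} u(x) \\ w(x)\end{bmatrix}=\begin{bmatrix} (v_1^\top u(x))v_1 \\ (v_2^\top w(x))v_2\end{bmatrix}=\begin{bmatrix} f(x)v_1 \\ g(x)v_2\end{bmatrix}$. Each block $v_iv_i^\top$ is the orthogonal projector onto $\mathrm{span}(v_i)$, hence symmetric positive semi-definite with eigenvalues in $\{0,1\}$; therefore $M$ is symmetric positive semi-definite with $\|M\|_2=1$, as required (and, by Lemma~\ref{lemma:linearGradient}, $M$ also defines an element of $\mathcal{E}_{h_1+h_2,\sigma}$). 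I expect the main obstacle to be the bookkeeping in the block-diagonal construction, specifically checking that the merged layer still obeys the $\|W\|_2\leq 1$ and $0\leq\tau\leq 2$ constraints of $\mathcal{E}_{h_1+h_2,\sigma}$; the $\tau$-normalisation together with the block-diagonal spectral-norm identity is precisely what makes this verification go through cleanly.
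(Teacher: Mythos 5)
Your proof is correct and follows essentially the same strategy as the paper's: parallel block-diagonal stacking of the two networks, identity-layer padding to equalise depths, absorbing step sizes into the weights via positive homogeneity of $\mathrm{ReLU}$, and taking $M$ block-diagonal with blocks $v_1v_1^\top$ and $v_2v_2^\top$. Your only (minor, and arguably cleaner) deviation is normalising every layer to $\tau=2$ globally, whereas the paper equalises the step sizes pairwise within each matched layer; both rest on the same rescaling trick and handle the $\tau=0$ case without issue.
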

\begin{proof}
Consider $f,g\in\mathcal{G}_{d,\sigma}(\mathcal{X},\mathbb{R})$ that take the form
\begin{equation}\label{eq:fg}
\begin{split}
f(x)&=v^\top_1\circ \Phi_{\theta_{1,L_{1}}}\circ ... \circ \Phi_{\theta_{1,1}}\circ Q_1(x)\\
g(x)&=v_2^\top \circ \Phi_{\theta_{2,L_2}}\circ ... \circ \Phi_{\theta_{2,1}}\circ Q_2(x),
\end{split}
\end{equation}
with $\Phi_{\theta_{i,\ell}}:\mathbb{R}^{h_i}\to\mathbb{R}^{h_i}$, for $i=1,2$ and $\ell=1,...,L_{i}$. Since the identity map belongs to $\mathcal{E}_{h,\sigma}$ for any $h\in\mathbb{N}$, we can assume $L_1=L_2=L$. Let $\ell\in\{1,...,L\}$ and define $\Phi_{\theta_\ell}:\mathbb{R}^{(h_1+h_2)}\to\mathbb{R}^{(h_1+h_2)}$ as $\Phi_{\theta_\ell}(x_1,x_2)=(\Phi_{\theta_{1,\ell}}(x_1),\Phi_{\theta_{2,\ell}}(x_2))$, for every $(x_1,x_2)\in\mathbb{R}^{h_1}\times\mathbb{R}^{h_2}$.  More extensively, we can write
\[
\Phi_{\theta_\ell}(x_1,x_2)=\begin{pmatrix}
    x_1 - \tau_{1,\ell} (W_{1,\ell})^\top \sigma(W_{1,\ell} x_1 + b_{1,\ell}) \\ 
    x_2 - \tau_{2,\ell} (W_{2,\ell})^\top \sigma(W_{2,\ell} x_2 + b_{2,\ell})
\end{pmatrix},
\]
for a suitable choice of parameters. The dimensions of these parameters are: $\tau_{1,\ell},\tau_{2,\ell}\in\mathbb{R}$, $W_{1,\ell}\in\mathbb{R}^{h_{1,\ell}\times h_1}$, $W_{2,\ell}\in\mathbb{R}^{h_{2,\ell}\times h_2}$, $b_{1,\ell}\in\mathbb{R}^{h_{1,\ell}}$, and $b_{2,\ell}\in\mathbb{R}^{h_{2,\ell}}$. Assume, without loss of generality, that $\tau_{1,\ell},\tau_{2,\ell}\neq 0$ and $\tau_{1,\ell} < \tau_{2,\ell}$. We remark that if either $\tau_{1,\ell}$ or $\tau_{2,\ell}$ were zero, we could get the same map by setting the corresponding weight matrix to zero and replacing the step with any other admissible scalar. Call $\gamma_\ell = \tau_{1,\ell} / \tau_{2,\ell}\in (0,1)$. It follows that
\[
\begin{split}
&x_1 - \tau_{1,\ell} (W_{1,\ell})^\top \sigma(W_{1,\ell} x_1 + b_{1,\ell}) \\
&= x_1 - \tau_{2,\ell} \gamma_\ell (W_{1,\ell})^\top \sigma(W_{1,\ell} x_1 + b_{1,\ell}) \\
&=  x_1 - \tau_{2,\ell} (\sqrt{\gamma_\ell} W_{1,\ell})^\top \sigma\left(\sqrt{\gamma_\ell}W_{1,\ell} x_1 + \sqrt{\gamma_\ell}b_{1,\ell}\right)\\
&=x_1 - \tau_{2,\ell} (\widetilde{W}_{1,\ell})^\top \sigma(\widetilde{W}_{1,\ell} x_1 + \widetilde{b}_{1,\ell}),\,\,\widetilde{W}_{1,\ell} := \sqrt{\gamma_\ell} W_{1,\ell}, \widetilde{b}_{1,\ell} := \sqrt{\gamma_\ell}b_{1,\ell},
\end{split}
\]
where we used the positive homogeneity of $\sigma$, i.e., $\sigma(\gamma x)=\gamma\sigma(x)$ for all $x\in\mathbb{R}$ and $\gamma\geq 0$. Since $\sqrt{\gamma}\in (0,1)$, the property $\|\widetilde{W}_\ell\|_2\leq 1$ is preserved by this manipulation and, therefore, we can assume to have $\tau_{1,\ell}=\tau_{2,\ell}=:\tau_\ell$ for every $\ell=1,...,L$. It follows that
\[
\begin{split}
\Phi_{\theta_\ell}(x_1,x_2)&=\begin{pmatrix}
    x_1 - \tau_\ell (W_{1,\ell})^\top \sigma(W_{1,\ell} x_1 + b_{1,\ell}) \\ 
    x_2 - \tau_\ell (W_{2,\ell})^\top \sigma(W_{2,\ell} x_2 + b_{2,\ell})
\end{pmatrix} \\
&= \begin{pmatrix}
    x_1 \\ x_2
\end{pmatrix} - \tau_\ell \begin{pmatrix}
    W_{1,\ell} & 0_{h_{1,\ell},h_2} \\ 0_{h_{2,\ell},h_1} & W_{2,\ell}
\end{pmatrix}^\top \sigma\left(\begin{pmatrix}
    W_{1,\ell} & 0_{h_{1,\ell},h_2} \\ 0_{h_{2,\ell},h_1} & W_{2,\ell}
\end{pmatrix}\begin{pmatrix}
    x_1 \\ x_2
\end{pmatrix} + \begin{pmatrix}
    b_{1,\ell} \\ b_{2,\ell}
\end{pmatrix}\right)\\
&=\begin{pmatrix}
    x_1 \\ x_2
\end{pmatrix} - \tau_\ell \widehat{W}_\ell^\top \sigma\left(\widehat{W}_\ell \begin{pmatrix}
    x_1 \\ x_2
\end{pmatrix} + \widehat{b}_\ell\right),\\
&\widehat{W}_\ell = \begin{pmatrix}
    W_{1,\ell} & 0_{h_{1,\ell},h_2} \\ 0_{h_{2,\ell},h_1} & W_{2,\ell}
\end{pmatrix}\in\mathbb{R}^{(h_{1,\ell}+h_{2,\ell})\times (h_1+h_2)},\,\,\widehat{b}_\ell = \begin{pmatrix}
    b_{1,\ell} \\ b_{2,\ell}
    \end{pmatrix}\in\mathbb{R}^{(h_{1,\ell} + h_{2,\ell})}.
\end{split}
\]
We also remark that
\[
\|\widehat{W}_\ell\|_2 = \max\{\|W_{1,\ell}\|_2,\|W_{2,\ell}\|_2\}\leq 1.
\]
Let us then consider the matrix 
\[
M = \begin{bmatrix} v_1v^\top_1 & 0_{h_1,h_2} \\ 0_{h_2,h_1} & v_2v_2^\top \end{bmatrix}\in\mathbb{R}^{(h_1+h_2)\times (h_1+h_2)},
\]
which is symmetric, positive semi-definite and with $\|M\|_2\leq 1$ as desired. It is immediate to see that $M$ plays the desired role, given the expression for $f$ and $g$ in \eqref{eq:fg}.
\end{proof}

\subsection{Proof of Theorem~\ref{thm:gradFreeInner} based on piecewise affine functions}
\label{app:alt:thm:gradFreeInner}

We now provide the statement and proof of Theorem \ref{thm:pwl1Lipschitz}.
\begin{theorem}
Any piecewise affine 1-Lipschitz function $f:\mathbb{R}^d\to\mathbb{R}$ can be represented by a network in $\mathcal{G}_{d,\sigma}(\mathbb{R}^d,\mathbb{R})$ with $\sigma=\mathrm{ReLU}$.
\end{theorem}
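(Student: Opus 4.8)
The plan is to combine the max–min representation of piecewise affine functions from Lemma~\ref{lemma:maxMin} with the explicit residual-layer construction underlying Proposition~\ref{prop:maxRepresentation}. First I would apply Lemma~\ref{lemma:maxMin} to write
\[
f(x)=\max_{1\le i\le k}\min_{1\le j\le l_i}\bigl(a_{i,j}^\top x+b_{i,j}\bigr),
\]
and set $N=\sum_{i=1}^k l_i$, the total number of affine pieces. Because the lifting matrix $\widehat{Q}$ in the definition of $\mathcal{G}_{d,\sigma}(\mathbb{R}^d,\mathbb{R})$ carries no norm constraint, I can take $h=N$ and let $Q:\mathbb{R}^d\to\mathbb{R}^N$ be the affine map whose $(i,j)$-th output coordinate equals $a_{i,j}^\top x+b_{i,j}$. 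Note that the unconstrained lifting absorbs all the affine pieces at once, so in this unbounded-width regime the bound $\|a_{i,j}\|_2\le 1$ from the remark after Lemma~\ref{lemma:maxMin} is not even required for the construction.

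The core of the argument is to realise the map $u\mapsto\max_i\min_j u_{(i,j)}$ from $\mathbb{R}^N$ to $\mathbb{R}$ as a composition of residual layers in $\mathcal{E}_{N,\sigma}$ followed by a unit covector. Exactly as in Proposition~\ref{prop:maxRepresentation}, the in-place pairwise operations
\[
x\mapsto(\min\{x_1,x_2\},\max\{x_1,x_2\},x_3,\dots),\qquad x\mapsto(\max\{x_1,x_2\},\min\{x_1,x_2\},x_3,\dots)
\]
are each of the form $x\mapsto x-2W^\top\sigma(Wx)$ with $W$ supported on two coordinates and $\|W\|_2=1$, hence lie in $\mathcal{E}_{N,\sigma}$. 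I would first apply, block by block, a cascade of pairwise-min layers that collapses each group $i$ to its minimum $\min_j u_{(i,j)}$, leaving that value in a designated coordinate; then apply a cascade of pairwise-max layers that reduces the $k$ group-minima to their overall maximum. Setting $v=e_p$, the canonical vector pointing at the coordinate holding the final value (so $\|v\|_2=1$), yields
\[
v^\top\circ\Phi_{\theta_L}\circ\cdots\circ\Phi_{\theta_1}\circ Q=f.
\]

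Finally, since $f$ is $1$-Lipschitz by hypothesis it belongs to $\mathcal{C}_1(\mathbb{R}^d,\mathbb{R})$, so the representation above exhibits $f$ as an element of the intersection defining $\mathcal{G}_{d,\sigma}(\mathbb{R}^d,\mathbb{R})$. The main obstacle is purely organisational: keeping track of which coordinate holds each running minimum and each running maximum throughout the reduction, so that every pairwise layer acts on the intended pair and the already-computed group-minima are not corrupted before the final max-cascade. This is a reduction-tree bookkeeping argument in the spirit of sorting networks, conceptually routine but index-heavy; once the routing is fixed, verifying that each layer sits in $\mathcal{E}_{N,\sigma}$ is immediate from the single-pair computation above. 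I note that a shorter, non-constructive route is also available by invoking that $\mathcal{G}_{d,\sigma}(\mathbb{R}^d,\mathbb{R})$ is a lattice containing all affine $1$-Lipschitz functions, but the explicit construction is preferable here as it is the whole point of this constructive section.
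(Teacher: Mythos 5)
Your proposal is correct and follows essentially the same route as the paper's proof: both invoke Lemma~\ref{lemma:maxMin} for the max--min representation, use the unconstrained lifting $Q$ to produce all $\sum_i l_i$ affine pieces at once, reduce them via the pairwise $\max$/$\min$ residual layers of Proposition~\ref{prop:maxRepresentation}, and conclude membership in $\mathcal{G}_{d,\sigma}(\mathbb{R}^d,\mathbb{R})$ from the hypothesis that $f$ is $1$-Lipschitz, so the intersection with $\mathcal{C}_1$ is automatic. The only cosmetic difference is that the paper assembles the group minima in parallel using block-diagonal weight matrices (giving depth at most $(k-1)+(\max_i l_i - 1)$) while you describe a sequential cascade, which is immaterial since the depth $L$ is unbounded in this regime.
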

\begin{proof}
By Lemma \ref{lemma:maxMin}, there exists a choice of scalars $b_{i,j}\in\mathbb{R}$ and vectors $a_{i,j}\in\mathbb{R}^d$ such that
\[
f(x) = \max\{f_1(x),...,f_k(x)\},\,\,f_i(x) = \min\{a_{i,1}^\top x+b_{i,1},...,a_{i,l_i}^\top x+b_{i,l_i}\},\,i=1,...,k,
\]
where $\|a_{i,j}\|_2\leq 1$. Fix $h=l_1+...+l_k$. We set $\widehat{Q}\in\mathbb{R}^{h\times d}$ and $\widehat{q}\in\mathbb{R}^{h}$ as
\[
\widehat{Q} = \begin{bmatrix}
    a_{1,1}^\top \\ \vdots \\ a_{1,l_1}^\top \\ \vdots \\ a_{k,1}^\top \\ \vdots \\ a_{k,l_k}^\top 
\end{bmatrix},\,\,\widehat{q} = \begin{bmatrix} b_{1,1} \\ \vdots \\ b_{1,l_1} \\ \vdots \\ b_{k,1} \\ \vdots \\ b_{k,l_k}
\end{bmatrix}.
\]
Then, by Proposition \ref{prop:maxRepresentation}, we conclude that there exists a choice $\Phi_{\theta_1},...,\Phi_{\theta_L}$, with $L \leq (k-1) + (\max\{l_{1},...,l_{k}\}-1)$, and a unit-norm vector $v\in\mathbb{R}^h$ such that
\[
f = v^\top \circ \Phi_{\theta_L} \circ ... \circ \Phi_{\theta_1} \circ Q,
\]
where $Q(x) = \widehat{Q}x + \widehat{q}$, as desired. More explicitly, the first $\max\{l_1,...,l_{k}\}-1$ residual layers can be used to assemble the functions $f_1(x),...,f_k(x)$ in parallel, using block diagonal weight matrices. After these, the remaining $k-1$ can be used to extract their minimum. We remark that, despite $Q$ is not $1$-Lipschitz, the resulting map $f$ is $1$-Lipschitz, and hence $f$ belongs to $\mathcal{G}_{d,\sigma}(\mathbb{R}^d,\mathbb{R})$.
\end{proof}

We now provide the statement and proof of Lemma \ref{lemma:densityPWL} for convex polytopes.
\begin{lemma}
The set of piecewise affine $1$-Lipschitz functions over $\mathcal{X}\subset\mathbb{R}^d$, a compact and convex polytope, satisfies the universal approximation property for $\mathcal{C}_1(\mathcal{X},\mathbb{R})$.
\end{lemma}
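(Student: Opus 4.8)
The plan is to prove that piecewise affine $1$-Lipschitz functions are dense in $\mathcal{C}_1(\mathcal{X},\mathbb{R})$ when $\mathcal{X}$ is a compact convex polytope, by an explicit construction based on triangulating $\mathcal{X}$ finely and interpolating the target. Fix $f\in\mathcal{C}_1(\mathcal{X},\mathbb{R})$ and $\varepsilon>0$. Since $\mathcal{X}$ is a convex polytope, it admits a simplicial triangulation $\mathcal{T}_\delta$ whose simplices all have diameter at most $\delta$; uniform continuity of $f$ on the compact set $\mathcal{X}$ (here quantified by the $1$-Lipschitz bound itself) lets me pick $\delta$ small enough that $f$ varies by at most $\varepsilon$ over each simplex. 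I would then define $g_\delta$ to be the continuous piecewise affine interpolant of $f$ at the vertices of $\mathcal{T}_\delta$, i.e. the unique function that equals $f$ at every vertex and is affine on each simplex.

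The first key step is the uniform approximation bound. On any simplex $S$ with vertices $p_0,\dots,p_d$, a point $x\in S$ is a convex combination $x=\sum_i\lambda_i p_i$, and $g_\delta(x)=\sum_i\lambda_i f(p_i)$. Then $|f(x)-g_\delta(x)|\le\sum_i\lambda_i|f(x)-f(p_i)|\le\sum_i\lambda_i\|x-p_i\|_2\le\delta$, using the $1$-Lipschitz property of $f$ and $\|x-p_i\|_2\le\mathrm{diam}(S)\le\delta$. Choosing $\delta<\varepsilon$ gives $\max_{x\in\mathcal{X}}|f(x)-g_\delta(x)|<\varepsilon$. This part is routine once the triangulation is in place.

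The main obstacle, and the step I would treat most carefully, is showing that the interpolant $g_\delta$ is itself $1$-Lipschitz, so that it lands in the right function class rather than merely approximating $f$. On each closed simplex $g_\delta$ is affine with gradient equal to the affine interpolant of the vertex data; I would bound this gradient by noting that for any two vertices $p_i,p_j$ of a simplex, $|g_\delta(p_i)-g_\delta(p_j)|=|f(p_i)-f(p_j)|\le\|p_i-p_j\|_2$, so the affine piece is $1$-Lipschitz on the affine span of that simplex. The delicate point is global Lipschitzness across simplex boundaries: piecewise-affine-with-each-piece-$1$-Lipschitz does not automatically yield global $1$-Lipschitzness, but because $g_\delta$ is continuous and $\mathcal{X}$ is convex, I can compare any two points $x,y$ via the straight segment $[x,y]\subset\mathcal{X}$, which meets the simplices in finitely many subsegments; summing the $1$-Lipschitz estimate on each subsegment and using that the segment lengths add up to $\|x-y\|_2$ yields $|g_\delta(x)-g_\delta(y)|\le\|x-y\|_2$. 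Convexity of $\mathcal{X}$ is what makes this segment argument valid and is precisely why the lemma is stated for convex polytopes.

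Finally I would assemble the pieces: $g_\delta$ is continuous, affine on each simplex of a finite triangulation, hence piecewise affine; it is $1$-Lipschitz by the segment argument; and it approximates $f$ uniformly within $\varepsilon$. This exhibits, for arbitrary $f\in\mathcal{C}_1(\mathcal{X},\mathbb{R})$ and $\varepsilon>0$, a piecewise affine $1$-Lipschitz function within $\varepsilon$, establishing the universal approximation property. I would remark that combining this lemma with Theorem~\ref{thm:pwl1Lipschitz} immediately completes the piecewise-affine proof of Theorem~\ref{thm:gradFreeInner}, and combining it with Theorem~\ref{thm:pwlGtilde} completes the proof of Theorem~\ref{thm:gradFixedInner}.
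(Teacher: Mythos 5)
Your overall architecture (triangulate finely, interpolate at vertices, glue, then prove global $1$-Lipschitzness by the segment argument over the convex domain) matches the paper's proof, and your error estimate and segment argument are fine. However, there is a genuine gap at the step you flagged as routine: you claim that because $|g_\delta(p_i)-g_\delta(p_j)|=|f(p_i)-f(p_j)|\le\|p_i-p_j\|_2$ for all pairs of vertices of a simplex, the affine piece is $1$-Lipschitz on that simplex. This implication is false. Being $1$-Lipschitz on the vertex pairs does not control the gradient of the affine interpolant. Concretely, take the equilateral triangle in $\mathbb{R}^2$ with vertices $p_0=(0,0)$, $p_1=(1,0)$, $p_2=(1/2,\sqrt{3}/2)$, and the $1$-Lipschitz function $f(z)=\max\{0,\,1-\|z-p_2\|_2\}$, so that $f(p_0)=f(p_1)=0$ and $f(p_2)=1$. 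The affine interpolant of these values is $g(x,y)=2y/\sqrt{3}$, whose gradient has norm $2/\sqrt{3}>1$; for simplices with worse aspect ratio the interpolant's Lipschitz constant can be made arbitrarily large. Refining the mesh does not help, since the counterexample is scale-invariant. So your $g_\delta$ need not lie in the class of $1$-Lipschitz functions at all, and the lemma is precisely about landing in that class.

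This is exactly the point where the paper's proof diverges from yours: it does \emph{not} use the canonical affine interpolant on each simplex. Instead, on each simplex $S_i$ it invokes a nontrivial construction (citing the proof of Proposition 2.2 in \cite{neumayer2023approximation}) of a $1$-Lipschitz piecewise affine function $g_i$ that merely \emph{agrees with $f$ at the vertices}, without being affine on all of $S_i$. The price is a weaker local error bound: since one can no longer use barycentric coordinates, the paper estimates $|f(x)-g_i(x)|\le|f(x)-f(x_{i,j})|+|g_i(x_{i,j})-g_i(x)|\le 2\|x-x_{i,j}\|_2$, using only that both $f$ and $g_i$ are $1$-Lipschitz and coincide at a nearby vertex $x_{i,j}$; this factor of $2$ is why the paper takes simplices of radius $\varepsilon/2$ rather than $\varepsilon$. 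After this repair, your gluing and segment argument go through verbatim and are the same as the paper's. To fix your proof you must either import such a $1$-Lipschitz interpolation construction, or prove one yourself; the rest of your argument can then stand.
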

\begin{proof}
Let $f:\mathcal{X}\to\mathbb{R}$ be an arbitrary $1$-Lipschitz function, and fix $\varepsilon>0$. Consider a covering $\{S_i:i=1,...,N\}$ of $\mathcal{X}$ into simplices having intersections that are either trivial, or coincide with a vertex, or with a facet. Assume, without loss of generality, that for every $S_i$, one has
\[
\max_{x\in S_i}\min_{x_{i,j}\in\mathcal{V}(S_i)}\|x-x_{i,j}\|_2\leq \varepsilon/2,
\]
where $\mathcal{V}(S_i)$ is the set of vertices of $S_i$. Let $g_i:S_i\to\mathbb{R}$ be a $1-$Lipschitz piecewise affine interpolant of $f$ on the vertices of $S_i$, i.e. such that $g_i(x_{i,j})=f(x_{i,j})$ for every $x_{i,j}\in\mathcal{V}(S_i)$. To build $g_i$, one could for example follow the construction in \cite[Proof of Proposition 2.2]{neumayer2023approximation}. Then, for every $x\in S_i$, there exists $x_{i,j}\in\mathcal{V}(S_i)$ for which
\[
|f(x)-g_i(x)| = |f(x)-f(x_{i,j})+f(x_{i,j})-g_i(x)| \leq 2\|x-x_{i,j}\|_2\leq \varepsilon.
\]
We can now glue the local pieces to assemble the piecewise affine continuous function $g(x) = \sum_{i=1}^N1_{S_i}(x)g_i(x)$, where
\[
1_{S_i}(x) = \begin{cases}
    1,\,\,x\in S_i\\
    0,\,\,x\notin S_i
\end{cases}.
\]
The Lipschitz constant of $g$ is given by $\mathrm{Lip}(g)=\max_{i=1,...,N}\mathrm{Lip}(g_i)\leq 1$. In fact, let $x,y\in\mathcal{X}$ and define the line segment $s(t) = x + t(y-x)$, $t\in [0,1]$, satisfying $s(0)=x$ and $s(1)=y$. By convexity of $\mathcal{X}$, $s([0,1])\subset\mathcal{X}$ holds as well. Thus, there exists a finite set of scalars $0\leq t_1<t_2<...<t_K\leq 1$ such that $s(t_i)\in S_{l_i}\cap S_{m_i}$ for a pair of indices $l_i,m_i\in\{1,...,N\}$. Let us also define $t_0=0$ and $t_{K+1}=1$. We thus have that
\[
\sum_{i=0}^K \left\|s(t_{i+1})-s(t_i)\right\|_2 = \sum_{i=0}^K (t_{i+1}-t_i)\|y-x\|_2 = \|y-x\|_2,
\]
and hence
\begin{align*}
|g(y)-g(x)| &= \left |\sum_{i=0}^K g(s(t_{i+1}))-g(s(t_i))  \right| =_{(\#)} \left |\sum_{i=0}^K g_{\ell_i}(s(t_{i+1}))-g_{\ell_i}(s(t_i))  \right| \\
&\leq \sum_{i=0}^K \mathrm{Lip}(g_{\ell_i})\|s(t_{i+1})-s(t_i)\|_2 \leq \left(\max_{j=1,...,N}\mathrm{Lip}(g_j)\right)\sum_{i=0}^K\|s(t_{i+1})-s(t_i)\|_2\\
&= \left(\max_{i=1,...,N}\mathrm{Lip}(g_i)\right)\|y-x\|_2\leq \|y-x\|_2.
\end{align*}
We remark that the equality in $(\#)$ follows from the convexity of the simplices, which guarantees that the line segment connecting $s(t_i)$ to $s(t_{i+1})$ is fully contained in $S_{l_i}$, and the continuity of $g$ at the boundaries of the simplices. 

We conclude that
\[
\max_{x\in\mathcal{X}}|f(x)-g(x)| < \varepsilon,
\]
where $g$ is piecewise affine and $1$-Lipschitz.
\end{proof}

\section{Preliminary results for Section~\ref{sec:second}}
Fix $k\in\mathbb{N}$, and $m\in\mathbb{N}^k$. Call $\alpha_m = \|m\|_1=m_1+...+m_k$. We introduce the set of functions
\begin{equation}\label{eq:Fm}
\mathcal{F}_{d,m} = \Bigl\{F:\mathbb{R}^{d}\to\mathbb{R}^{\alpha_m}\Bigm| F(x) = \begin{bmatrix}
    f_1(x) \\ f_2(x) \\ \vdots \\ f_k(x)
\end{bmatrix},\,f_i\in\mathcal{C}_1(\mathbb{R}^d,\mathbb{R}^{m_i}),\,i=1,...,k \Bigr\}.
\end{equation}
It is immediate to see that $\mathcal{F}_{d,m} = \mathcal{C}_1(\mathbb{R}^d,\mathbb{R}^m)$ if $m\in\mathbb{N}$, i.e., $k=1$. Furthermore, for any given $m\in\mathbb{N}^k$, one has $\mathcal{C}_1(\mathbb{R}^d,\mathbb{R}^{\alpha_m})=\mathcal{F}_{d,\alpha_m} \subset \mathcal{F}_{d,m}$, showing that $\mathcal{F}_{d,m}$ provides a generalisation of the set of $1$-Lipschitz functions from $\mathbb{R}^d$ to $\mathbb{R}^{\alpha_m}$. 
For the results below, we fix a $k\in\mathbb{N}$ and $m\in\mathbb{N}^k$.
\begin{definition}[Projection on the $i$-th component]
The projection map on the $i$-th component $\pi_i:\mathbb{R}^{\alpha_m}\to\mathbb{R}^{m_i}$ is defined as
\[
\mathbb{R}^{\alpha_m}\ni x = \begin{bmatrix}
    x_1 \\ x_2 \\ \vdots \\ x_k
\end{bmatrix}\mapsto x_i=:\pi_i(x),\,\,x_i\in\mathbb{R}^{m_i}.
\]
\end{definition}

\begin{lemma}\label{lemma:closedLinear}
Let $F\in\mathcal{F}_{d,m}$ and $A\in\mathcal{L}_{m}$. Then, $G=A\circ F:\mathbb{R}^d\to\mathbb{R}^{\alpha_m}$ belongs to $\mathcal{F}_{d,m}$.
\end{lemma}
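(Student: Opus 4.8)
The plan is to exploit the block structure built into both $\mathcal{F}_{d,m}$ and $\widetilde{\mathcal{L}}_m$ and reduce the claim to a block-wise Lipschitz estimate. I would write $F\in\mathcal{F}_{d,m}$ in its blocks $F(x)=(f_1(x),\dots,f_k(x))$ with $f_i\in\mathcal{C}_1(\mathbb{R}^d,\mathbb{R}^{m_i})$, and write $A(u)=\widehat{A}u+\widehat{a}$ with $\widehat{A}\in\widetilde{\mathcal{L}}_m$ partitioned into blocks $A_{ij}\in\mathbb{R}^{m_i\times m_j}$ and $\widehat{a}$ partitioned into $\widehat{a}_i\in\mathbb{R}^{m_i}$. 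The $i$-th block of $G=A\circ F$ is then $g_i(x)=\sum_{j=1}^k A_{ij}f_j(x)+\widehat{a}_i$, so to conclude $G\in\mathcal{F}_{d,m}$ I only need to show that each $g_i$ is $1$-Lipschitz as a map $\mathbb{R}^d\to\mathbb{R}^{m_i}$.

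First I would fix $x,y\in\mathbb{R}^d$ and compute the increment block-wise: the affine shifts cancel, leaving $g_i(x)-g_i(y)=\sum_{j=1}^k A_{ij}(f_j(x)-f_j(y))$. Then I would chain the triangle inequality, the submultiplicativity of the spectral norm ($\|A_{ij}z\|_2\le\|A_{ij}\|_2\|z\|_2$), and the $1$-Lipschitzness of each $f_j$ ($\|f_j(x)-f_j(y)\|_2\le\|x-y\|_2$) to obtain $\|g_i(x)-g_i(y)\|_2\le\bigl(\sum_{j=1}^k\|A_{ij}\|_2\bigr)\|x-y\|_2$. Finally, the defining constraint of $\widetilde{\mathcal{L}}_m$, namely $\sum_{j=1}^k\|A_{ij}\|_2\le 1$ for every row-block index $i$, yields $\|g_i(x)-g_i(y)\|_2\le\|x-y\|_2$, which is exactly the membership condition required of $g_i$.

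The computation itself is routine, so the point worth emphasising is structural rather than an obstacle: membership in $\mathcal{F}_{d,m}$ demands that each block $g_i$ be \emph{individually} $1$-Lipschitz, not merely that $G$ be $1$-Lipschitz globally. This is precisely why one must argue row-block by row-block and invoke the row-sum bound $\sum_j\|A_{ij}\|_2\le 1$; bounding $\|\widehat{A}\|_2$ directly would neither respect the block decomposition required by $\mathcal{F}_{d,m}$ nor even follow from the definition of $\widetilde{\mathcal{L}}_m$. The affine offset $\widehat{a}$ plays no role, since it disappears in the increment $g_i(x)-g_i(y)$.
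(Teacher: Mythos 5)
Your proof is correct and follows essentially the same route as the paper's: both argue block-row by block-row, cancel the affine offset in the increment, and chain the triangle inequality, the spectral-norm bound $\|A_{ij}z\|_2\leq\|A_{ij}\|_2\|z\|_2$, the $1$-Lipschitzness of each $f_j$, and the row-sum constraint $\sum_{j=1}^k\|A_{ij}\|_2\leq 1$. The paper phrases the same computation using the projection maps $\pi_i$, but the content is identical, and your closing remark about why a global bound on $\|\widehat{A}\|_2$ would not suffice is a fair observation rather than a deviation.
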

\begin{proof}
Let $x,y\in\mathbb{R}^{d}$ be two arbitrary vectors, and fix $i\in\{1,...,k\}$. By direct calculation, we see that
\begin{align*}
\|\pi_i(G(y)-G(x))\|_2 &= \left\|\sum_{j=1}^k A_{ij}\pi_j(F(y)-F(x))\right\|_2 = \left\|\sum_{j=1}^k A_{ij}(f_j(y)-f_j(x))\right\|_2 \\
&\leq \sum_{j=1}^k \|A_{ij}\|_2 \|y-x\|_2 \leq \|y-x\|_2
\end{align*}
as desired.
\end{proof}

\begin{lemma}\label{lemma:constrainedProj}
Let $F\in\mathcal{F}_{d,m}$, and $v\in\mathbb{R}^{\alpha_m}$ be a vector with $\|v\|_1\leq 1$. Then $f(x)=v^\top F(x)$ is a scalar $1-$Lipschitz function.
\end{lemma}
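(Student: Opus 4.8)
The plan is to exploit the block structure of $F$ directly. First I would partition the coefficient vector $v\in\mathbb{R}^{\alpha_m}$ into blocks $v=(v_1,\dots,v_k)$ matching the decomposition $F=(f_1,\dots,f_k)$ in \eqref{eq:Fm}, so that $v_i\in\mathbb{R}^{m_i}$ and $f(x)=v^\top F(x)=\sum_{i=1}^k v_i^\top f_i(x)$. This rewrites the scalar output as a sum of inner products between the coefficient blocks and the $1$-Lipschitz component maps, which is precisely the form amenable to a blockwise estimate.

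Next, for arbitrary $x,y\in\mathbb{R}^d$ I would control the increment $f(y)-f(x)=\sum_{i=1}^k v_i^\top\bigl(f_i(y)-f_i(x)\bigr)$. Applying the Cauchy--Schwarz inequality to each summand and then the assumption $f_i\in\mathcal{C}_1(\mathbb{R}^d,\mathbb{R}^{m_i})$ gives $|v_i^\top(f_i(y)-f_i(x))|\leq \|v_i\|_2\,\|f_i(y)-f_i(x)\|_2\leq \|v_i\|_2\,\|y-x\|_2$. Summing over $i$ via the triangle inequality then yields $|f(y)-f(x)|\leq \bigl(\sum_{i=1}^k \|v_i\|_2\bigr)\|y-x\|_2$.

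The only remaining point, and the one place where the specific $\ell^1$ constraint is used, is to show $\sum_{i=1}^k \|v_i\|_2\leq 1$. Here I would invoke the elementary norm comparison $\|v_i\|_2\leq \|v_i\|_1$ on each block, so that $\sum_{i=1}^k \|v_i\|_2\leq \sum_{i=1}^k \|v_i\|_1=\|v\|_1\leq 1$, the middle equality being just the regrouping of the global $\ell^1$ norm by blocks. This bounds the Lipschitz constant of $f$ by one and closes the argument. I expect no genuine obstacle in this proof; the only mild subtlety worth flagging is that the blockwise Cauchy--Schwarz step naturally produces the $\ell^2$ norms of the blocks, whereas the hypothesis is stated in the global $\ell^1$ norm, and it is exactly the inequality $\|\cdot\|_2\leq\|\cdot\|_1$ that reconciles the two cleanly.
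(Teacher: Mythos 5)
Your proposal is correct and follows essentially the same route as the paper's own proof: a blockwise decomposition of $v$, Cauchy--Schwarz on each block, the $1$-Lipschitz bound for each component $f_i$, and the comparison $\|v_i\|_2\leq\|v_i\|_1$ to regroup into $\|v\|_1\leq 1$. The paper merely phrases the blocks via the projection maps $\pi_i$; the argument is identical.
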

\begin{proof}
Let $x,y\in\mathbb{R}^{d}$ be two arbitrary vectors. By direct calculation, we see that
\begin{align*}
|f(y)-f(x)| &= \left|\sum_{i=1}^k \pi_i(v)^\top \pi_i(F(y)-F(x))\right| \leq \sum_{i=1}^k \|\pi_i(v)\|_2 \|\pi_i(F(y)-F(x))\|_2 \\
&\leq \sum_{i=1}^k \|\pi_i(v)\|_1 \|y-x\|_2 = \|v\|_1 \|y-x\|_2\leq \|y-x\|_2
\end{align*}
as desired.
\end{proof}

\section{Proof of Theorem~\ref{thm:gradFixedInner}}
\label{app:thm:gradFixedInner}
We now provide a proof for Lemma \ref{lemma:subset}.
\begin{lemma}
Let $h\in\mathbb{N}$ and $\sigma=\mathrm{ReLU}$. The set $\widetilde{\mathcal{E}}_{h,\sigma}$ is a subset of $\mathcal{E}_{h+3,\sigma}$.
\end{lemma}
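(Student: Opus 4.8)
The plan is to exhibit, for an arbitrary $\Phi_\theta\in\widetilde{\mathcal{E}}_{h,\sigma}$, an explicit triple $(W,\tau,b)$ realising $\Phi_\theta$ as a single gradient step $x\mapsto x-\tau W^\top\sigma(Wx+b)$ on $\mathbb{R}^{h+3}$ with $0\le\tau\le 2$ and $\|W\|_2\le 1$, thereby placing it in $\mathcal{E}_{h+3,\sigma}$. The key observation is that $\Phi_\theta$ acts in a block-decoupled way: the first three coordinates undergo the map $(x_1,x_2,x_3)\mapsto(\max\{x_1,x_2\},\min\{x_1,x_2\},x_3)$, while the last $h$ coordinates undergo $\widetilde\Phi_\theta\in\mathcal{E}_{h,\sigma}$. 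Since $\widetilde\Phi_\theta$ is by definition already of the form $x_{4:}\mapsto x_{4:}-\widetilde\tau\,\widetilde W^\top\sigma(\widetilde Wx_{4:}+\widetilde b)$ with $\|\widetilde W\|_2\le 1$ and $0\le\widetilde\tau\le 2$, the main work is to write the top $3\times 3$ block in the same gradient-step form with a compatible step size $\tau$, and then to assemble the two blocks into a single $W$ whose spectral norm is still bounded by one.

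First I would treat the $\mathrm{MaxMin}$ block. Using the identities recalled in the preliminaries, $\max\{x_1,x_2\}=x_1+\sigma(x_2-x_1)$ and $\min\{x_1,x_2\}=x_2-\sigma(x_2-x_1)$, so the increment applied to $(x_1,x_2,x_3)$ is
\[
\begin{bmatrix}\max\{x_1,x_2\}\\ \min\{x_1,x_2\}\\ x_3\end{bmatrix}-\begin{bmatrix}x_1\\x_2\\x_3\end{bmatrix}=\begin{bmatrix}1\\-1\\0\end{bmatrix}\sigma(x_2-x_1).
\]
This is exactly the construction used in Proposition~\ref{prop:maxRepresentation}, namely a step of size $\tau=2$ with row vector $w^\top=\begin{bmatrix}-1/\sqrt{2}&1/\sqrt{2}&0\end{bmatrix}$, since then $-2w^\top\sigma(w\cdot x_{1:3})=\begin{bmatrix}1\\-1\\0\end{bmatrix}\sigma(x_2-x_1)$ and $\|w\|_2=1$. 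I would therefore fix the common step size $\tau=2$ throughout, rescaling $\widetilde W$ if necessary: because $\widetilde\Phi_\theta$ uses some $\widetilde\tau\in[0,2]$, the positive homogeneity of $\sigma$ lets me absorb the factor $\sqrt{\widetilde\tau/2}$ into $\widetilde W$ and $\widetilde b$ (replacing $\widetilde W$ by $\sqrt{\widetilde\tau/2}\,\widetilde W$), which preserves the bound $\|\widetilde W\|_2\le 1$ and converts the step into one of size $2$, exactly as in the $\tau$-matching argument in the proof of Lemma~\ref{lemma:secondToLastStep}.

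Having put both blocks at step size $\tau=2$, I would assemble the combined weight matrix in block form,
\[
W=\begin{bmatrix} w^\top & 0 \\ 0 & \widetilde W\end{bmatrix},\qquad b=\begin{bmatrix} 0 \\ \widetilde b\end{bmatrix},
\]
so that $x-2W^\top\sigma(Wx+b)$ reproduces $\Phi_\theta$ blockwise. The only remaining point, which I expect to be the mildest of obstacles rather than a genuine difficulty, is the spectral-norm bound: for a block-diagonal matrix one has $\|W\|_2=\max\{\|w\|_2,\|\widetilde W\|_2\}$, and since both blocks satisfy $\|\cdot\|_2\le 1$ we conclude $\|W\|_2\le 1$. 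This is precisely the block-diagonal spectral-norm identity already invoked in the proof of Lemma~\ref{lemma:secondToLastStep}. With $\tau=2\in[0,2]$, $\|W\|_2\le 1$, and $W$ having $(1+\text{rows of }\widetilde W)$ rows, all the defining constraints of $\mathcal{E}_{h+3,\sigma}$ are met, so $\Phi_\theta\in\mathcal{E}_{h+3,\sigma}$ and the claimed inclusion $\widetilde{\mathcal{E}}_{h,\sigma}\subseteq\mathcal{E}_{h+3,\sigma}$ follows.
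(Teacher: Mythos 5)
Your proposal is correct and matches the paper's proof essentially step for step: the same $\mathrm{MaxMin}$ identity realised by the row $\begin{bmatrix}-1/\sqrt{2} & 1/\sqrt{2} & 0\end{bmatrix}$ with step size $2$, the same rescaling of $\widetilde{W},\widetilde{b}$ by $\sqrt{\widetilde{\tau}/2}$ via positive homogeneity to equalise step sizes, and the same block-diagonal assembly with $\|W\|_2=\max\{\|w\|_2,\|\widetilde{W}\|_2\}\leq 1$. The only (immaterial) difference is that the paper pads $W$ with two extra zero rows, which changes nothing since $\mathcal{E}_{h+3,\sigma}$ allows an arbitrary number of rows.
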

\begin{proof}
Let $x\in\mathbb{R}^{h+3}$, and consider the map 
\[
\Phi_\theta(x)=\begin{bmatrix} \max\{x_1,x_2\} \\ \min\{x_1,x_2\} \\ x_3 \\ \widetilde{\Phi}_\theta(x_{4:})\end{bmatrix}
\]
with $\widetilde{\Phi}_\theta(x_{4:}) = x_{4:}-\tau\widetilde{W}^\top \sigma(\widetilde{W}x_{4:}+\widetilde{b})$, where $\tau\in [0,2]$, $\widetilde{W}\in\mathbb{R}^{h'\times h}$, $\widetilde{b}\in\mathbb{R}^{h'}$, and $\|\widetilde{W}\|_2\leq 1$. Call $\gamma=\sqrt{\frac{\tau}{2}}\in [0,1]$. Define 
\[
W = \begin{bmatrix} 
-1/\sqrt{2} & 1/\sqrt{2} & 0 & 0_{h}^\top \\
0 & 0 & 0 & 0_{h}^\top \\ 
0 & 0 & 0 & 0_h^\top \\
0_{h'} & 0_{h'} & 0_{h'} & \gamma\widetilde{W}
\end{bmatrix}\in\mathbb{R}^{(h'+3)\times (h+3)},\,\,b = \begin{bmatrix} 0 \\ 0 \\ 0 \\ \gamma\widetilde{b} \end{bmatrix} \in\mathbb{R}^{h'+3}.
\]
We now show that $\Phi_\theta(x)=x - 2W^\top \sigma(Wx+b)$:
\begin{align*}
&x - 2W^\top \sigma(Wx+b) \\
&= \begin{bmatrix} x_1 \\ x_2 \\ x_3 \\ x_{4:} \end{bmatrix} - 2 \begin{bmatrix} 
-1/\sqrt{2} & 0 & 0 & 0_{h'}^\top \\
1/\sqrt{2} & 0 & 0 & 0_{h'}^\top \\ 
0 & 0 & 0 & 0_{h'}^\top \\ 
0_{h} & 0_h & 0_h & \gamma\widetilde{W}^\top
\end{bmatrix} \sigma\left(\begin{bmatrix} 
-1/\sqrt{2} & 1/\sqrt{2} & 0 & 0_{h}^\top \\
0 & 0 & 0 & 0_{h}^\top \\ 
0 & 0 & 0 & 0_h^\top \\
0_{h'} & 0_{h'} & 0_{h'} & \gamma\widetilde{W}
\end{bmatrix}x + \begin{bmatrix} 0 \\ 0 \\ 0 \\ \gamma\widetilde{b}\end{bmatrix}\right) \\
&=\begin{bmatrix} x_1 \\ x_2 \\ x_3 \\ x_{4:} \end{bmatrix} - 2 \begin{bmatrix} -\frac{1}{2}\sigma(x_2-x_1) \\ \frac{1}{2}\sigma(x_2-x_1) \\ 0 \\ \frac{\tau}{2}\widetilde{W}^\top \sigma(\widetilde{W}x_{4:}+\widetilde{b})\end{bmatrix} = \begin{bmatrix}
    \max\{x_1,x_2\} \\ \min\{x_1,x_2\} \\ x_3 \\ \widetilde{\Phi}_\theta(x_{4:})
\end{bmatrix} = \Phi_\theta(x).
\end{align*}

\end{proof}

We now prove Lemma \ref{lem:tilde-G-1-Lip}.
\begin{lemma}
Let $\sigma=\mathrm{ReLU}$, $d,h\in\mathbb{N}$, with $h\geq 3$. All the functions in $\widetilde{\mathcal{G}}_{d,\sigma,h}(\mathbb{R}^d,\mathbb{R})$ are $1$-Lipschitz.
\end{lemma}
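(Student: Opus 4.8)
The plan is to show that every intermediate function produced along the composition lies in the block-structured class $\mathcal{F}_{d,m}$ defined in \eqref{eq:Fm} with $m=(1,1,1,h-3)$, and then to read off the scalar $1$-Lipschitz property from the final inner product against $v$. The point of working with $\mathcal{F}_{d,m}$ rather than directly with $\mathcal{C}_1(\mathbb{R}^d,\mathbb{R}^h)$ is precisely that the interleaved affine maps $A_\ell\in\mathcal{L}_m$ are \emph{not} $1$-Lipschitz on $\mathbb{R}^h$, so the naive ``multiply the Lipschitz constants'' argument fails. Instead, $\mathcal{F}_{d,m}$ is the invariant that survives composition with both the $A_\ell$ and the $\Phi_{\theta_\ell}$, and the whole proof reduces to checking that each layer preserves membership in it.

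First I would observe that $Q\in\mathcal{R}_{d,m}$ already lies in $\mathcal{F}_{d,m}$: by definition of $\widetilde{\mathcal{R}}_{d,m}$ each block $B_i$ satisfies $\|B_i\|_2\le 1$, so each affine block $x\mapsto B_i x+\widehat{q}_i$ is $1$-Lipschitz. Next I would propagate membership inductively through the layers. The affine steps are immediate: Lemma~\ref{lemma:closedLinear} gives $A_\ell\circ F\in\mathcal{F}_{d,m}$ whenever $F\in\mathcal{F}_{d,m}$. The crux is the residual steps $\Phi_{\theta_\ell}\in\widetilde{\mathcal{E}}_{h-3,\sigma}$.

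For a residual step, write $F=(f_1,f_2,f_3,f_4)\in\mathcal{F}_{d,m}$ with $f_1,f_2,f_3\in\mathcal{C}_1(\mathbb{R}^d,\mathbb{R})$ and $f_4\in\mathcal{C}_1(\mathbb{R}^d,\mathbb{R}^{h-3})$. The four blocks of $\Phi_{\theta_\ell}\circ F$ are $\max\{f_1,f_2\}$, $\min\{f_1,f_2\}$, $f_3$, and $\widetilde{\Phi}_{\theta_\ell}\circ f_4$. The third and fourth are clearly $1$-Lipschitz: $f_3$ is unchanged, and $\widetilde{\Phi}_{\theta_\ell}\in\mathcal{E}_{h-3,\sigma}$ is $1$-Lipschitz by Proposition~\ref{prop:nonexp}, so its composition with the $1$-Lipschitz $f_4$ is again $1$-Lipschitz. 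The first two blocks are where the argument really happens, and this is the step I expect to be the main obstacle, because the pair $(f_1,f_2)$ is only block-wise and not \emph{jointly} $1$-Lipschitz (its Lipschitz constant as a map into $\mathbb{R}^2$ could be as large as $\sqrt{2}$). Here I would invoke the elementary bound $|\max\{a,b\}-\max\{c,d\}|\le\max\{|a-c|,|b-d|\}$, together with its analogue for $\min$, which yields $|\max\{f_1(x),f_2(x)\}-\max\{f_1(y),f_2(y)\}|\le\max\{|f_1(x)-f_1(y)|,|f_2(x)-f_2(y)|\}\le\|x-y\|_2$, and likewise for the $\min$ block. Hence $\Phi_{\theta_\ell}\circ F\in\mathcal{F}_{d,m}$.

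Combining these facts, a straightforward induction on the layers shows that the output of $\Phi_{\theta_L}\circ A_{L-1}\circ\cdots\circ\Phi_{\theta_1}\circ Q$ lies in $\mathcal{F}_{d,m}$. Finally, since $\|v\|_1\le 1$, Lemma~\ref{lemma:constrainedProj} gives that $v^\top$ applied to this output is a scalar $1$-Lipschitz function, which is exactly the claim.
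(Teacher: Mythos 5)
Your proof is correct and follows essentially the same route as the paper's: both establish that every intermediate map stays in the block-structured class $\mathcal{F}_{d,m}$ with $m=(1,1,1,h-3)$ by induction over the layers (using Lemma~\ref{lemma:closedLinear} for the affine maps and the $1$-Lipschitz stability of $\max$, $\min$, and maps in $\mathcal{E}_{h-3,\sigma}$ for the residual steps), then conclude via Lemma~\ref{lemma:constrainedProj} and the constraint $\|v\|_1\leq 1$. Your explicit bound $|\max\{a,b\}-\max\{c,d\}|\leq\max\{|a-c|,|b-d|\}$ merely makes precise a step the paper states without proof.
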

\begin{proof}
Fix $m=(1,1,1,h-3)$ and $x\in\mathbb{R}^d$. We consider the network
\[
x\mapsto v^\top \circ \Phi_{\theta_L}\circ A_{L-1}\circ ... \circ A_1\circ \Phi_{\theta_1}\circ Q(x) \in \widetilde{\mathcal{G}}_{d,\sigma,h}(\mathbb{R}^d,\mathbb{R}).
\]
For any map $Q\in\mathcal{R}_{d,m}$, there exist four vectors $a_1,a_2,a_3,\tilde{q}\in\mathbb{R}^d$, three scalars $b_1,b_3,b_3\in\mathbb{R}$, and a matrix $\widetilde{Q}\in\mathbb{R}^{h-3\times d}$ such that
\[
Q(x) = \begin{bmatrix} a_1^\top x + b_1 \\ a_2^\top x + b_2 \\ a_3^\top x + b_3 \\ \widetilde{Q}x + \widetilde{q} \end{bmatrix}\in\mathbb{R}^h
\]
and $\|a_1\|_2,\|a_2\|_2,\|a_3\|_2,\|\widetilde{Q}\|_2\leq 1$. Because the composition of $1$-Lipschitz maps is $1$-Lipschitz, and the maximum and minimum of $1$-Lipschitz functions is still $1$-Lipschitz, it follows that the map
\[
x\mapsto \Phi_{\theta_1}\circ Q(x) = \begin{bmatrix}
    \max\{a_1^\top x+b_1,a_2^\top x +b_2\} \\
    \min\{a_1^\top x + b_1,a_2^\top x + b_2\} \\
    a_3^\top x+b_3 \\
    \widetilde{Q}x - \tau_1\widetilde{W}_1^\top \sigma(\widetilde{W}_1 \widetilde{Q}x + \widetilde{b}_1)
\end{bmatrix}
\]
belongs to $\mathcal{F}_{d,m}$ defined as in \eqref{eq:Fm}. Since $A_1\in\mathcal{L}_{m}$, Lemma \ref{lemma:closedLinear} implies that
\[
A_1 \circ\Phi_{\theta_1}\circ Q(x) = \begin{bmatrix}
    f_{1,1}(x) \\ f_{1,2}(x) \\ f_{1,3}(x) \\ f_{1,4}(x)
\end{bmatrix},\,\,f_{1,1},f_{1,2},f_{1,3}:\mathbb{R}^d\to\mathbb{R},f_{1,4}:\mathbb{R}^d\to\mathbb{R}^{h-3}
\]
belongs to $\mathcal{F}_{d,m}$ as well. We leave the components unspecified so that the argument generalises. Doing one further step, so that the argument generalises to a generic number of compositions $L$, one gets
\[
\Phi_{\theta_2}\circ A_1\circ \Phi_{\theta_1}\circ Q(x) = \begin{bmatrix}
    \max\{f_{1,1}(x),f_{1,2}(x)\} \\ \min\{f_{1,1}(x),f_{1,2}(x)\} \\ f_{1,3}(x) \\ f_{1,4}(x) - \tau_2 \widetilde{W}_2^\top \sigma(\widetilde{W}_2 f_{1,4}(x) + \widetilde{b}_2)
\end{bmatrix},
\]
which again belongs to $\mathcal{F}_{d,m}$. Extending the argument, we see that
\[
\Phi_{\theta_L}\circ A_{L-1}\circ ... \circ A_1\circ \Phi_{\theta_1}\circ Q(x) = \begin{bmatrix}
    \max\{f_{L-1,1}(x),f_{L-1,2}(x)\} \\ \min\{f_{L-1,1}(x),f_{L-1,2}(x)\} \\ f_{L-1,3}(x) \\ f_{L-1,4}(x) - \tau_L \widetilde{W}_L^\top \sigma(\widetilde{W}_L f_{L-1,4}(x) + \widetilde{b}_L)
\end{bmatrix},
\]
which belongs to $\mathcal{F}_{d,m}$. Lemma \ref{lemma:constrainedProj} allows us to conclude that the output scalar function is $1$-Lipschitz since $v\in\mathbb{R}^h$ satisfies $\|v\|_1\leq 1$.
\end{proof}

We conclude with the statement and proof of Theorem \ref{thm:pwlGtilde}.
\begin{theorem}
Any piecewise affine 1-Lipschitz function $f:\mathbb{R}^d\to\mathbb{R}$ can be represented by a network in $\widetilde{\mathcal{G}}_{d,\sigma,h}(\mathbb{R}^d,\mathbb{R})$ with $\sigma=\mathrm{ReLU}$ and $h=d+3$.
\end{theorem}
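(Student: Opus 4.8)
The plan is to combine the max--min representation of Lemma~\ref{lemma:maxMin} with the running-accumulator construction of Proposition~\ref{prop:hdp2}, using the three scalar slots of the width-$(d+3)$ hidden state as, respectively, a scratch register, a running minimum, and a running maximum, while the remaining $d$ coordinates carry a verbatim copy of the input $x$. First I would invoke Lemma~\ref{lemma:maxMin} to write $f(x)=\max\{f_1(x),\dots,f_k(x)\}$ with $f_i(x)=\min_j\{a_{i,j}^\top x+b_{i,j}\}$, and recall from the remark following that lemma that $1$-Lipschitzness of $f$ forces $\|a_{i,j}\|_2\le 1$, which is exactly the hypothesis under which Proposition~\ref{prop:hdp2} operates. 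Fixing $m=(1,1,1,d)$, a hidden state reads $[s_1,s_2,s_3,\mathrm{mem}]$ with $\mathrm{mem}\in\mathbb{R}^d$. Throughout I keep $\widetilde{\Phi}_\theta=\mathrm{id}$ (i.e.\ $\widetilde{W}=0$) in every $\Phi_\theta\in\widetilde{\mathcal{E}}_{d,\sigma}$ so that the memory block transports $x$ unchanged, and every $A\in\mathcal{L}_m$ I use carries the identity on that block (row-sum $=1$, hence admissible).

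Next I would reproduce the inner loop of Proposition~\ref{prop:hdp2} to compute each minimum $f_i$ in slot $s_2$: the lifting $Q\in\mathcal{R}_{d,m}$ (resp.\ a preceding $A$) loads $a_{i,1}^\top x+b_{i,1}$ and $a_{i,2}^\top x+b_{i,2}$ into $s_1,s_2$; one application of $\Phi_\theta$ replaces them by their max and min; and then alternating $A$-maps (loading the next piece $a_{i,j}^\top x+b_{i,j}$ into $s_1$ directly from $\mathrm{mem}$, legitimate since $\|a_{i,j}\|_2\le 1$) and $\Phi_\theta$-maps fold the remaining pieces into the running minimum held in $s_2$, so that after $l_i-1$ residual steps $s_2=f_i(x)$. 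The genuinely new ingredient is the outer maximum, for which I use $s_3$ as a running maximum of the $f_i$ computed so far. Once $f_i$ sits in $s_2$ and $s_3=\max\{f_1,\dots,f_{i-1}\}$, I apply one $A$ to copy $s_3$ into $s_1$ while preserving $s_2$, apply $\Phi_\theta$ to place $\max\{s_3,f_i\}$ into $s_1$, and let the \emph{next} $A$ simultaneously copy $s_1$ back into $s_3$ and begin loading the first two pieces of $f_{i+1}$. The case $i=1$ is handled by copying $s_2$ directly into $s_3$, and for $i=k$ the final fold leaves $f(x)=\max\{f_1,\dots,f_k\}$ in $s_1$, which is read off by $v=e_1$, noting $\|v\|_1=1$.

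Finally I would verify membership in $\widetilde{\mathcal{G}}_{d,\sigma,d+3}(\mathbb{R}^d,\mathbb{R})$ by inspecting each constructed map: $Q\in\mathcal{R}_{d,m}$ because every row block is either a single functional $a_{i,j}$ of norm $\le 1$, a zero block, or the identity on $\mathrm{mem}$; every $\Phi_\theta\in\widetilde{\mathcal{E}}_{d,\sigma}$ by construction; and every $A\in\mathcal{L}_m$ because each of its row blocks is a copy, a load of norm $\le 1$, or a zero, so that $\sum_j\|A_{ij}\|_2\le 1$ holds for all $i$. Since the network has finite depth $L=\sum_i O(l_i)$, this exhibits $f$ as an element of $\widetilde{\mathcal{G}}_{d,\sigma,d+3}(\mathbb{R}^d,\mathbb{R})$, as desired.

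I expect the main obstacle to be the data-movement bookkeeping for the outer maximum: because each $\Phi_\theta$ only ever forms the max and min of $s_1,s_2$, folding the freshly computed $f_i$ into the running maximum stored in $s_3$ requires temporarily staging $s_3$ through $s_1$ and back, and scheduling these copies must be interleaved with the loading of $f_{i+1}$ so that no $A$-map ever exceeds the per-row budget $\sum_j\|A_{ij}\|_2\le 1$. Checking that all the required copy/load patterns fit within this budget --- which they do, since each operation is either a norm-$1$ identity block or a single affine functional of norm $\le 1$ --- is the crux of the argument, even though no individual step is difficult.
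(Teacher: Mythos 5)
Your proposal is correct and follows essentially the same route as the paper's proof: both decompose $f$ via Lemma~\ref{lemma:maxMin}, compute each inner minimum $f_i$ with the construction of Proposition~\ref{prop:hdp2} while the $d$-dimensional memory block carries $x$ (with identity residuals there), and accumulate the outer maximum in the third slot by staging it through the first two slots, verifying the same row-wise budget $\sum_j\|A_{ij}\|_2\le 1$ for the interleaved affine maps. The only differences are bookkeeping conventions (which slot holds the running minimum, and reading out with $e_1$ after the final fold rather than copying into the third slot and using $e_3$ as the paper does), which do not affect correctness.
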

\begin{proof}
Fix $m=(1,1,1,d)$. Let us consider a generic $1$-Lipschitz piecewise affine function, which, thanks to Lemma \ref{lemma:maxMin}, can be written as
\[
f(x) = \max\{f_1(x),...,f_k(x)\},\,f_i(x)=\min\{a_{i,1}^\top x + b_{i,1}, ..., a_{i,l_i}^\top x + b_{i,l_i}\},\,i=1,...,k,
\]
for a suitable choice of parameters. We also recall that $\|a_{i,j}\|_2\leq 1$ for every $i=1,...,k$
 and $j=1,...,l_i$. Thanks to Proposition \ref{prop:hdp2}, there exists a suitable choice of weights giving
\[
g_1:=\Phi_{\theta_{1,l_1-1}}\circ A_{1,l_1-2} \circ ... \circ A_{1,1} \circ \Phi_{\theta_{1,1}}:\mathbb{R}^h\to\mathbb{R}^h,
\]
with $\Phi_{\theta_{1,l_1-1}},...,\Phi_{\theta_{1,1}}\in\widetilde{\mathcal{E}}_{d,\sigma}$ and $A_{1,l_1-2},...,A_{1,1}\in\mathcal{L}_{m}$, 
so that
\[
g_1(Q(x))= \begin{bmatrix}
    f_1(x) \\ k_{1,2}(x) \\ k_{1,3}(x) \\ x 
\end{bmatrix}\in\mathbb{R}^{d+3},
\]
where we leave the $1$-Lipschitz functions $k_{1,2},k_{1,3}:\mathbb{R}^d\to\mathbb{R}$ unspecified since they do not play a role in our construction. We now introduce a map $\overline{A_1}\in\mathcal{L}_m$ that is characterised by
\[
\overline{A_1}\circ g_1 \circ Q(x) = \begin{bmatrix} a_{2,1}^\top x + b_{2,1} \\ a_{2,2}^\top x + b_{2,2} \\ f_1(x) \\ x \end{bmatrix},
\]
that is $\overline{A_1}\circ g_1\circ Q(x)=\widehat{A}_1g_1(Q(x))+\widehat{a}_1$ with
\[
\widehat{A}_1 = \begin{bmatrix}
    0 & 0 & 0 & a_{2,1}^\top \\
    0 & 0 & 0 & a_{2,2}^\top \\
    1 & 0 & 0 & 0_d^\top \\ 
    0_d & 0_d & 0_d & I_d
\end{bmatrix},\,\,\widehat{a}_1 = \begin{bmatrix}
    b_{2,1}\\ b_{2,2} \\ 0 \\ 0_d
\end{bmatrix}.
\]
We then define 
\[
g_2:=\Phi_{\theta_{2,l_2-1}}\circ A_{2,l_2-2} \circ ... \circ A_{2,1} \circ \Phi_{\theta_{2,1}}:\mathbb{R}^h\to\mathbb{R}^h,
\]
so that
\[
g_2\circ \overline{A_1} \circ g_1\circ Q(x) = \begin{bmatrix}
    f_2(x) \\ k_{2,2}(x) \\ f_1(x) \\ x
\end{bmatrix}.
\]
Let ${S}\in\mathcal{L}_m$ be such that
\[
{S}\circ g_2\circ \overline{A}_1 \circ g_1\circ Q(x) = \begin{bmatrix}
    f_2(x) \\ f_1(x) \\ k_{2,2}(x) \\ x
\end{bmatrix}.
\]
We then introduce the residual map ${\Psi}\in\widetilde{\mathcal{E}}_{d,\sigma}$ having zero weight matrix, i.e., acting over an input $u\in\mathbb{R}^{d+3}$ as
\[
{\Psi}(u) = \begin{bmatrix}
    \max\{u_1,u_2\} \\ \min\{u_1,u_2\} \\ u_3 \\ u_{4:}
\end{bmatrix}.
\]
Similarly to $\overline{A_1}$ defined above, we introduce $\overline{A_2}\in\mathcal{L}_m$ so that given a vector $u\in\mathbb{R}^{d+3}$
\[
\overline{A_2}(u) = \begin{bmatrix} a_{3,1}^\top u_{4:} + b_{3,1} \\ a_{3,2}^\top u_{4:} + b_{3,2} \\ u_1 \\ u_{4:}  \end{bmatrix},
\]
and hence
\[
\overline{A_2} \circ \left({\Psi} \circ {S}\right) \circ \left(g_2\circ \overline{A_1} \circ g_1\circ Q(x)\right) = \begin{bmatrix} a_{3,1}^\top x + b_{3,1} \\ a_{3,2}^\top x + b_{3,2} \\ \max\{f_1(x),f_2(x)\} \\ x
     \end{bmatrix}.
\]
Iterating this reasoning, one can get
\[
e_3^\top\circ \overline{A_k}\circ \left({\Psi} \circ {S}\right)\circ g_k\circ ... \circ \overline{A_3}\circ \left({\Psi} \circ {S}\right)\circ g_3 \circ \overline{A_2} \circ \left({\Psi} \circ {S}\right) \circ g_2\circ \overline{A_1} \circ g_1\circ Q(x) = f(x),
\]
where $e_3\in\mathbb{R}^{d+3}$ is the third vector of the canonical basis.
\end{proof}

\section{Extension of the set $\widetilde{\mathcal{E}}_{h,\sigma}$}\label{app:extensionE}
We now fix $\sigma=\mathrm{ReLU}$. The set of networks $\widetilde{\mathcal{G}}_{d,\sigma,h}(\mathcal{X},\mathbb{R})$ that we considered, is a subset of
\begin{align*}
&\overline{\mathcal{G}}_{d, \sigma, h}(\mathcal{X}, \mathbb{R})
:= \Bigl\{v^\top \circ \Phi_{\theta_{L}} \circ A_{L-1}\circ \cdots \circ\Phi_{\theta_2}\circ A_{1}\circ \Phi_{\theta_{1}} \circ Q : \mathcal{X}\to\mathbb{R} \Bigm|\, m=(1,...,1,h-k),
\\
& h=\|m\|_1,\,Q \in \mathcal{R}_{d,m},v \in \mathbb{R}^{h},\|v\|_1\leq 1, A_1,...,A_{L-1}\in\mathcal{L}_{m},
\Phi_{\theta_{\ell}}\in\overline{\mathcal{E}}_{h-k,k,\sigma},\,L\in\mathbb{N},\,k\in\{1,...,h\}
\Bigl\},
\end{align*}
where we define
\[
\begin{split}
\overline{\mathcal{E}}_{h,k,\sigma} = \Bigl\{\Phi_{\theta}:\mathbb{R}^{h+k}\to\mathbb{R}^{h+k}\,\Bigm|&\, \Phi_\theta(x)=\begin{bmatrix} \mathrm{GroupSort}(x_{1:k};g) \\ \widetilde{\Phi}_\theta(x_{(k+1):})\end{bmatrix},\\
&\widetilde{\Phi}_\theta\in\mathcal{E}_{h,\sigma},\,g\in\{1,...,k\}\Bigr\}.
\end{split}
\]
The $\mathrm{GroupSort}(\cdot;g)$ activation function, introduced in \cite{anil2019sorting}, splits the input into groups of a specific size $g$ and sorts each in descending order. The particular case of group size $g=2$ coincides with the $\mathrm{MaxMin}$ function we used to define $\widetilde{\mathcal{G}}_{d,\sigma,h}(\mathcal{X},\mathbb{R})$, i.e.,
\[
\mathrm{GroupSort}(x_{1:3};2) = \begin{bmatrix}
    \max\{x_1,x_2\} \\ \min\{x_1,x_2\} \\ x_3
\end{bmatrix}.
\]
We can thus say that $\widetilde{\mathcal{E}}_{h,\sigma}\subset \overline{\mathcal{E}}_{h,3,\sigma}$, and hence $\widetilde{\mathcal{G}}_{d,\sigma,h}(\mathcal{X},\mathbb{R})\subset \overline{\mathcal{G}}_{d,\sigma,h}(\mathcal{X},\mathbb{R})$. 

The $\mathrm{GroupSort}(\cdot;g)$ activation function might not have symmetric Jacobian, and hence might not be directly expressible with a single negative gradient step. Still, it can be written as a composition of these residual maps in $\mathcal{E}_{h,\sigma}$. This is an immediate consequence of the fact that one can sort a list of numbers by iteratively ordering subgroups of size two. For example, assume there is a positive-measure subregion $\mathcal{R}$ of $\mathcal{X}$ where $x_2<x_1<x_3$ for every $x\in\mathcal{R}$. There, the function $\mathrm{GroupSort}(\cdot,3)$ would act as
\[
\mathrm{GroupSort}(x_{1:3}) = \begin{bmatrix} x_3 \\ x_1 \\ x_2 \end{bmatrix} = \begin{bmatrix}
    0 & 0 & 1 \\ 
    1 & 0 & 0 \\
    0 & 1 & 0
\end{bmatrix} x_{1:3} =: Px_{1:3}.
\]
The permutation matrix $P$ is not symmetric, and hence the target map can not be expressed with a single negative gradient Euler step. Still, we can write it as the following composition
\[
\begin{bmatrix}
    x_1 \\ x_2 \\ x_3
\end{bmatrix} \mapsto \begin{bmatrix}
    x_2 \\ x_1 \\ x_3
\end{bmatrix} \mapsto \begin{bmatrix}
    x_3 \\ x_1 \\ x_2
\end{bmatrix}
\]
which are two maps that belong to $\mathcal{E}_{3,\sigma}$. More generally, we can represent the map $\mathrm{GroupSort}(\cdot ;3)$ as follows
\[
\begin{split}
\begin{bmatrix}
    x_1 \\ x_2 \\ x_3
\end{bmatrix} &\mapsto \begin{bmatrix}
    \max\{x_1,x_2\} \\ \min\{x_1,x_2\} \\ x_3
\end{bmatrix} \mapsto \begin{bmatrix}
    \max\{x_1,x_2,x_3\} \\ \min\{x_1,x_2\} \\ \min\{x_3,\max\{x_1,x_2\}\}
\end{bmatrix}\\
&\mapsto \begin{bmatrix}
    \max\{x_1,x_2,x_3\} \\ \max\{\min\{x_1,x_2\},\min\{x_3,\max\{x_1,x_2\}\} \\ \min\{\min\{x_1,x_2\},\min\{x_3,\max\{x_1,x_2\}\}\}
\end{bmatrix} \\
&= \begin{bmatrix} \max\{x_1,x_2,x_3\} \\  \max\{\min\{x_1,x_2\},\min\{x_3,\max\{x_1,x_2\}\} \\ \min\{x_1,x_2,x_3\}\end{bmatrix}=\mathrm{GroupSort}(x_{1:3};3),
\end{split}
\]
which are all maps that belong to $\mathcal{E}_{3,\sigma}$. $\mathrm{GroupSort}(\cdot,g)$ provides an output with the same scalar components as the input, just reordered. Thus, if applied to a function having all the components which are $1$-Lipschitz, the same property is preserved after its application. Combining these results, together with the derivations done for $\widetilde{\mathcal{G}}_{d,\sigma,h}(\mathcal{X},\mathbb{R})$, it is easy to derive the following results.
\begin{lemma}
Let $d\in\mathbb{N}$, $\mathcal{X}\subseteq\mathbb{R}^d$, and $\sigma=\mathrm{ReLU}$. All the functions in $\overline{\mathcal{G}}_{d,\sigma,h}(\mathcal{X},\mathbb{R})$ are $1$-Lipschitz.  
\end{lemma}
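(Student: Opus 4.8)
The plan is to mirror the argument for Lemma \ref{lem:tilde-G-1-Lip}, tracking the output of each layer through the block-structured set $\mathcal{F}_{d,m}$ defined in \eqref{eq:Fm}, and closing with Lemma \ref{lemma:constrainedProj}. Fix $k\in\{1,\dots,h\}$, set $m=(1,\dots,1,h-k)$ (that is, $k$ scalar blocks followed by one block of size $h-k$), and consider a generic network $v^\top\circ\Phi_{\theta_L}\circ A_{L-1}\circ\cdots\circ A_1\circ\Phi_{\theta_1}\circ Q$ in $\overline{\mathcal{G}}_{d,\sigma,h}(\mathcal{X},\mathbb{R})$. I would prove by induction on the layer index that every intermediate map belongs to $\mathcal{F}_{d,m}$, i.e.\ each of its $k+1$ blocks is $1$-Lipschitz.

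First I would check the base case. Since $Q\in\mathcal{R}_{d,m}$ has block-row matrix $\widehat{Q}\in\widetilde{\mathcal{R}}_{d,m}$, its first $k$ rows have $\ell^2$-norm at most one and its last block has spectral norm at most one; hence each block of $Q$ is a $1$-Lipschitz affine map and $Q\in\mathcal{F}_{d,m}$. The inductive step splits into the two layer types. For the linear maps, Lemma \ref{lemma:closedLinear} gives directly that $A_\ell\circ F\in\mathcal{F}_{d,m}$ whenever $F\in\mathcal{F}_{d,m}$ and $A_\ell\in\mathcal{L}_m$. For a residual map $\Phi_{\theta_\ell}\in\overline{\mathcal{E}}_{h-k,k,\sigma}$, the last block is processed by some $\widetilde{\Phi}_\theta\in\mathcal{E}_{h-k,\sigma}$, which is $1$-Lipschitz by Proposition \ref{prop:nonexp}, so composing it with a $1$-Lipschitz block keeps the block $1$-Lipschitz; the first $k$ scalar blocks are processed by $\mathrm{GroupSort}(\cdot\,;g)$.

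The crux, and the only new ingredient relative to Lemma \ref{lem:tilde-G-1-Lip}, is showing that $\mathrm{GroupSort}$ preserves the \emph{component-wise} $1$-Lipschitz property. Here one must resist the temptation to argue by composing a $1$-Lipschitz $\mathrm{GroupSort}$ map with the stacked block map, since stacking scalar $1$-Lipschitz functions need not yield a $1$-Lipschitz vector map in $\ell^2$. Instead I would use that each output coordinate of $\mathrm{GroupSort}$ is an order statistic within its group, admitting the representation $y_{(j)}=\max_{|S|=j}\min_{i\in S}y_i$ over the group indices. Because the entries $y_i$ feeding a group are each scalar $1$-Lipschitz (by the inductive hypothesis) and the scalar max and min of $1$-Lipschitz functions are again $1$-Lipschitz, each such order statistic is a max of mins of $1$-Lipschitz scalars and is therefore itself scalar $1$-Lipschitz. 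Consequently every output block of $\Phi_{\theta_\ell}$ is $1$-Lipschitz and $\Phi_{\theta_\ell}\circ F\in\mathcal{F}_{d,m}$. Alternatively, one could invoke the decomposition of $\mathrm{GroupSort}(\cdot\,;g)$ into a composition of $\mathrm{MaxMin}$-type maps from $\mathcal{E}$ exhibited in this appendix, each acting only on scalar blocks, and reuse the $\mathrm{MaxMin}$ step from the proof of Lemma \ref{lem:tilde-G-1-Lip}.

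Finally, after the last residual layer the map lies in $\mathcal{F}_{d,m}$, so Lemma \ref{lemma:constrainedProj}, together with $\|v\|_1\leq 1$, shows that $v^\top$ applied to it is a scalar $1$-Lipschitz function, completing the proof. The main obstacle is precisely the component-wise Lipschitz bookkeeping for $\mathrm{GroupSort}$ described above; everything else is a direct transcription of the $\widetilde{\mathcal{G}}_{d,\sigma,h}(\mathcal{X},\mathbb{R})$ argument.
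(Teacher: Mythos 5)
Your proposal is correct and follows essentially the same route as the paper, which proves this lemma only by remarking that one combines the derivations for $\widetilde{\mathcal{G}}_{d,\sigma,h}(\mathcal{X},\mathbb{R})$ (Lemma~\ref{lem:tilde-G-1-Lip}, via $\mathcal{F}_{d,m}$, Lemma~\ref{lemma:closedLinear}, and Lemma~\ref{lemma:constrainedProj}) with the observation that $\mathrm{GroupSort}$ outputs the same scalar components as its input, merely reordered, so component-wise $1$-Lipschitzness is preserved. Your order-statistic representation $y_{(j)}=\max_{|S|=j}\min_{i\in S}y_i$ is simply a rigorous rendering of that reordering remark (and matches the max-of-mins structure the paper exhibits when decomposing $\mathrm{GroupSort}(\cdot\,;3)$ into maps of $\mathcal{E}_{3,\sigma}$), so no genuinely different machinery is involved.
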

\begin{lemma}
Let $d\in\mathbb{N}$, $\mathcal{X}\subset\mathbb{R}^d$ a compact set, and $\sigma=\mathrm{ReLU}$. The set $\overline{\mathcal{G}}_{d,\sigma,h}(\mathcal{X},\mathbb{R})$ satisfies the universal approximation property for $\mathcal{C}_1(\mathcal{X},\mathbb{R})$ if $h=d+3$.
\end{lemma}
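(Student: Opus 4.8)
The plan is to derive this directly from Theorem~\ref{thm:gradFixedInner} together with the inclusion $\widetilde{\mathcal{G}}_{d,\sigma,h}(\mathcal{X},\mathbb{R})\subset\overline{\mathcal{G}}_{d,\sigma,h}(\mathcal{X},\mathbb{R})$ established just above the statement. First I would record the elementary monotonicity of the universal approximation property: if $\mathcal{A}\subset\mathcal{B}\subset\mathcal{C}_1(\mathcal{X},\mathbb{R})$ and $\mathcal{A}$ satisfies the universal approximation property for $\mathcal{C}_1(\mathcal{X},\mathbb{R})$, then so does $\mathcal{B}$. This is immediate from the definition: given $f\in\mathcal{C}_1(\mathcal{X},\mathbb{R})$ and $\varepsilon>0$, pick $g\in\mathcal{A}$ with $\max_{x\in\mathcal{X}}|f(x)-g(x)|<\varepsilon$; since $g\in\mathcal{B}$ as well, $\mathcal{B}$ approximates $f$ to the same accuracy.

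Second, I would verify the two membership facts that let me apply this principle with $\mathcal{A}=\widetilde{\mathcal{G}}_{d,\sigma,d+3}(\mathcal{X},\mathbb{R})$ and $\mathcal{B}=\overline{\mathcal{G}}_{d,\sigma,d+3}(\mathcal{X},\mathbb{R})$. The inclusion $\mathcal{A}\subset\mathcal{B}$ follows from $\widetilde{\mathcal{E}}_{h,\sigma}\subset\overline{\mathcal{E}}_{h,3,\sigma}$, obtained by taking group size $g=2$ in the $\mathrm{GroupSort}$ block and leaving the remaining coordinates governed by the same $\widetilde{\Phi}_\theta\in\mathcal{E}_{h,\sigma}$, which lifts to $\widetilde{\mathcal{G}}_{d,\sigma,h}(\mathcal{X},\mathbb{R})\subset\overline{\mathcal{G}}_{d,\sigma,h}(\mathcal{X},\mathbb{R})$ as already noted. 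The inclusion $\mathcal{B}\subset\mathcal{C}_1(\mathcal{X},\mathbb{R})$ is exactly the content of the preceding lemma, which guarantees that every element of $\overline{\mathcal{G}}_{d,\sigma,h}(\mathcal{X},\mathbb{R})$ is $1$-Lipschitz; this also ensures the universal approximation property is well-posed for $\mathcal{B}$ in the first place.

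Finally, I would invoke Theorem~\ref{thm:gradFixedInner}, which asserts that $\widetilde{\mathcal{G}}_{d,\sigma,d+3}(\mathcal{X},\mathbb{R})=\mathcal{A}$ satisfies the universal approximation property for $\mathcal{C}_1(\mathcal{X},\mathbb{R})$. Combining this with the monotonicity principle and the chain $\mathcal{A}\subset\mathcal{B}\subset\mathcal{C}_1(\mathcal{X},\mathbb{R})$ yields the claim for $h=d+3$. I do not expect a genuine obstacle: the substantive work — the constructive representation of piecewise affine $1$-Lipschitz functions at fixed width in Theorem~\ref{thm:pwlGtilde} and their density in $\mathcal{C}_1(\mathcal{X},\mathbb{R})$ from Lemma~\ref{lemma:densityPWL} — has already been done for $\widetilde{\mathcal{G}}$. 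The only point needing a little care is confirming that the more general $\mathrm{GroupSort}$ blocks of $\overline{\mathcal{G}}$ do not spoil the $1$-Lipschitz property, but this is precisely what the preceding lemma secures, since $\mathrm{GroupSort}(\cdot;g)$ merely reorders its scalar inputs and hence preserves the relevant norms.
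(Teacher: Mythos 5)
Your proposal is correct and follows exactly the route the paper intends: the paper derives this lemma by combining the inclusion $\widetilde{\mathcal{G}}_{d,\sigma,h}(\mathcal{X},\mathbb{R})\subset\overline{\mathcal{G}}_{d,\sigma,h}(\mathcal{X},\mathbb{R})$ (via $\widetilde{\mathcal{E}}_{h,\sigma}\subset\overline{\mathcal{E}}_{h,3,\sigma}$), the preceding lemma that all elements of $\overline{\mathcal{G}}_{d,\sigma,h}(\mathcal{X},\mathbb{R})$ are $1$-Lipschitz, and Theorem~\ref{thm:gradFixedInner}. Your explicit statement of the monotonicity of the universal approximation property under inclusion just makes rigorous what the paper leaves as ``easy to derive.''
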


\section{Extension to multivalued functions}\label{app:multivaluedExtension}
We now state and prove Lemma \ref{lemma:extensionG}.
\begin{lemma}
Let $c,d\in\mathbb{N}$, $\mathcal{X}\subset\mathbb{R}^d$ be compact, and $\sigma=\mathrm{ReLU}$. Define the set
\begin{align*}
\mathcal{G}_{c, d, \sigma}(\mathcal{X}, \mathbb{R}^c)
:= \Bigl\{&P \circ \Phi_{\theta_{L}} \circ \cdots \circ \Phi_{\theta_{1}} \circ Q :\mathcal{X}\to\mathbb{R}^c\,\Bigm|\,Q(x)=\widehat{Q}x+\widehat{q},\,\widehat{Q}\in \mathbb{R}^{h \times d},\\
&\widehat{q}\in\mathbb{R}^h,\,P\in\mathbb{R}^{c\times h},\,\|P\|_{2,\infty}= 1,\,\Phi_{\theta_{\ell}}\in\mathcal{E}_{h,\sigma},\,L,h \in \mathbb{N}  
\Bigl\}
\end{align*}
Then, for any $f\in\mathcal{C}_1(\mathcal{X},\mathbb{R}^c)$ and any $\varepsilon>0$, there exists $g\in\mathcal{G}_{c,d,\sigma}(\mathcal{X},\mathbb{R}^c)$ such that $\max_{x\in\mathcal{X}}\|f(x)-g(x)\|_2\leq \varepsilon$.
\end{lemma}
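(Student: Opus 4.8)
The plan is to reduce the vector-valued problem to $c$ scalar approximation problems and then recombine the resulting scalar networks into a single network of the prescribed block form. First I would write $f=(f_1,\dots,f_c)$ with each $f_i:\mathcal{X}\to\mathbb{R}$. Since $|f_i(x)-f_i(y)|\le\|f(x)-f(y)\|_2\le\|x-y\|_2$, every component satisfies $f_i\in\mathcal{C}_1(\mathcal{X},\mathbb{R})$. Applying Theorem~\ref{thm:gradFreeInner} componentwise with tolerance $\varepsilon/\sqrt{c}$ yields scalar networks
\[
g_i=v_i^\top\circ\Phi_{\theta^{(i)}_{L_i}}\circ\cdots\circ\Phi_{\theta^{(i)}_{1}}\circ Q_i\in\mathcal{G}_{d,\sigma}(\mathcal{X},\mathbb{R}),\quad \|v_i\|_2=1,
\]
with $\max_{x\in\mathcal{X}}|f_i(x)-g_i(x)|\le\varepsilon/\sqrt{c}$.

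Next I would assemble these into one network. Because the identity map belongs to $\mathcal{E}_{h,\sigma}$ (Lemma~\ref{lemma:linearGradient} with $M=I_h$), I can pad each $g_i$ with identity layers so that all share a common depth $L=\max_i L_i$. Following the merging argument in the proof of Lemma~\ref{lemma:secondToLastStep}, at each layer $\ell$ I combine the $c$ residual maps into a single block-diagonal residual map on $\mathbb{R}^{H}$ with $H=h_1+\cdots+h_c$. The one point requiring care is that a single $\Phi_{\theta_\ell}\in\mathcal{E}_{H,\sigma}$ carries a single step size $\tau_\ell$, while the blocks generally use distinct steps $\tau_{i,\ell}$; setting $\tau_\ell=\max_i\tau_{i,\ell}$ and rescaling the $i$-th block weight by $\sqrt{\tau_{i,\ell}/\tau_\ell}\in(0,1]$ leaves each block map unchanged by positive homogeneity of $\mathrm{ReLU}$ while preserving $\|\cdot\|_2\le1$. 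Since a block-diagonal matrix has spectral norm equal to the largest block norm, the merged weight has norm at most one, so $\Phi_{\theta_\ell}\in\mathcal{E}_{H,\sigma}$. I would then stack the lifting maps into a single affine $Q=[Q_1;\dots;Q_c]$.

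For the read-out I would take the block-diagonal matrix $P\in\mathbb{R}^{c\times H}$ whose $i$-th row equals $v_i^\top$ on the $i$-th block and vanishes elsewhere; as the rows are supported on disjoint blocks, $\|P\|_{2,\infty}=\max_i\|v_i\|_2=1$. By construction the $i$-th output of $g=P\circ\Phi_{\theta_L}\circ\cdots\circ\Phi_{\theta_1}\circ Q$ is exactly $g_i(x)$, so $g\in\mathcal{G}_{c,d,\sigma}(\mathcal{X},\mathbb{R}^c)$. Combining the componentwise bounds gives
\[
\max_{x\in\mathcal{X}}\|f(x)-g(x)\|_2=\max_{x\in\mathcal{X}}\Bigl(\sum_{i=1}^c|f_i(x)-g_i(x)|^2\Bigr)^{1/2}\le\Bigl(c\cdot\tfrac{\varepsilon^2}{c}\Bigr)^{1/2}=\varepsilon,
\]
as desired. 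I expect the step-size equalisation in the block-diagonal merge to be the only delicate point; everything else is bookkeeping along the template of Lemma~\ref{lemma:secondToLastStep}.
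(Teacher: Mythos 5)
Your proposal is correct and follows essentially the same route as the paper's proof: componentwise approximation at tolerance $\varepsilon/\sqrt{c}$, identity-padding to a common depth, block-diagonal merging of the residual layers and lifting maps, a block-diagonal read-out $P$ with $\|P\|_{2,\infty}=1$, and the final $\ell^2$ aggregation of the errors. The only difference is cosmetic: you spell out the step-size equalisation (rescaling weights and biases by $\sqrt{\tau_{i,\ell}/\tau_\ell}$ via positive homogeneity), which the paper delegates to the analogous argument already given in Lemma~\ref{lemma:secondToLastStep}.
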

\begin{proof}
Let $f\in\mathcal{C}_1(\mathcal{X},\mathbb{R}^c)$, and $\varepsilon>0$. Call $f_i:\mathcal{X}\to\mathbb{R}$, $i=1,...,c$, the $i$-th component of $f$, which belongs to $\mathcal{C}_1(\mathcal{X},\mathbb{R})$. Since ${\mathcal{G}}_{d,\sigma}(\mathcal{X},\mathbb{R})\subset{\mathcal{G}}_{1,d,\sigma}(\mathcal{X},\mathbb{R}^1)$ satisfies the universal approximation property for $\mathcal{C}_1(\mathcal{X},\mathbb{R})$, for every $i=1,...,c$, there exist $g_1,...,g_c\in{\mathcal{G}}_{1,d,\sigma}(\mathcal{X},\mathbb{R}^1)$ such that
\[
\max_{x\in\mathcal{X}}|f_i(x)-g_i(x)|\leq \frac{\varepsilon}{\sqrt{c}},\,\,i=1,...,c.
\]
The residual maps can represent the identity, and we can thus assume that the number of layers of $g_1,...,g_c$ coincide and are equal to $L$. Call $h_i\in\mathbb{N}$, $Q_i:\mathbb{R}^d\to\mathbb{R}^{h_i}$, $\Phi_{\theta_{i,\ell}}\in\mathcal{E}_{h_i,\sigma}$ with $\ell=1,...,L$, and $v_i\in\mathbb{R}^{h_i}$, the widths, affine lifting layers, residual layers, and linear projection layers of the $c$ scalar-valued networks, respectively. Following similar arguments as for the proof of Theorem \ref{thm:gradFreeInner}, it is easy to see that 
\[
x\mapsto g(x) := \begin{bmatrix}
    g_1(x) \\ \vdots \\ g_c(x)
\end{bmatrix}
\]
belongs to ${\mathcal{G}}_{c,d,\sigma}(\mathcal{X},\mathbb{R}^c)$. Call $h=h_1+...+h_c$, and let $Q:\mathbb{R}^d\to\mathbb{R}^h$ be characterised as
\[
Q(x) = \begin{bmatrix}
    Q_1(x) \\ \vdots \\ Q_c(x)
\end{bmatrix}\in\mathbb{R}^h.
\]
Denote with $\Phi_{\theta_\ell}\in\mathcal{E}_{h,\sigma}$, $\ell=1,...,L$, the maps
\[
\Phi_{\theta_\ell}(u) = \begin{bmatrix}
    \Phi_{\theta_{1,\ell}}(u_{1:h_1}) \\ \Phi_{\theta_{2,\ell}}(u_{h_1+1:h_1+h_2}) \\
    \vdots \\
    \Phi_{\theta_{c,\ell}}(u_{h-h_c+1:h})
\end{bmatrix},
\]
where $u_{i:j}\in\mathbb{R}^{j-i+1}$ denotes the entries from position $i$ to position $j$ of $u\in\mathbb{R}^h$. Finally, denote with $P\in\mathbb{R}^{c\times h}$ the projection matrix defined as
\[
P = \begin{bmatrix} 
v_1^\top & 0_{h_2}^\top & \dots & 0_{h_c}^\top \\ 
0_{h_1}^\top & v_2^\top & \dots & 0_{h_c}^\top \\
\vdots & \ddots & \ddots & \vdots \\
0_{h_1}^\top & 0_{h_2}^\top & \dots & v_c^\top 
\end{bmatrix}.
\]
Since $\|v_i\|_2=1$ for every $i=1,...,c$, we have $\|P\|_{2,\infty}=\max_{i=1,...,c}\|e_i^\top P\|_2 = \max_{i=1,...,c}\|v_i\|_2=1$. Therefore, since
\[
g(x) = P \circ \Phi_{\theta_L}\circ ... \circ \Phi_{\theta_1}\circ Q(x),
\]
we conclude that $g\in\mathcal{G}_{c,d,\sigma}(\mathcal{X},\mathbb{R}^c)$ as desired. Furthermore, we see that for any given $x\in\mathcal{X}$ one has
\[
\|f(x)-g(x)\|_2 = \sqrt{\sum_{i=1}^c |f_i(x)-g_i(x)|^2}\leq \sqrt{c\varepsilon^2/c} = \varepsilon
\]
as desired. Thus, for any $f\in\mathcal{C}_1(\mathcal{X},\mathbb{R}^c)$ and any $\varepsilon>0$, there exists $g\in\mathcal{G}_{c,d,\sigma}(\mathcal{X},\mathbb{R}^c)$ such that 
\[
\max_{x\in\mathcal{X}}\|f(x)-g(x)\|_2\leq \varepsilon.
\]
\end{proof}

\section{Additional comments}

\subsection{Further motivation for $1$-Lipschitz ResNets}
\label{app:motivation}

We provide further motivation for studying $1$-Lipschitz ResNets. 

\subsubsection{Why study $1$-Lipschitz networks?}
Regarding the importance of $1$-Lipschitz networks, we have already cited some works relying on them in the related work section. We expand on how essential the $1$-Lipschitz constraint is in those contexts. To do so, we also more explicitly describe the following two problems where $1$-Lipschitz constraints are desirable:

\begin{itemize}
    \item[1.] Improving the network robustness to adversarial attacks, see for example \cite{eliasof2024resilient,sherry2024designing,prach20241};
    \item[2.] Deploying a network trained as an image denoiser in a Plug-and-Play algorithm for image reconstruction in inverse problems, for example, for deblurring problems, see \cite{sherry2024designing,cheng2024you,hertrich2021convolutional}.
\end{itemize}

In general, in all those situations where one is interested in learning maps that are then iteratively applied, aiming to converge to a fixed point, it is possible to guarantee the convergence of the iterative scheme only under some Lipschitz constraints. Similarly, in those problems where one needs to parametrise the lattice of $1$-Lipschitz functions, such as to compute the Wasserstein 1-distance, it is also convenient to rely on these constrained architectures.

\paragraph{Improving the robustness to adversarial attacks}
Training neural networks to solve classification problems is a very standard methodology. Still, it is often the case that networks trained to be accurate classifiers, are very sensitive to perturbations in the inputs. Some of these perturbations, called adversarial, can be constructed with the sole purpose of fooling the network into misclassifying them, despite being quantifiably close to data points drawn from the same probability distribution as the training and test sets. A way to improve the network's robustness to these kinds of perturbations is by controlling its Lipschitz constant. This approach was considered in several papers before ours. A theoretical justification for this methodology can be found by relating the notion of margin to that of Lipschitz regularity. Consider a network $\mathcal{N}_\theta:\mathbb{R}^d\to\mathbb{R}^c$ returning probability vectors in $\mathbb{R}^c$, where $c$ is the number of classes in which we want to partition our dataset. The classification margin at a point $x$ is defined as
\[
\mathcal{M}_{\mathcal{N}_{\theta}}(x) := \mathcal{N}_{\theta}(x)^{\top}{e}_{\ell(x)} - \max_{\substack{j\neq \ell(x) \\ j\in \{1,...,c\}}} \mathcal{N}_{\theta}(x)^{\top}{e}_j,
\]
where $\ell(x)$ represents the index of the class to which $x$ belongs. If the margin is positive, then the network correctly classifies $x$. Furthermore, the higher the margin, the more certain the network is of this prediction. One can also prove, see \cite{tsuzuku2018lipschitz}, that
\[
\mathcal{M}_{\mathcal{N}_{\theta}}(x)> \sqrt{2}\mathrm{Lip}(\mathcal{N}_{\theta})\varepsilon \implies \mathcal{M}_{\mathcal{N}_{\theta}}(x+{\eta})>0,\,\,\forall \eta\in\mathbb{R}^d,\, \|{\eta}\|_2\leq \varepsilon.
\]
This condition means that if we have a good trade-off between classification margin and Lipschitz constant of the network, we can guarantee the correct classification of perturbed inputs. Such an analysis implies that if we force the network to have a small Lipschitz constant, such as $\mathrm{Lip}(\mathcal{N}_\theta)\leq 1$ as in our paper, and train the network with a loss function aiming to maximise the margin such as the multi-class classification hinge loss, we can get improved robustness.

\paragraph{Plug-and-Play algorithm with guaranteed convergence properties}

The Banach fixed-point Theorem guarantees that for a map $T:\mathbb{R}^d\to\mathbb{R}^d$ which is strictly $1$-Lipschitz, i.e., $\|T(y)-T(x)\|_2<\|y-x\|_2$ for every $x,y\in\mathbb{R}^d$, there exists a unique fixed point and the iteration $x_{k+1}=T(x_k)$ will converge to it whatever $x_0\in\mathbb{R}^d$ is. It is thus intuitive that once we are interested in (partially) modelling iterative schemes through neural networks, it is fundamental to rely on architectures constrained as in our paper to get a reliable method. This is the idea behind the Plug-and-Play algorithm \cite{sherry2024designing,ryu2019plug,hertrich2021convolutional} used in inverse problems. This method is inspired by the forward-backwards splitting proximal gradient descent approach used to solve 
\[
\min_{x\in\mathbb{R}^d} f(x)+g(x),\,\,f:\mathbb{R}^d\to\mathbb{R},\,g:\mathbb{R}^d\to\mathbb{R}\cup \{+ \infty\}.
\]
It is common to have $f$ representing the data-fidelity term in the inverse problem, such as $f(x)=\|Kx-y\|_2^2/2$, whereas $g$ represents a regularisation term, such as $g(x)=\|x\|_1$. Typically, one can not ask for $g$ to be continuously differentiable, and hence a methodology to efficiently solve this minimisation problem is the proximal algorithm
\begin{equation}\label{eq:pnp}
x_{k+1} = \mathrm{prox}_{g,\tau}\left(x_k - \tau \nabla f(x_k)\right),\,\,\mathrm{prox}_{g,\tau}(x) = \argmin_{y\in\mathbb{R}^d}\left(\frac{1}{2\tau}\|y-x\|_2^2 + g(y)\right).
\end{equation}
There are two main limitations with this procedure for a general inverse problem:
\begin{enumerate}
    \item It is challenging to design a good regulariser $g$,
    \item The proximal operator $\mathrm{prox}_{g,\tau}$ of a generic regulariser $g$ does not admit a closed form.
\end{enumerate}
The Plug-and-Play algorithm addresses both limitations above by replacing $\mathrm{prox}_{g,\tau}$ with a neural network $\mathcal{N}_\theta:\mathbb{R}^d\to\mathbb{R}^d$ trained to denoise inputs. In order to guarantee the convergence of this hybrid scheme
\begin{equation}\label{eq:pnpAlg}
x_{k+1} = \mathcal{N}_\theta(x_k - \tau \nabla f(x_k)),
\end{equation}
we need to properly constrain $\mathcal{N}_\theta$. If $f$ is strongly convex, $L$-smooth, and continuously differentiable, then, taken $\tau \in (0,2/L)$, the method in \eqref{eq:pnpAlg} converges to a fixed point whenever $\mathcal{N}_\theta$ is $1$-Lipschitz. This is another reason why investigating the architecture studied in this paper is fundamental. Weaker convergence guarantees can also be obtained when $f$ is only convex, but more needs to be asked from $\mathcal{N}_\theta$, see, e.g., \cite{sherry2024designing}.

\paragraph{Why study ResNets?} {There are three main reasons why we focus on ResNets: 
\begin{itemize}
    \item[1.] $1$-Lipschitz constrained ResNets have been used extensively in several applications;
    \item[2.]  A theory similar to ours for feedforward networks has already been developed;
    \item[3.] GNNs, Transformers, and other architectures also rely on residual connections which are hence an essential piece to understand.
\end{itemize}

More explicitly, the gradient steps studied in this paper have already been considered to design GNNs (see, for example, \cite{eliasof2024resilient,eliasof2021pde,chen2022optimization}). To the best of our knowledge, these gradient steps have not been used in Transformers. Still, there has already been interest in studying Lipschitz-continuous transformers (see \cite{kim2021lipschitz,castin2023smooth,qi2023lipsformer}), and our theoretical analysis could at least partially be applied to these more complex architectures.

\paragraph{Theoretical motivation.}
While unconstrained ResNets can approximate an arbitrary continuous function, it is not necessarily the case that the same holds for a constrained one. For example, in \cite[Proposition 3.3]{neumayer2023approximation}, the authors prove that feedforward networks with unit-norm weights, which are $1$-Lipschitz, are not dense in the set of $1$-Lipschitz functions. This holds despite the unconstrained models being universal approximators of continuous functions. This highlights the necessity of developing an approximation theory which is specific to $1$-Lipschitz architectures such as ResNets.

\subsection{Dependence of the network size on the input dimension}
\label{app:dependence-d}
We now comment on how the network width and depth grow with the input dimension based on our theoretical analysis.
\paragraph{Depth.} In Appendix~\ref{app:alt:thm:gradFreeInner} we derive that to represent the $\max$ and $\min$ functions of affine pieces, we need a network whose depth grows linearly with the number of pieces. This is the primary operation required to understand how the network depth evolves based on our theory. Still, there are two essential comments to complement this discussion:
\begin{itemize}
    \item[1.]First, our proof is designed to be easy to follow and does not aim to present the most efficient network solving the task. It is in fact evident that a tighter bound for the networks with unbounded width could be to have $L$ belonging to $\mathcal{O}(\log_2(d))$, since $\max\{x_1,...,x_d\}$ can be written by computing in parallel $d/2$ pairwise maxima and iterating the process, leading to $k$ steps where $k\in\mathbb{N}$ satisfies $(d/2^k)\approx 1$, i.e. $k\approx \log_2(d)$. This does not immediately translate to the finite width case, but one could increase the fixed width and improve the efficiency in terms of network depth. 
    \item[2.] Second, our proof is constructive and provides an upper bound on the size of the network one needs. When training these models, more efficient weight choices could be made, and a more effective growth factor may emerge.
\end{itemize}

\paragraph{Width.} For the networks in Theorem \ref{thm:gradFreeInner}, the network width grows with the number of affine pieces needed to assemble the function. This is function-dependent, and it can not be bounded a priori solely based on the input dimension. Still, we also provide an approximation theorem with fixed width, which does not suffer from this unbounded growth of the width.

\paragraph{Approximation rates.} One of our proof strategies involves the representation of an arbitrary $1$-Lipschitz piecewise affine function with the considered networks. We thus inherit the same approximation rates as this class of functions.

\subsection{Further discussion of other activation functions}
\label{app:activation}

Our theoretical analysis focuses on the $\mathrm{ReLU}$ activation function and the techniques we work with heavily rely on the properties of such a function. The focus on a specific activation, and in particular on $\mathrm{ReLU}$, is quite common in the mathematics of deep learning, as can be seen, for example, in \cite{yarotsky2018optimal,opschoor2020deep,hanin2017approximating}. 

For any other continuous activation $\sigma:\mathbb{R}\to\mathbb{R}$ which is not a polynomial, it is relatively immediate to recover the density of the set of networks
\[
\mathcal{G}_d^\sigma(\mathcal{X},\mathbb{R}):=\{\mathcal{N}_\theta\in \mathcal{G}_{d,f}(\mathcal{X},\mathbb{R}):\,\,f:\mathbb{R}\to\mathbb{R},\,f(s) = u^\top_1 \sigma(u_2 s + u_3),\,u_1,u_2,u_3\in\mathbb{R}^h,\,h\in\mathbb{N}\}
\]
in $\mathcal{C}_1(\mathcal{X},\mathbb{R})$. To prove this, one can approximate the elements in $\mathcal{G}_{d,\mathrm{ReLU}}(\mathcal{X},\mathbb{R})$ to an arbitrary accuracy with elements in $\mathcal{G}_d^\sigma(\mathcal{X},\mathbb{R})$. Such an approximation can be done because the considered residual layers are gradient steps for potential energies of the form $g_\ell(x)=1_{h_{\ell}}^\top \mathrm{ReLU}^2(W_\ell x + b_\ell)/2$ and, on compact sets, we can approximate $\mathrm{ReLU}^2/2$ and $\mathrm{ReLU}$ with a single hidden layer network and its derivative, see \cite[Theorem 4.1]{pinkus1999approximation}. Such an analysis would allow us to recover an approximation theory for networks whose residual layers are of the form
\[
x\mapsto x - \tau_\ell W_\ell^\top f(W_\ell x + b_\ell).
\]
In this case, though, the imposition of the $1$-Lipschitz constraints in practice becomes even more challenging, given the lack of knowledge of the properties of $f$, e.g., if it is non-decreasing. A particularly simple example is $\sigma(x)=\mathrm{LeakyReLU}_{\alpha}(x)=\max\{\alpha x, x\}$, $\alpha \in (0,1)$ where we do not need approximations since
\[
\mathrm{ReLU}(x) = \frac{1}{1-\alpha^2}\left(\mathrm{LeakyReLU}_{\alpha}(x) + \alpha \mathrm{LeakyReLU}_{\alpha}(-x)\right).
\]

\section{Numerical experiments and implementability details}\label{app:experiments}
This paper focuses on the theoretical analysis of $1$-Lipschitz ResNets. We have commented in Section \ref{sec:Discussion} on the implementability of the considered models, and we now demonstrate that they do not have any intrinsic issues with trainability.

To do so, we consider two classification problems. First, we train models to classify the two-moon dataset with additive Gaussian noise of standard deviation $\sigma=0.1$. Then, we train networks to classify the MNIST dataset. We inspect how the performance varies as we consider models coming from the two families of architectures presented in this paper, and as we vary the network depth and width.

For the two-moon dataset, we generate 4,000 points, $20\%$ of which are set as training points. For MNIST, we adopt the standard training/test split and preprocess the inputs by normalising them. The batch size we consider for both tasks is $256$.  We optimise the weights for both tasks using Adam and a cosine annealing learning rate scheduler. The experiments are run on a Quadro RTX 6000 GPU.

The network weights can be initialised to satisfy the dynamical isometry property \cite{xiao2018dynamical, massucco2025neural}, to avoid possible trainability problems when considering deep networks. More explicitly, one can set the intermediate affine layers $A_\ell$ so they realise an isometry, the time steps $\tau_\ell=2$, and the weights $W_\ell$ in the residual layers to random orthogonal matrices. Such an initialisation strategy allows us to have layers which, at initialisation, are maps with a Jacobian matrix which is orthogonal almost everywhere. The dynamical isometry theory suggests that initialising the network weights so the input-to-output Jacobian is orthogonal almost everywhere is sufficient to train deep neural networks. The reason why the setup written above allows to get the Jacobian orthogonality of the residual layers is that
\[
\begin{split}
\phi_{\theta_\ell}(x)' &= I_h - 2 W_\ell^\top D(x) W_\ell,\,\,D(x) = \mathrm{diag}(\mathrm{ReLU}'(W_\ell x + w_\ell)) \\
(\phi_{\theta_\ell}'(x))^\top \phi_{\theta_\ell}'(x) &= I_h + 4(W_\ell^\top D^2(x)W_\ell - W_\ell^\top D(x) W_\ell) = I_h,
\end{split}
\] 
and the same holds for $(\phi_{\theta_\ell}'(x)')^\top\phi_{\theta_\ell}'(x)=I_h$, since $D(x)^2=D(x)$ because of the zero and one slopes of $\mathrm{ReLU}$. 

All the layers are constrained as discussed in the main body of the paper, but the first one is left unconstrained to have fair comparisons between the two types of models, i.e., those stemming from Theorem \ref{thm:gradFreeInner} and Theorem \ref{thm:gradFixedInner}. We consider models with a varying number of layers and present the test accuracy as this number changes, along with the time required to enforce the constraints on the network weights during training, computed as an average over the number of epochs using the \texttt{time} library in Python. The weight constraints are enforced in a projected gradient descent fashion, suitably normalising the weights after a gradient step. The code is written in PyTorch and it is available at the repository \url{https://github.com/davidemurari/1LipschitzResNets}.

To compute the spectral norms of the weights and enforce the necessary constraints, we use the power method as described in \cite{miyato2018spectral,sherry2024designing}. To ensure an efficient implementation, we perform several power iterations at initialisation to obtain an accurate approximation of the leading singular vector. We then perform only one iteration per gradient step, as the weights tend to change very mildly during training, and these pre-computed singular vectors provide accurate initial guesses. The cost of this normalisation step is negligible compared with the overall cost of the forward and backwards passes.

The models deriving from Theorem \ref{thm:gradFreeInner} and Theorem \ref{thm:gradFixedInner} consistently train regardless of the network width and depth. The cost of constraining them is larger for those with affine layers, and it grows faster as the depth increases than when the width does. Still, the implementation could be optimised, and this step could be made faster. In Tables  \ref{tab:fixed-width-varying-depth} and \ref{tab:fixed-depth-varying-width}, we denote with Theorem \ref{thm:gradFreeInner} the architecture without affine layers, and with Theorem \ref{thm:gradFixedInner}, with affine layers and further constraints on the gradient steps.

\begin{table}[h!]
\centering
\renewcommand{\arraystretch}{1.15}
\setlength{\tabcolsep}{8pt}
\begin{tabular}{l cc cc}
\toprule
\multirow{2}{*}{\textbf{Number layers}} &
\multicolumn{2}{c}{\textbf{Theorem \ref{thm:gradFreeInner}}} &
\multicolumn{2}{c}{\textbf{Theorem \ref{thm:gradFixedInner}}} \\
\cmidrule(lr){2-3}\cmidrule(lr){4-5}
& $\substack{\textbf{Test} \\ \textbf{Accuracy}}$ & $\substack{\textbf{Normalisation} \\ \textbf{Time} \text{ (seconds)}}$
& $\substack{\textbf{Test} \\ \textbf{Accuracy}}$ & $\substack{\textbf{Normalisation} \\ \textbf{Time} \text{ (seconds)}}$ \\
\midrule
$L=2$  & $99.75\%$ & $0.0079$ & $88.25\%$ & $0.053$ \\
$L=4$  & $99.88\%$ & $0.0130$ & $99.88\%$ & $0.097$ \\
$L=8$  & $100.00\%$& $0.0286$ & $99.88\%$ & $0.185$ \\
$L=16$ & $100.00\%$& $0.0585$ & $100.00\%$& $0.273$ \\
$L=32$ & $99.88\%$ & $0.1091$ & $100.00\%$& $0.326$ \\
$L=64$ & $100.00\%$& $0.1628$ & $100.00\%$& $0.685$ \\
\bottomrule
\end{tabular}
\caption{Networks with varying depth and fixed width. Numerical experiments for the two–moon dataset with additive Gaussian noise of standard deviation $\sigma=0.1$. The hidden dimension is set to $d+3=5$, where $d=2$ is the input dimension of the data points.}
\label{tab:fixed-width-varying-depth}
\end{table}

\begin{table}[h!]
\centering
\renewcommand{\arraystretch}{1.15}
\setlength{\tabcolsep}{8pt}
\begin{tabular}{l cc cc}
\toprule
\textbf{Network width} &
\multicolumn{2}{c}{\textbf{Theorem \ref{thm:gradFreeInner}}} &
\multicolumn{2}{c}{\textbf{Theorem \ref{thm:gradFixedInner}}} \\
\cmidrule(lr){2-3}\cmidrule(lr){4-5}
& $\substack{\textbf{Test} \\ \textbf{Accuracy}}$ & $\substack{\textbf{Normalisation} \\ \textbf{Time} \text{ (seconds)}}$
& $\substack{\textbf{Test} \\ \textbf{Accuracy}}$ & $\substack{\textbf{Normalisation} \\ \textbf{Time} \text{ (seconds)}}$ \\
\midrule
$h=10$ & $99.88\%$ & $0.0372$ & $100.00\%$ & $0.1628$ \\
$h=20$ & $99.88\%$ & $0.0379$ & $99.88\%$  & $0.1744$ \\
$h=30$ & $99.88\%$ & $0.0381$ & $99.88\%$  & $0.1829$ \\
$h=40$ & $100.00\%$& $0.0378$ & $99.63\%$  & $0.2993$ \\
\bottomrule
\end{tabular}
\caption{Networks with fixed depth and varying width. Numerical experiments for the two–moon dataset with additive Gaussian noise of standard deviation $\sigma=0.1$. The number of layers is fixed to $L=10$ while the hidden dimension $h$ varies.}
\label{tab:fixed-depth-varying-width}
\end{table}

The experiments for MNIST in Table \ref{tab:mnist} are performed only for the network newly proposed in Theorem \ref{thm:gradFixedInner}. This is because for the one in Theorem \ref{thm:gradFreeInner} there have already been more extensive experimental studies in the past, e.g., \cite{sherry2024designing,prach20241,meunier2022dynamical}. We remark that this task does not entirely represent our theoretical results, since we focus on approximating scalar-valued functions. In contrast, in this case, we have outputs in $\mathbb{R}^{10}$, given that $10$ is the number of classes in the MNIST dataset. For this reason, we do not constrain the last affine layer either. The value of these experiments lies in strengthening the claims about the trainability of these newly proposed models, which, despite the addition of interleaved affine layers, do not suffer from practical problems.

\begin{table}[h!]
\centering
\renewcommand{\arraystretch}{1.15}
\setlength{\tabcolsep}{10pt}
\begin{tabular}{lccc}
\toprule
& $L=5$ & ${L=10}$ & ${L=20}$ \\
\midrule
${h=50}$  & $97.85\%$ & $97.67\%$ & $97.82\%$ \\
${h=100}$ & $97.94\%$ & $97.70\%$ & $97.58\%$ \\
${h=200}$ & $97.68\%$ & $97.77\%$ & $97.89\%$ \\
\bottomrule
\end{tabular}
\caption{Results on MNIST with the network in Theorem~\ref{thm:gradFixedInner}: test accuracy for different widths $h$ and depths $L$.}
\label{tab:mnist}
\end{table}

\end{document}